\newcommand{\rhomax}{\ensuremath{U_\rho}}
\newcommand{\wmax}{\ensuremath{U_s}}
\newcommand{\is}{\ensuremath{\hat{v}_\text{IS}}}
\newcommand{\pdis}{\ensuremath{\hat{v}_\text{PDIS}}}
\newcommand{\sis}{\ensuremath{\hat{v}_\text{SIS}}}
\newcommand{\var}{\text{Var}}
\newcommand{\cov}{\text{Cov}}
\newcommand{\dsda}{\text{d}s\text{d}a}
\newcommand{\TV}{d_\text{TV}}
\newtheorem{definition}{Definition}
\newtheorem{theorem}{Theorem}
\newtheorem{lemma}{Lemma}
\newtheorem{corollary}{Corollary}
\newtheorem{assumption}{Assumption}
\newcommand{\wt}{w_t}
\newcommand{\siswithwt}{\ensuremath{\hat{v}_\text{ASIS}}}
\newcommand{\mseofwt}{\epsilon_{w}}
\icmltitlerunning{Understanding the Curse of Horizon in Off-Policy Evaluation via Conditional Importance Sampling}
\begin{document}

\twocolumn[
\icmltitle{Understanding the Curse of Horizon in Off-Policy Evaluation via Conditional Importance Sampling}

% It is OKAY to include author information, even for blind
% submissions: the style file will automatically remove it for you
% unless you've provided the [accepted] option to the icml2020
% package.

% List of affiliations: The first argument should be a (short)
% identifier you will use later to specify author affiliations
% Academic affiliations should list Department, University, City, Region, Country
% Industry affiliations should list Company, City, Region, Country

% You can specify symbols, otherwise they are numbered in order.
% Ideally, you should not use this facility. Affiliations will be numbered
% in order of appearance and this is the preferred way.
\icmlsetsymbol{equal}{*}

\begin{icmlauthorlist}
\icmlauthor{Yao Liu}{s}
\icmlauthor{Pierre-Luc Bacon}{m}
\icmlauthor{Emma Brunskill}{s}
\end{icmlauthorlist}

\icmlaffiliation{s}{Department of Computer Science, Stanford University}
\icmlaffiliation{m}{Mila - University
of Montreal. This work is finished when Pierre-Luc Bacon was a post-doc at Stanford.}

\icmlcorrespondingauthor{Yao Liu}{yaoliu@stanford.edu}

% You may provide any keywords that you
% find helpful for describing your paper; these are used to populate
% the "keywords" metadata in the PDF but will not be shown in the document
\icmlkeywords{Machine Learning, ICML}

\vskip 0.3in
]

% this must go after the closing bracket ] following \twocolumn[ ...

% This command actually creates the footnote in the first column
% listing the affiliations and the copyright notice.
% The command takes one argument, which is text to display at the start of the footnote.
% The \icmlEqualContribution command is standard text for equal contribution.
% Remove it (just {}) if you do not need this facility.

%\printAffiliationsAndNotice{}  % leave blank if no need to mention equal contribution
\printAffiliationsAndNotice{} % otherwise use the standard text.

\begin{abstract}
Off-policy policy estimators that use importance sampling (IS) can suffer from high variance in long-horizon domains, and there has been particular excitement over new IS methods that leverage the structure of Markov decision processes. We analyze the variance of the most popular approaches through the viewpoint of conditional Monte Carlo. Surprisingly, we find that in finite horizon MDPs there is no strict variance reduction of per-decision importance sampling or stationary importance sampling, comparing with vanilla importance sampling. We then provide sufficient conditions under which the per-decision or stationary estimators will provably reduce the variance over importance sampling with finite horizons. For the asymptotic (in terms of horizon $T$) case, we develop upper and lower bounds on the variance of those estimators which yields sufficient conditions under which there exists an exponential v.s. polynomial gap between the variance of importance sampling and that of the per-decision or stationary estimators. These results help advance our understanding of if and when new types of IS estimators will improve the accuracy of off-policy estimation.
\end{abstract}

\section{Introduction}
Off-policy \citep{SuttonBarto2018} policy evaluation is the problem of estimating the expected return of a given \textit{target} policy from the distribution of samples induced by a different policy. Due in part to the growing sources of data about past sequences of decisions and their outcomes -- from marketing to energy management to healthcare -- there is increasing interest in developing accurate and efficient algorithms for off-policy policy evaluation. 

For Markov Decision Processes, this problem was addressed \citep{Precup2000,peshkin2002learning} early on by importance sampling (IS)\citep{Rubinstein1981}, a method prone to large variance due to rare events \citep{Glynn1994,LEcuyer2009}. The \textit{per-decision} importance sampling estimator of \cite{Precup2000} tries to mitigate this problem by leveraging the temporal structure -- earlier rewards cannot depend on later decisions -- of the domain. 

While neither importance sampling (IS) nor per-decision IS (PDIS) assumes the underlying domain is Markov, more recently, a new class of estimators \citep{Hallak2017,Liu2018,Gelada2019} has been proposed that leverages the Markovian structure. In particular, these approaches propose performing importance sampling over the stationary state-action distributions induced by the corresponding Markov chain for a particular policy. By avoiding the explicit accumulation of likelihood ratios along the trajectories, it is hypothesized that such ratios of stationary distributions could substantially reduce the variance of the resulting estimator, thereby overcoming the ``curse of horizon'' \citep{Liu2018} plaguing off-policy evaluation. The recent flurry of empirical results shows significant performance improvements over the alternative methods on a variety of simulation domains. Yet so far there has not been a formal analysis of the accuracy of IS, PDIS, and stationary state--action IS which will strengthen our understanding of their properties, benefits and limitations.

To formally understand the variance relationship between those unbiased estimators, we link this to a more general class of estimators: the \textit{extended} \citep{Bratley1987} form of the conditional Monte Carlo estimators \citep{Hammersley1956,Dubi1979,Granovsky1981}, and thus view those importance sampling estimators in a unified framework and referred to as conditional importance sampling, since they are computing weights as conditional expectation of likelihood ratio conditioning on different choice of statistics. Though the intuition from prior work suggests that stationary importance sampling should have the best accuracy, followed by per-decision IS and then (worst) the crude IS estimator. Surprisingly, we show that this is not always the case. In particular, we construct short-horizon MDP examples in Figure \ref{fig:counterexamples} that demonstrate that the crude IS can have a lower variance estimate than per-decision IS or stationary IS, and also show results for the other cases. 

We then describe how this observation is quite natural when we note that all three estimators are instances of conditional expectation. If $X$ and $Y$ are two well-defined random variables on the same probability space such that $\theta = \mathbb{E}\left[Y\right] = \mathbb{E}\left[\mathbb{E}[Y|X]\right]$, then the conditional Monte Carlo estimator for $\theta$ is
$\mathbb{E}\left[Y|x\right]$. By the law of total variance, the variance of the conditional Monte Carlo estimator cannot be larger than that of the crude Monte Carlo estimator $y$. However when $X$ and $Y$ are sequences of random variables, and we want to estimate $\mathbb{E}\left[\sum_{t=1}^T Y_t\right]$, the variance of the so-called \textit{extended} conditional Monte Carlo estimator $\sum_{t=1}^T \mathbb{E}\left[Y_t | x_t\right]$ is not guaranteed to reduce variance due to covariance between the summands. %In this paper, we show that per-decision and stationary importance sampling can be obtained by applying the extended conditional Monte Carlo method to crude importance sampling. We refer to the estimators resulting from this combination as ``conditional importance sampling'' estimators.

Building on these insights, we then provide a general variance analysis for conditional importance sampling estimators, as well as sufficient conditions for variance reduction in Section \ref{sect:increasing_condition}. In Section \ref{sect:asymp_infinite} we provide upper and lower bounds for the asymptotic variance of the crude, per-decision and stationary estimators. These bounds show, under certain conditions, that the per-decision and stationary importance sampling estimators can reduce the asymptotic variance to a polynomial function of the horizon compared to the exponential dependence of the per-decision estimator. Our proofs apply to general state spaces and use concentration inequalities for martingales. Importantly, these bounds characterize a set of common conditions under which the variance of stationary importance sampling can be smaller than that of per decision importance sampling, which in turn, can have a smaller variance than the crude importance sampling estimator. In doing so, our results provide concrete theoretical foundations supporting recent empirical successes in long-horizon domains.

\section{Notation and Problem Setting}
We consider Markov Decision Processes (MDPs) with discounted or undiscounted rewards and under a fixed horizon $T$. An MDP is defined by a tuple $(\mathcal{S},\mathcal{A},P,p_1,r,\gamma,T)$, where $\mathcal{S} \subset \mathbb{R}^d$ is the state space and $\mathcal{A}$ is the action space, which we assume are both bounded compact Hausdorf spaces. We use the notation $P(S|s,a)$ to denote the transition probability kernel where $S \subset \mathcal{S}, s \in \mathcal{S}, a\in \mathcal{A}$ and $r_t(s,a): \mathcal{S}\times\mathcal{A}\times[T] \mapsto [0,1]$ for the reward function. The symbol $\gamma \in [0,1]$
\footnote{Our analysis works for both discounted and undiscounted reward.} refers to the discount factor. For simplicity we write the probability density function associated with $P$ as $p(s'|s,a)$. Furthermore, our definition contains a probability density function of the initial state $s_1$ which we denote by $p_1$. 
We use $\pi(a_t|s_t)$ and $\mu(a_t|s_t)$ to denote the conditional probability density/mass functions associated with the policies $\pi$ and $\mu$. We call $\mu$ the \textit{behavior} policy and $\pi$ the \textit{target} policy. We assume $\frac{\pi(a|s)}{\mu(a|s)} < \infty$ throughout this paper which is the necessary condition for the effectiveness of all importance sampling bassed methods. We are interested in estimating the value of $\pi$, defined as:
\begin{align*}
    v^\pi = \mathbb{E}_{\pi} \left[ \sum_{t=1}^T \gamma^{t-1} r_t \right] \enspace .
\end{align*}
Furthermore, we use the notation $\tau_{1:T}$ to denote a $T$-step trajectory of the form: $\tau_{1:T} = \{(s_t,a_t,r_t)\}_{t=1}^T$. When appropriate, we use the subscript $\pi$ or $\mu$ to specify if $\tau_{1:T}$ comes from the induced distribution of $\pi$ or $\mu$. We use the convention that the lack of subscript for $\mathbb{E}$ is equivalent to writing $\mathbb{E}_\mu$, but otherwise write $\mathbb{E}_\pi$ explicitly. We denote the $1$-step likelihood ratio and the $T$-steps likelihood ratio respectively as: 
\begin{align*}
\rho_t = \frac{\pi(a_t|s_t)}{\mu(a_t|s_t)}, \quad \rho_{1:T} = \prod_{t=1}^T \rho_t \enspace.
\end{align*}
We define the $T$-step state distribution and stationary state distribution under the behavior policy as:
$$
    d^\mu_{t}(s,a) = \Pr(s_t = s, a_t = a |s_1 \sim p_1, a_i \sim \mu(a_i|s_i)) $$
$$
    d^\mu_{\gamma, 1:T}(s,a) = \frac{\sum_{t=1}^T \gamma^t d^\mu_{t}(s,a)}{\sum_{t=1}^T \gamma^t}, \,d^\mu_{\gamma} = \lim_{T \to \infty} d^\mu_{\gamma, 1:T}
$$
For simplicity of notation, we drop the $\gamma$ in $d^\mu_{\gamma, 1:T}$ and $d^\mu_\gamma$ when $\gamma=1$, and overload $d^\mu$ to denote the marginal state distribution as well: ie. $d^\mu(s)=\int_a d^\mu(s,a) \text{d}a$ (and similarly for $d^\pi$). We use $c$ to denote the KL divergence of $\mu$ and $\pi$ and where the expectation is taken under $d^{\mu}$ over the states: $\mathbb{E}_{d^{\mu}}\left[ D_{\text{KL}}(\mu||\pi) \right]$. We assume $c>0$, otherwise $\pi$ and $\mu$ are identical and our problem reduces to on-policy policy evaluation. 

In this paper, we define the estimator and discuss the variance over a single trajectory but of all our results carry to $N$ trajectories by multiplying by a factor $1/N$. 
We define the \textit{crude} importance sampling (IS) estimator and the per-decision importance sampling (PDIS) from \cite{Precup2000} as:
\begin{align*}
    \is = \rho_{1:t}\sum_{t=1}^T \gamma^{t-1} r_t, \enspace \pdis = \sum_{t=1}^T \gamma^{t-1} r_t \rho_{1:t} \enspace. 
\end{align*}
The stationary importance sampling (SIS) estimator is defined as:
\begin{align*}
    \sis = \sum_{t=1}^T \gamma^{t-1} r_t \frac{d_{t}^\pi(s_t,a_t)}{d_{t}^\mu(s_t,a_t)} \enspace .
\end{align*}
All three estimators are unbiased. Our definition of SIS is based on the importance ratio of the time-dependent state distributions (provided by an oracle) rather than the stationary distributions as in the prior work by \cite{Liu2018,Hallak2017}, but similar with \cite{Xie2019}. This choice allows us to tackle both the finite horizon and infinite horizon more easily under the same general framework by taking $T \to \infty$ when necessary. This is possible because the ratio $\frac{d^\pi(s_t,a_t)}{d^\mu(s_t,a_t)}$ has the same asymptotic behavior as that of the stationary distribution ratio. Surprisingly, we show that even under perfect knowledge of the stationary ratio, it is generally non-trivial to guarantee a variance reduction for $\sis$.

The next standard assumptions helps us analyze our estimators in the asymptotic regime by relying on a central limit property for general Markov chains.
\begin{assumption}[Harris ergodic] \label{assum:harris} 
The Markov chain of $\{s_t,a_t\}$ under $\mu$ is Harris ergodic. That is: the chain is aperiodic, $\psi$-irreducible, and positive Harris recurrent. See \cite{meyn2012markov} for more
\end{assumption}
\begin{assumption}[Drift property]\label{assum:drift}
There exist an everywhere-finite function $B: \mathcal{S}\times\mathcal{A} \mapsto [1,\infty)$, a constant $\lambda \in (0,1)$, $b < \infty$ and a petite $K \subset \mathcal{S}\times\mathcal{A}$ such that:
\begin{align*}
    \mathbb{E}_{s',a'|s,a}B(s',a') \le \lambda B(s,a) + b\mathds{1}((s,a) \in K) \enspace .
\end{align*}
\end{assumption}
These are standard assumptions to describe the ergodic and recurrent properties of general Markov chains. Assumption \ref{assum:harris} is typically used to obtain the existence of a unique stationary distribution \citep{meyn2012markov} and assumption \ref{assum:drift} is used to measure the concentration property \citep{meyn2012markov, jones2004markov}.

\begin{figure*}[tb!]
    \begin{minipage}{0.31\textwidth}
    \includegraphics[width=\columnwidth]{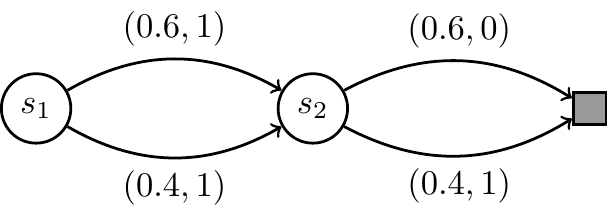}
    \subcaption{{\scriptsize $\var(\is) < \var(\sis) < \var(\pdis)$}}
    \label{fig:example1}
    \end{minipage}
    \hspace{\fill}
    \begin{minipage}{0.31\textwidth}
    \includegraphics[width=\columnwidth]{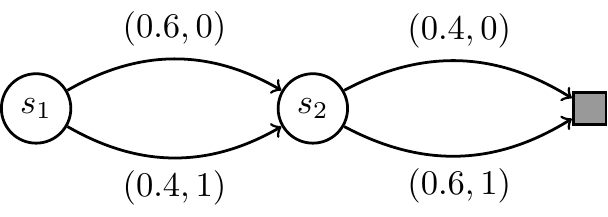}
    \subcaption{{\scriptsize$\var(\pdis) < \var(\sis) < \var(\is)$}}
    \label{fig:example2}
    \end{minipage}
    \hspace{\fill}
    \begin{minipage}{0.31\textwidth}
    \includegraphics[width=\columnwidth]{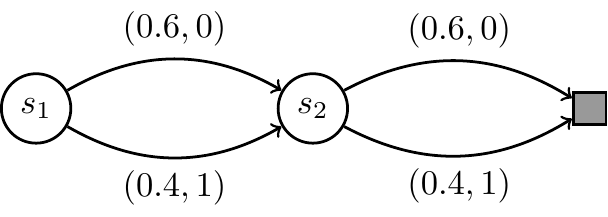}
    \subcaption{{\scriptsize$\var(\is) < \var(\pdis) < \var(\sis)$}}
    \label{fig:example3}
    \end{minipage}
    \caption{Counterexamples. The labels for each edge are of the form \texttt{(target policy probability, reward)} where the first component is the transition probability induced by the given target policy, and the second component is the reward function for this transition. All examples assume deterministic transitions and the same initial state $s_1$. The square symbol represents a terminating state.}
    \label{fig:counterexamples}
\end{figure*}

\section{Counterexamples}
\label{sect:counterexamples}

It is tempting to presume that the root cause of the variance issues in importance sampling pertains entirely to the explicit \textit{multiplicative} \citep{Liu2018} accumulation of importance weights over long trajectories. The reasoning ensuing from this intuition is that the more terms one can drop off this product, the better the resulting estimator would be in terms of variance. In this section, we show that this intuition is misleading as we can construct small MDPs in which per-decision or stationary importance sampling does not necessarily reduce the variance of the crude importance sampling. We then explain this phenomenon in section \ref{sect:cis} using the extended conditional Monte Carlo method and point out that the lack of variance reduction is attributable to the interaction of some covariance terms across time steps. However, all is not lost and section \ref{sect:asymp_infinite} shows that asymptotically ($T \to \infty$) stationary importance sampling achieves much lower variance than crude importance sampling or the per-decision variant. 

In all examples, we use a two-steps MDP with deterministic transitions, undiscounted reward, and in which a uniform behavior policy is always initialized from the state $s_1$ (see figure \ref{fig:counterexamples}). We then show that the ordering of the estimators based on their variance can vary by manipulating the target policy and the reward function so as to induce a different covariance structure between the reward and the likelihood ratio. We can then compute the exact variance (table \ref{table:variances}) of each estimator manually (see appendix \ref{appd:counterexamples_details}). Example \ref{fig:example1} shows that the per-decision estimator can have a larger variance than the crude estimator when stationary estimator improves on per-decision estimator. Example \ref{fig:example2} shows an instance where the stationary estimator does not improve on the per-decision importance sampling, but per-decision importance sampling has a smaller variance than crude importance sampling. Finally, example \ref{fig:example3} provides a negative example where the ordering goes against our intuition and shows that the stationary estimator is worse than the per-decision estimator, which in turn has a larger variance than the crude estimator. Note that the lack of variance reduction for stationary IS occurs even with perfect knowledge of the stationary ratio. We show in section \ref{sect:cis} that the problem comes from the covariance terms across time steps.

\begin{table}[h]
\begin{center}
\begin{tabular}{llclcl}
\toprule
         &IS & & PDIS  & & SIS     \\
\midrule
Example \ref{fig:example1} & 0.12 & $<$ & 0.2448 & $>$ &0.2  \\
Example \ref{fig:example2} & 0.5424 & $>$ & 0.4528 & $<$ & 0.52  \\
Example \ref{fig:example3} & 0.2304 & $<$ & 0.2688 & $<$ & 0.32 \\
\bottomrule
\end{tabular}
\end{center}
\caption{Analytical variance of different estimators. See figure \ref{fig:counterexamples} for the problem structure.}
\label{table:variances}
\end{table}

\section{Conditional Importance Sampling}
\label{sect:cis}
The unbiasedness of crude importance sampling (IS) estimator follows from the fact that:
\begin{align*}
     \mathbb{E}\left[\rho_{1:t}\sum_{t=1}^T \gamma^{t-1} r_t \right] = \mathbb{E}_\pi\left[\sum_{t=1}^T \gamma^{t-1} r_t \right] = v^\pi\enspace,
\end{align*}
Let $G_T$ be the total (discounted) return $\sum_{t=1}^T \gamma^{t-1} r_t $ and if $\phi_T$ is some statistics such that $\rho_{1:T}$ is conditionally independent with $G_T$ given $\phi_T$, then by the law of total expectation:
\begin{align*}
\mathbb{E}\left[ \rho_{1:T}G_T \right] &= \mathbb{E}\left[ \mathbb{E}\left[\rho_{1:T} G_T |\phi_T, G_T\right]\right] \\
&= \mathbb{E}\left[ G_T \mathbb{E}\left[ \rho_{1:T} | \phi_T, G_T\right]\right] \\
&= \mathbb{E}\left[  G_T \mathbb{E}\left[ \rho_{1:T} | \phi_T \right]\right] .
\end{align*}
Furthermore, by the law of total variance we have:
\begin{align*}
&\var \left(G_T \mathbb{E}\left[ \rho_{1:T} | \phi_T\right]\right)\\
&= \var\left(G_T\rho_{1:T}\right) - \mathbb{E}\left[\var(G_T\phi_{1:T}|\phi_T,G_T)\right]\\
&=\var\left(G_T\rho_{1:T}\right) - \mathbb{E}\left[G_T^2 \var(\rho_{1:T}|\phi_T)\right] \enspace .
\end{align*}
Because the second term is always non-negative, it follows that $\var\left(G_T \mathbb{E}\left[ \rho_{1:T} | \phi_T\right]\right) \leq \var\left(G_T\rho_{1:T}\right)$. This conditioning idea is the basis for the conditional Monte Carlo (CMC) as a variance reduction method. 

If we now allow ourselves to conditions in a stage-dependent manner rather than with a fixed statistics $\phi_T$, we obtain estimators belonging to the so-called \textit{extended} conditional Monte Carlo methods \citep{Bratley1987}. Assuming that $r_t$ is conditionally independent with $\rho_{1:T}$ given $\phi_t$, then by the law of total expectation:
\begin{align*}
    v^\pi = \mathbb{E}\left[ G_T \rho_{1:T}\right]% &= \sum_{t=1}^T \gamma^{t-1}\mathbb{E}_\mu\left[r_t \rho_{1:t}\right] \\
    &=\sum_{t=1}^T \gamma^{t-1}\mathbb{E}\left[\mathbb{E}\left[r_t \rho_{1:t}|\phi_t, r_t\right] \right] \\
    &= \mathbb{E}\left[\sum_{t=1}^T \gamma^{t-1} r_t \mathbb{E}\left[ \rho_{1:t}|\phi_t\right] \right] \enspace.
\end{align*}
We refer to estimators in this form as ``\textit{extended} conditional importance sampling estimators'': a family of estimators encompassing both the per-decision importance sampling (PDIS) estimator of \cite{Precup2000} as well as the \textit{stationary} variants \citep{Hallak2017,Liu2018,Gelada2019}. In this paper, we use ``conditional importance sampling''\footnote{\cite{Bucklew2004,bucklew2005conditional} also uses this expression to describe the ``g-method'' of \cite{Srinivasan1998}, which is CMC applied to IS. Our work considers the extended form of the CMC method for Markov chains: a more general setting with very different variance properties.} to refer to all variants of importance sampling based on the conditional Monte Carlo method, in its ``extended'' form or not. 
To obtain the per-decision estimator in our framework, it suffices to define the stage-dependent statistics $\phi_t$ to be the history $\tau_{1:t}$ up to time $t$:
\begin{align*}
 v^\pi &= \mathbb{E}\left[\sum_{t=1}^T \gamma^{t-1} r_t \mathbb{E}\left[ \rho_{1:t}|\tau_{1:t}\right] \right] = \mathbb{E}\left[ \sum_{t=1}^T \gamma^{t-1} r_t \rho_{1:t} \right] \enspace .
\end{align*}
In this last expression, $\mathbb{E}[\rho_{1:T} | \tau_{1:t}] = \rho_{1:t}$ follows from the fact that the likelihood ratio is a martingale \citep{LEcuyer2008}. Similarly, the stationary importance sampling (SIS) estimator can be derived by conditioning on the state and action at time $t$:
\begin{align*}
    v^\pi = \mathbb{E}\left[\sum_{t=1}^T \gamma^{t-1} r_t \mathbb{E}\left[ \rho_{1:t}|s_t,a_t\right] \right] \\ = \mathbb{E}\left[\sum_{t=1}^T \gamma^{t-1} r_t \frac{d_{t}^\pi(s_t,a_t)}{d_{t}^\mu(s_t,a_t)} \right].
\end{align*}

In this case, the connection between the expected importance sampling weights conditioned on $(s_t, a_t)$ and the ratio of stationary distributions warrants a lengthier justification which we formalize in the following lemma (proved in appendix).
\begin{lemma}
\label{lem:conditional_expectation_given_sa}
$\mathbb{E} (\rho_{1:t}|s_t,a_t) = \frac{d^\pi(s_t,a_t)}{d^\mu(s_t,a_t)}$ \enspace .
\end{lemma}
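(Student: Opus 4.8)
The plan is to combine the elementary change-of-measure identity for trajectory distributions with the characterization of conditional expectation through test functions; this identifies $\mathbb{E}(\rho_{1:t}\mid s_t,a_t)$ as the Radon--Nikodym derivative of $d^\pi_t$ with respect to $d^\mu_t$, evaluated at $(s_t,a_t)$. (Here $d^\pi$ and $d^\mu$ in the statement abbreviate the time-$t$ state--action marginals $d^\pi_t$ and $d^\mu_t$; recall also that an expectation with no subscript is under $\mu$.)

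First I would make the change of measure explicit. Writing out the joint density of the length-$t$ sub-trajectory $\{(s_i,a_i)\}_{i=1}^t$ under each policy,
\begin{align*}
p_\mu\bigl(\{(s_i,a_i)\}_{i=1}^t\bigr) &= p_1(s_1)\,\mu(a_1|s_1)\prod_{i=2}^{t} p(s_i|s_{i-1},a_{i-1})\,\mu(a_i|s_i),\\
p_\pi\bigl(\{(s_i,a_i)\}_{i=1}^t\bigr) &= p_1(s_1)\,\pi(a_1|s_1)\prod_{i=2}^{t} p(s_i|s_{i-1},a_{i-1})\,\pi(a_i|s_i),
\end{align*}
and since the initial density $p_1$ and the transition kernel $p$ are shared, the two densities differ exactly by the factor $\prod_{i=1}^t \pi(a_i|s_i)/\mu(a_i|s_i)=\rho_{1:t}$; the standing assumption $\pi(a|s)/\mu(a|s)<\infty$ makes this factor finite, and the martingale property $\mathbb{E}[\rho_{1:t}]=1$ makes it integrable. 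Hence for every bounded measurable $h$ of the sub-trajectory,
\begin{align*}
\mathbb{E}\bigl[\rho_{1:t}\,h\bigr] = \int h\,\rho_{1:t}\,p_\mu\,d\tau_{1:t} = \int h\,p_\pi\,d\tau_{1:t} = \mathbb{E}_\pi\bigl[h\bigr].
\end{align*}

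Next I would specialize to $h = f(s_t,a_t)$ for an arbitrary bounded measurable $f:\mathcal{S}\times\mathcal{A}\to\mathbb{R}$. The right-hand side becomes $\int f(s,a)\,d^\pi_t(s,a)\,\dsda$ by the definition of $d^\pi_t$, while the left-hand side, by the tower property conditioning on $(s_t,a_t)$ together with the definition of $d^\mu_t$ as the law of $(s_t,a_t)$ under $\mu$, becomes $\int \mathbb{E}\bigl(\rho_{1:t}\mid s_t=s,a_t=a\bigr)\,f(s,a)\,d^\mu_t(s,a)\,\dsda$. Equating these for all such $f$ forces $\mathbb{E}\bigl(\rho_{1:t}\mid s_t=s,a_t=a\bigr)\,d^\mu_t(s,a) = d^\pi_t(s,a)$ for $d^\mu_t$-almost every $(s,a)$, which is exactly $\mathbb{E}(\rho_{1:t}\mid s_t,a_t) = d^\pi_t(s_t,a_t)/d^\mu_t(s_t,a_t)$.

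The only delicate part is the measure-theoretic bookkeeping on general state spaces: one must argue with test functions rather than point evaluations, invoke $\pi\ll\mu$ (guaranteed by the finite-ratio assumption) so that $d^\pi_t\ll d^\mu_t$ and the quotient is well defined, and accept that the identity pins down the conditional expectation only up to a $d^\mu_t$-null set — which is harmless since $\sis$ only ever evaluates the ratio at $(s_t,a_t)$ drawn from $d^\mu_t$. It is also worth stating explicitly why the conditional expectation does not collapse further: unlike $\mathbb{E}(\rho_{1:t}\mid \tau_{1:t})=\rho_{1:t}$, conditioning on just $(s_t,a_t)$ genuinely averages over all histories consistent with that state--action pair, and the resulting average is precisely the occupancy ratio $d^\pi_t(s_t,a_t)/d^\mu_t(s_t,a_t)$, which is Markovian but in general not equal to $1$.
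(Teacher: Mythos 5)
Your proof is correct, and it rests on the same core fact as the paper's: the density of the length-$t$ sub-trajectory under $\pi$ equals that under $\mu$ times $\rho_{1:t}$, so averaging $\rho_{1:t}$ over histories consistent with $(s_t,a_t)$ yields the ratio of the time-$t$ marginals. The execution differs in a worthwhile way, though. The paper first peels off the last factor, writing $\mathbb{E}(\rho_{1:t}\mid s_t,a_t)=\rho_t\,\mathbb{E}(\rho_{1:t-1}\mid s_t,a_t)$, and then evaluates $\mathbb{E}(\rho_{1:t-1}\mid s_t,a_t)$ by an explicit pointwise computation: it applies Bayes' rule to the conditional density $p_\mu(\tau_{1:t-1}\mid s_t,a_t)$, cancels the shared transition kernel and the $\mu(a_t|s_t)$ factor, and integrates out the history to land on $d^\pi_t(s_t)/d^\mu_t(s_t)$. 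You instead characterize the conditional expectation weakly, by testing $\mathbb{E}[\rho_{1:t}\,f(s_t,a_t)]=\mathbb{E}_\pi[f(s_t,a_t)]$ against arbitrary bounded measurable $f$ and identifying the two sides as integrals against $d^\mu_t$ and $d^\pi_t$. Your route avoids manipulating conditional densities pointwise (and hence avoids implicitly using that $a_t$ is independent of the past given $s_t$ inside a Bayes-rule step), which is cleaner on the general state spaces the paper claims to cover; the paper's route is more concrete and makes visible exactly which factors cancel. One presentational note: the lemma as stated drops the subscript $t$ on $d^\pi$ and $d^\mu$, and you are right that what is actually proved (and what the SIS estimator uses) is the identity for the time-$t$ marginals $d^\pi_t/d^\mu_t$, determined $d^\mu_t$-almost everywhere.
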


Assuming that an unbiased estimator of the conditional weights $\mathbb{E}\left[ \rho_{1:t}|\phi_t\right]$ is available, the conditional importance sampling estimators are also unbiased. However, the law of total variance no longer implies a variance reduction because the variance is now over a sum of random variables of the form:
\begin{align*}
\var\left(\sum_{t=1}^T r_t w_t\right) = \sum_{t=1}^T \var(r_t w_t) + \sum_{k \not = t} \cov(r_k w_k,r_t w_t)\enspace.
\end{align*}
where $w_t = \mathbb{E}\left[\gamma^{t-1} \rho_{1:t}|\phi_t\right]$. In general, there is no reason to believe that the sum of covariance terms interact in such a way as to provide a variance reduction. If stage-dependent conditioning of the importance weights need not reduce the variance in general, all we are left with is to ``optimistically'' \citep{Bratley1987} suppose that the covariance structure plays in our favor. Over the next sections, we develop sufficient conditions for variance reduction in both the finite and infinite horizon setting. More specifically, theorem \ref{thm:episodic_pdis_variance} provides sufficient conditions for a variance reduction with the per-decision estimator while theorem \ref{thm:episodic_sis_variance} applies to the stationary importance sampling estimator. In section \ref{sect:asymp_infinite}, we develop an asymptotic analysis of the variance when $T \to \infty$. We show that under some mild assumptions, the variance of the crude importance sampling estimator is always exponentially large in the horizon $T$. Nevertheless, we show that there are cases where the per-decision or stationary estimators can help reduce the variance to $O(T^2)$. 

\section{Finite-Horizon Analysis}
\label{sect:increasing_condition}
While the counterexamples of section \ref{sect:counterexamples} show that there is no consistent order in general between the different IS estimators and their variance, we are still interested in characterizing when a variance reduction can occur. In this section, we provide theorems to answer when $\var(\pdis)$ is guaranteed to be smaller than $\var(\is)$ and when $\var(\sis)$ is guaranteed to be smaller than $\var(\pdis)$. We start by introducing a useful lemma to analyze the variance of the sum of conditional expectations.
\begin{lemma}
\label{lem:variance_of_sum}
Let $X_t$ and $Y_t$ be two sequences of random variables. Then
\begin{align*}
    &\var\left(\sum_t Y_t\right) - \var \left( \sum_t \mathbb{E}[Y_t|X_t] \right) \\
    &\ge 2  \sum_{t<k} \mathbb{E}[Y_t Y_k] - 2 \sum_{t<k}\mathbb{E}[ \mathbb{E}[Y_t|X_t] \mathbb{E}[Y_k|X_k]] \enspace .
\end{align*}
\end{lemma}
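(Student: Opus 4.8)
The plan is to expand both variances using the standard decomposition $\var(\sum_t Z_t) = \sum_t \var(Z_t) + \sum_{t \neq k} \cov(Z_t, Z_k)$, applied once with $Z_t = Y_t$ and once with $Z_t = \mathbb{E}[Y_t \mid X_t]$, and then compare the two expansions term by term. Writing $\hat Y_t \eqdef \mathbb{E}[Y_t \mid X_t]$, the difference splits into a ``diagonal'' part $\sum_t \left( \var(Y_t) - \var(\hat Y_t) \right)$ and an ``off-diagonal'' part $\sum_{t \neq k} \left( \cov(Y_t, Y_k) - \cov(\hat Y_t, \hat Y_k) \right)$.

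For the diagonal part, I would invoke the law of total variance on each summand individually: $\var(Y_t) = \mathbb{E}[\var(Y_t \mid X_t)] + \var(\mathbb{E}[Y_t \mid X_t]) = \mathbb{E}[\var(Y_t \mid X_t)] + \var(\hat Y_t)$, so $\var(Y_t) - \var(\hat Y_t) = \mathbb{E}[\var(Y_t \mid X_t)] \ge 0$. Hence $\sum_t \left( \var(Y_t) - \var(\hat Y_t) \right) \ge 0$ and can be dropped from the lower bound. For the off-diagonal part, I would use that $\mathbb{E}[\hat Y_t] = \mathbb{E}[Y_t]$ (tower property) to rewrite each covariance in the ``uncentered'' form $\cov(Y_t, Y_k) = \mathbb{E}[Y_t Y_k] - \mathbb{E}[Y_t]\mathbb{E}[Y_k]$ and $\cov(\hat Y_t, \hat Y_k) = \mathbb{E}[\hat Y_t \hat Y_k] - \mathbb{E}[Y_t]\mathbb{E}[Y_k]$; the product-of-means terms cancel, leaving $\cov(Y_t, Y_k) - \cov(\hat Y_t, \hat Y_k) = \mathbb{E}[Y_t Y_k] - \mathbb{E}[\hat Y_t \hat Y_k]$. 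Summing over unordered pairs (equivalently, using $\sum_{t \neq k} = 2\sum_{t<k}$ by symmetry of the summand) gives exactly $2\sum_{t<k} \mathbb{E}[Y_t Y_k] - 2\sum_{t<k} \mathbb{E}[\hat Y_t \hat Y_k]$, which is the claimed right-hand side. Combining: $\var(\sum_t Y_t) - \var(\sum_t \hat Y_t) = \sum_t \mathbb{E}[\var(Y_t \mid X_t)] + 2\sum_{t<k}\left(\mathbb{E}[Y_t Y_k] - \mathbb{E}[\hat Y_t \hat Y_k]\right) \ge 2\sum_{t<k}\left(\mathbb{E}[Y_t Y_k] - \mathbb{E}[\hat Y_t \hat Y_k]\right)$.

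I do not anticipate a serious obstacle here — the statement is essentially an accounting identity plus one application of the law of total variance — but the one point requiring care is the interchange of $\mathbb{E}[\hat Y_t \hat Y_k]$ with the cross terms: unlike the single-variable case, there is no reason $\mathbb{E}[Y_t Y_k] = \mathbb{E}[\hat Y_t \hat Y_k]$, since $\hat Y_t$ conditions only on $X_t$ and not on $X_k$, so these cross-moment terms genuinely do not cancel and must be carried through as the residual lower-bound term. It is also worth noting explicitly that the inequality (rather than equality) in the lemma comes \emph{only} from discarding the nonnegative diagonal term $\sum_t \mathbb{E}[\var(Y_t \mid X_t)]$, which foreshadows the later point that variance reduction for the extended estimators hinges entirely on the sign of the off-diagonal cross-moment difference.
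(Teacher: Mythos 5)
Your proof is correct and is essentially the paper's argument in different notation: the paper compares second moments directly (using the conditional Jensen inequality $(\mathbb{E}[Y_t|X_t])^2 \le \mathbb{E}[Y_t^2|X_t]$ on the diagonal terms and keeping the cross-moment terms exactly), which is the same as your law-of-total-variance treatment of the diagonal plus cancellation of the product-of-means in the covariances. No gap.
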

This lemma states that the variance reduction of the stage-dependent conditional expectation depends on the difference between the covariance of the random variables and that of their conditional expectations. The variance reduction analysis of PDIS and SIS in theorems \ref{thm:episodic_pdis_variance} and \ref{thm:episodic_sis_variance} can be viewed as a consequence of this result. We develop in those cases some sufficient conditions to guarantee that the difference between the covariance terms is positive.

\begin{theorem}[Variance reduction of PDIS] 
\label{thm:episodic_pdis_variance}
If for any $1 \le t \le k \le T$ and initial state $s$, $\rho_{0:k}(\tau)$ and $r_t(\tau)\rho_{0:k}(\tau)$ are positively correlated, $\var(\pdis) \le \var(\is)$.
\end{theorem}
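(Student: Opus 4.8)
The plan is to realize $\pdis$ as a stage-dependent conditional expectation of $\is$ and apply Lemma~\ref{lem:variance_of_sum}. Take $Y_t = \gamma^{t-1} r_t \rho_{1:T}$ and $X_t = \tau_{1:t}$. Since $r_t$ is $\tau_{1:t}$-measurable and $\mathbb{E}[\rho_{1:T}\mid \tau_{1:t}] = \rho_{1:t}$ (the likelihood ratio is a martingale), we have $\sum_t Y_t = \is$ and $\sum_t \mathbb{E}[Y_t\mid X_t] = \sum_t \gamma^{t-1} r_t \rho_{1:t} = \pdis$. Lemma~\ref{lem:variance_of_sum} then lower-bounds $\var(\is) - \var(\pdis)$ by $2\sum_{t<k}(\mathbb{E}[Y_t Y_k] - \mathbb{E}[\mathbb{E}[Y_t\mid X_t]\mathbb{E}[Y_k\mid X_k]])$, so it suffices to show that each summand is non-negative, i.e., that for every $1 \le t < k \le T$,
\[
\mathbb{E}[r_t r_k\, \rho_{1:T}^2] \;\ge\; \mathbb{E}[r_t r_k\, \rho_{1:t}\rho_{1:k}].
\]
(We may assume every second moment that appears is finite; otherwise $\var(\is) = \infty$ and there is nothing to prove.)

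First I would replace $\rho_{1:T}^2$ by $\rho_{1:k}^2$ on the left. Conditioning on $\tau_{1:k}$ and writing $\rho_{1:T}^2 = \rho_{1:k}^2\,\rho_{k+1:T}^2$, the one-step second moments satisfy $\mathbb{E}[\rho_i^2\mid s_i] \ge (\mathbb{E}[\rho_i\mid s_i])^2 = 1$ by Jensen, so iterating the tower property gives $\mathbb{E}[\rho_{k+1:T}^2\mid \tau_{1:k}] \ge 1$; since $r_t r_k \ge 0$ this yields $\mathbb{E}[r_t r_k\,\rho_{1:T}^2] \ge \mathbb{E}[r_t r_k\,\rho_{1:k}^2]$. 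It remains to show $\mathbb{E}[r_t r_k\,\rho_{1:k}^2] \ge \mathbb{E}[r_t r_k\,\rho_{1:t}\rho_{1:k}]$. Using $\rho_{1:k} = \rho_{1:t}\rho_{t+1:k}$ we rewrite $\rho_{1:k}^2 - \rho_{1:t}\rho_{1:k} = \rho_{1:t}^2\,\rho_{t+1:k}(\rho_{t+1:k} - 1)$, and then, conditioning on $\tau_{1:t}$ and pulling out the $\tau_{1:t}$-measurable non-negative factor $r_t\rho_{1:t}^2$,
\[
\mathbb{E}[r_t r_k\,\rho_{1:k}^2] - \mathbb{E}[r_t r_k\,\rho_{1:t}\rho_{1:k}] = \mathbb{E}\Big[r_t\rho_{1:t}^2\,\mathbb{E}\big[r_k\,\rho_{t+1:k}(\rho_{t+1:k}-1)\,\big|\,\tau_{1:t}\big]\Big].
\]
So it suffices that the inner conditional expectation is non-negative.

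This is where the hypothesis enters. By the Markov property, the conditional law of $(s_{t+1},a_{t+1},\dots,s_k,a_k)$ given $\tau_{1:t}$ is the average over $s_{t+1}\sim p(\cdot\mid s_t,a_t)$ of the law of a trajectory started from $s_{t+1}$, and the martingale property applied to that started chain gives $\mathbb{E}[\rho_{t+1:k}\mid s_{t+1}=s] = 1$. Hence $\mathbb{E}[r_k\,\rho_{t+1:k}(\rho_{t+1:k}-1)\mid s_{t+1}=s] = \cov(r_k\rho_{t+1:k},\,\rho_{t+1:k}\mid s_{t+1}=s)$, which is exactly the covariance of $\rho_{t+1:k}$ with $r_k\rho_{t+1:k}$ for the chain started from $s$ --- non-negative by the theorem's hypothesis --- and averaging over $s_{t+1}$ preserves non-negativity. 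Combining the two steps makes each term in the Lemma~\ref{lem:variance_of_sum} bound non-negative, giving $\var(\pdis) \le \var(\is)$.

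The crux is the second step: one must split the length-$T$ product $\rho_{1:T}$ at the two indices $t$ and $k$, peel off the pieces in the right order (the tail past $k$ first, then the block between $t$ and $k$ relative to $\tau_{1:t}$), and verify that what is left over is genuinely a conditional covariance of the sub-chain, so that the hypothesis --- stated for trajectories from an arbitrary initial state and hence applicable to each sub-chain via the Markov property --- applies without modification; everything else (Jensen, the tower property, non-negativity of rewards) is routine. One can also bypass Lemma~\ref{lem:variance_of_sum}: the same pairwise inequality together with the diagonal bound $\mathbb{E}[r_t^2\rho_{1:T}^2] \ge \mathbb{E}[r_t^2\rho_{1:t}^2]$ gives $\mathbb{E}[\is^2] \ge \mathbb{E}[\pdis^2]$ directly, which suffices because both estimators are unbiased.
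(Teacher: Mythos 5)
Your proof is correct and follows essentially the same route as the paper's: apply Lemma~\ref{lem:variance_of_sum} with $Y_t=\gamma^{t-1}r_t\rho_{1:T}$ and $X_t=\tau_{1:t}$, reduce to the pairwise inequality $\mathbb{E}[r_tr_k\rho_{1:T}^2]\ge\mathbb{E}[r_tr_k\rho_{1:t}\rho_{1:k}]$, and verify it by conditioning on $\tau_{1:t}$ and invoking the positive-correlation hypothesis for the sub-chain restarted after time $t$ via the Markov property. The only (harmless) difference is bookkeeping: you first discard the tail $\rho_{k+1:T}$ with a Jensen step and apply the hypothesis to the block $\rho_{t+1:k}$ (the $t'=k'$ case), whereas the paper applies it in one shot to the full tail $\rho_{t+1:T}$.
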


This theorem guarantees the variance reduction of PDIS given a positive correlation between the likelihood ratio and the importance-weighted reward. The random variables $\rho_{0:k}(\tau)$ and $r_t(\tau)\rho_{0:k}(\tau)$ are positively correlated when for a trajectory with large likelihood ratio, the importance-weighted reward (which is an unbiased estimator of reward under the target policy $\pi$) is also large. Intuitively, a positive correlation is to be expected if the target policy $\pi$ is more likely to take a trajectory with a higher reward. We expect that this property may hold in applications where the target policy is near the optimal value for example.

\begin{theorem}[Variance reduction of SIS] 
\label{thm:episodic_sis_variance}
If for any fixed $0 \le t \le k < T$, 
$$\cov \left( \rho_{1:t} r_t,  \rho_{0:k}r_k \right) \ge \cov \left( \frac{d^\pi_t(s,a)}{d^\mu_t(s,a)} r_t, \frac{d^\pi_k(s,a)}{d^\mu_k(s,a)} r_k \right) $$
then $\var(\sis) \le \var(\pdis)$
\end{theorem}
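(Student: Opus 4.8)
The plan is to apply Lemma~\ref{lem:variance_of_sum} with the per-decision summands playing the role of the $Y_t$'s and the current state--action pair playing the role of the conditioning statistic. Concretely, I would set $Y_t = \gamma^{t-1} r_t \rho_{1:t}$ and $X_t = (s_t, a_t)$. Since in our model the reward is a deterministic function $r_t = r_t(s_t,a_t)$, it is $\sigma(s_t,a_t)$-measurable, so
\begin{align*}
\mathbb{E}\left[ Y_t \mid X_t \right] = \gamma^{t-1} r_t(s_t,a_t)\, \mathbb{E}\left[ \rho_{1:t} \mid s_t, a_t \right] = \gamma^{t-1} r_t \frac{d^\pi_t(s_t,a_t)}{d^\mu_t(s_t,a_t)},
\end{align*}
the last equality being the time-$t$ version of Lemma~\ref{lem:conditional_expectation_given_sa}. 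Hence $\sum_{t=1}^T Y_t = \pdis$ and $\sum_{t=1}^T \mathbb{E}[Y_t \mid X_t] = \sis$, so Lemma~\ref{lem:variance_of_sum} yields
\begin{align*}
\var(\pdis) - \var(\sis) \ge 2 \sum_{t<k} \Big( \mathbb{E}\left[ Y_t Y_k \right] - \mathbb{E}\big[ \mathbb{E}[Y_t \mid X_t]\, \mathbb{E}[Y_k \mid X_k] \big] \Big).
\end{align*}

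The remaining work is to show each summand on the right-hand side is nonnegative. Because conditioning preserves the mean, $\mathbb{E}[\mathbb{E}[Y_t \mid X_t]] = \mathbb{E}[Y_t]$, so for every pair $t<k$,
\begin{align*}
\mathbb{E}[Y_t Y_k] - \mathbb{E}\big[ \mathbb{E}[Y_t \mid X_t]\, \mathbb{E}[Y_k \mid X_k] \big] = \cov(Y_t, Y_k) - \cov\big( \mathbb{E}[Y_t \mid X_t],\, \mathbb{E}[Y_k \mid X_k] \big).
\end{align*}
Substituting $Y_t = \gamma^{t-1} r_t \rho_{1:t}$ together with its conditional expectation and factoring out the nonnegative constant $\gamma^{t-1}\gamma^{k-1}$, this equals
\begin{align*}
\gamma^{t+k-2}\left[ \cov\big( \rho_{1:t} r_t,\ \rho_{1:k} r_k \big) - \cov\left( \frac{d^\pi_t(s_t,a_t)}{d^\mu_t(s_t,a_t)} r_t,\ \frac{d^\pi_k(s_k,a_k)}{d^\mu_k(s_k,a_k)} r_k \right) \right],
\end{align*}
which is $\ge 0$ precisely by the hypothesis of the theorem (with the convention $\rho_{0:k} = \rho_{1:k}$). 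Summing over $t<k$ and chaining with the previous display gives $\var(\pdis) - \var(\sis) \ge 0$, as claimed.

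I expect the only genuine subtlety to be the identification in the first step: one must verify that the conditional expectation of an individual PDIS summand given $(s_t,a_t)$ is exactly the corresponding SIS summand, which uses both that the reward is a function of $(s_t,a_t)$ and Lemma~\ref{lem:conditional_expectation_given_sa} linking $\mathbb{E}[\rho_{1:t}\mid s_t,a_t]$ to the time-indexed distribution ratio $d^\pi_t/d^\mu_t$. A minor remark is that the hypothesis is phrased for all $0\le t\le k<T$, but the diagonal case $t=k$ is never needed: Lemma~\ref{lem:variance_of_sum} already absorbs the diagonal via the termwise law of total variance $\var(\mathbb{E}[Y_t\mid X_t]) \le \var(Y_t)$, so only the off-diagonal covariance comparison is invoked. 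Everything beyond this is routine bookkeeping (and, as usual, implicitly assumes the relevant second moments are finite).
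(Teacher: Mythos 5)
Your proof is correct and follows essentially the same route as the paper's: apply Lemma~\ref{lem:variance_of_sum} to the per-decision summands, identify the conditional expectations with the SIS summands via Lemma~\ref{lem:conditional_expectation_given_sa}, and convert the resulting cross-term comparison into the assumed covariance inequality by subtracting the products of means. The only cosmetic difference is that the paper conditions on $(s_t,a_t,r_t)$ whereas you condition on $(s_t,a_t)$ and invoke determinism of the reward, which is equivalent in this setting; your observation that the $t=k$ case of the hypothesis is never needed is also correct.
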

This theorem implies that the relative order of variance between SIS and PDIS depends on the ordering of the covariance terms between time-steps. In the case when $T$ is very large, the covariance on the right is very close to zero, and if the covariance on the left is positive (which is true for many MDPs) the variance of SIS can be smaller than PDIS.

\section{Asymptotic Analysis}
\label{sect:asymp_infinite}
We have seen in section \ref{sect:counterexamples} that a variance reduction cannot be guaranteed in the general case and we then proceeded to derive sufficient conditions. However, this section shows that the intuition behind per-decision and stationary importance sampling does hold under some conditions and in the limit of the horizon $T \to \infty$. Under the light of these new results, we expect those estimators to compare favorably to crude importance sampling for very long horizons: an observation also implied by the sufficient conditions derived in the last section. 

In the following discussion, we consider the asymptotic rate of the variance as a function when $T \to \infty$. We show that under some mild assumptions, the variance of crude importance sampling is exponential with respect to $T$ and bounded from two sides. For the per-decision estimator, we provide conditions when the variance is at least exponential or at most polynomial with respect to $T$. Under some standard assumptions, we also show that the variance of stationary importance sampling can be polynomial with respect to $T$, indicating an exponential variance reduction. As a starting point, we prove a result characterizing the asymptotic distribution of the importance-weighted return. 
\begin{theorem} 
\label{thm:likelihood_ratio_distr}
Under Assumption \ref{assum:harris}, if $\log(\frac{\pi(a|s)}{\mu(a|s)})$ is a continuous function of $(s,a)$ in the support of $\mu$ then for $\pi \neq \mu$,
$\lim_{T} \left(\rho_{1:T}\right)^{1/T} = e^{-c}$,
$\overline{\lim}_{T} | \is |^{1/T} < e^{-c}$ a.s.
\end{theorem}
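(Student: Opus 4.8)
The plan is to establish the two claims by appealing to the strong law of large numbers for the additive functional $\log \rho_t = \log(\pi(a_t|s_t)/\mu(a_t|s_t))$ along the Markov chain $\{(s_t,a_t)\}$ under $\mu$, and then separately controlling the return factor $G_T = \sum_{t=1}^T \gamma^{t-1} r_t$, which is bounded. First I would note that under Assumption~\ref{assum:harris} the chain $\{(s_t,a_t)\}$ is positive Harris recurrent with a unique stationary distribution $d^\mu$, so the ergodic theorem (e.g.\ \cite{meyn2012markov}) applies to any $d^\mu$-integrable function. Since $\log(\pi(a|s)/\mu(a|s))$ is continuous on the compact support of $\mu$ (and $\pi/\mu < \infty$ is assumed throughout), it is bounded and hence integrable, so
\begin{align*}
\frac{1}{T}\log \rho_{1:T} = \frac{1}{T}\sum_{t=1}^T \log \rho_t \;\xrightarrow{\text{a.s.}}\; \mathbb{E}_{d^\mu}\!\left[\,\mathbb{E}_{a\sim\mu(\cdot|s)}\!\left[\log\tfrac{\pi(a|s)}{\mu(a|s)}\right]\right] = -\,\mathbb{E}_{d^\mu}\!\left[\KL(\mu\|\pi)\right] = -c \enspace .
\end{align*}
Exponentiating gives $(\rho_{1:T})^{1/T}\to e^{-c}$ a.s., which is the first claim. (Strictly I should be a little careful here: the ergodic theorem gives convergence of the empirical average along a path; I would invoke the version for Harris chains that holds for every initial distribution, not just from stationarity.)

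For the second claim, write $|\is|^{1/T} = |G_T|^{1/T}\,(\rho_{1:T})^{1/T}$ (using $\rho_{1:T}>0$ a.s.). Since $r_t\in[0,1]$ and $\gamma\le 1$, we have $0\le G_T\le T$, so $|G_T|^{1/T}\le T^{1/T}\to 1$. Combined with the first claim, $\overline{\lim}_T |\is|^{1/T} \le \overline{\lim}_T |G_T|^{1/T}\cdot\lim_T(\rho_{1:T})^{1/T} \le 1\cdot e^{-c} = e^{-c}$. To get the \emph{strict} inequality I need to rule out the boundary case; since $c>0$ is assumed (as $\pi\neq\mu$), we have $e^{-c}<1$, but the issue is whether $|G_T|^{1/T}$ can push the product up to $e^{-c}$ — it cannot, since $|G_T|^{1/T}\le T^{1/T}$ which is eventually strictly below any constant $>1$, and $e^{-c}$ is a fixed constant strictly less than $1$; more carefully, for $T$ large $T^{1/T} < e^{c/2}$ say, so $|\is|^{1/T} < e^{c/2}\cdot (\rho_{1:T})^{1/T}$ and the right side tends to $e^{-c/2} < e^{-c}$... wait, $e^{-c/2}>e^{-c}$. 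So I need a sharper argument: the clean route is that $|G_T|^{1/T}\to 1$ while $(\rho_{1:T})^{1/T}\to e^{-c}<1$, hence $|\is|^{1/T}\to e^{-c}$, so in fact $\limsup = \lim = e^{-c}$, and the strict inequality in the statement should be read as $e^{-c} < 1$ combined with this limit, or the paper intends $\overline{\lim}|\is|^{1/T} \le e^{-c}$ with the understanding that this is strictly less than $1$. I would present it as: $\lim_T |\is|^{1/T} = e^{-c}$ a.s., and since $c>0$, this is $<1$, giving exponential decay of $\is$ even though $\rho_{1:T}$ itself decays exponentially — the point being that $\is$ has exponentially small \emph{typical} value while, as the variance analysis will show, exponentially large \emph{second moment}.

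\textbf{The main obstacle} I anticipate is the precise invocation of the ergodic/SLLN theorem for general state space Harris chains started from an arbitrary (non-stationary) initial distribution $p_1$: the standard ergodic theorem for Harris recurrent chains (Theorem~17.1.7 in \cite{meyn2012markov} or its analogues) does give $\frac{1}{T}\sum_t f(s_t,a_t)\to \mathbb{E}_{d^\mu}[f]$ a.s.\ for $d^\mu$-integrable $f$ and $\psi$-a.e.\ starting point, and positive Harris recurrence upgrades this to \emph{every} starting point; I would cite this carefully. A secondary subtlety is the interchange $\mathbb{E}_{d^\mu}[\mathbb{E}_{a\sim\mu}[\log(\pi/\mu)]] = -c$, which is just the definition of $c$ given in the problem setting ($c = \mathbb{E}_{d^\mu}[\KL(\mu\|\pi)]$ with the inner expectation over $a\sim\mu$), so no real work there. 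The boundedness of $\log(\pi/\mu)$ follows from continuity on the compact support together with the standing assumption $\pi/\mu<\infty$; if one worried about $\log$ blowing up where $\pi=0$, note $\log(\pi/\mu)\to -\infty$ there but this is still $\mu$-integrable since it is the (negative of a) KL-type integrand and bounded above — I would handle the lower-unboundedness by truncation and monotone convergence if needed, but continuity on compact support plus $\mu$ being a probability measure makes the integral finite and well-defined.
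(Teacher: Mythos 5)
Your proposal is correct and follows essentially the same route as the paper's own proof: the strong law of large numbers for the Harris chain $\{(s_t,a_t)\}$ applied to the additive functional $\log\rho_t$ gives $\tfrac{1}{T}\log\rho_{1:T}\to -c$ a.s., and the second claim follows from $0\le G_T\le T$ with $T^{1/T}\to 1$. You are also right to flag that this argument only delivers $\overline{\lim}_T|\is|^{1/T}\le e^{-c}$ rather than a strict inequality; the paper's proof has exactly the same gap (it silently writes $<$ where $\le$ is what is established), and the non-strict bound is all that is needed for Corollary~\ref{cor:likelihood_ratio_distr} since $e^{-c}<1$.
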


\begin{corollary}
\label{cor:likelihood_ratio_distr}
Under the same condition as theorem \ref{thm:likelihood_ratio_distr}, $\rho_{1:T} \rightarrow_{a.s.} 0$, $\rho_{1:T} \sum_{t=1}^T \gamma^{t-1} r_t  \rightarrow_{a.s.} 0$
\end{corollary}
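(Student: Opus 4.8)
The plan is to obtain both almost-sure limits directly from Theorem~\ref{thm:likelihood_ratio_distr} by an elementary root-test (Cauchy--Hadamard style) argument, exploiting only the standing hypothesis $c>0$, which makes $e^{-c}<1$. The key observation is that a nonnegative sequence whose $T$-th root has a limit (or even just a limsup) strictly below $1$ must converge to $0$ geometrically fast; Theorem~\ref{thm:likelihood_ratio_distr} already supplies exactly such a root bound on an almost-sure event, so no further probabilistic work is needed.

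First I would establish $\rho_{1:T}\to_{a.s.}0$. Theorem~\ref{thm:likelihood_ratio_distr} gives $\lim_T(\rho_{1:T})^{1/T}=e^{-c}$ almost surely. Fix any $\epsilon>0$ with $e^{-c}+\epsilon<1$ (possible since $c>0$). On the probability-one event where this limit holds there is a random $T_0$ such that $(\rho_{1:T})^{1/T}\le e^{-c}+\epsilon$, hence $0\le\rho_{1:T}\le(e^{-c}+\epsilon)^T$, for all $T\ge T_0$; since $(e^{-c}+\epsilon)^T\to0$, the squeeze theorem gives $\rho_{1:T}\to0$ on that event. For the second claim, $\is=\rho_{1:T}\sum_{t=1}^T\gamma^{t-1}r_t$ and Theorem~\ref{thm:likelihood_ratio_distr} gives $\overline{\lim}_T|\is|^{1/T}<e^{-c}<1$ almost surely; writing $L$ for this (possibly random) limsup and fixing $\epsilon>0$ with $L+\epsilon<1$, the definition of limsup furnishes a random $T_1$ with $|\is|^{1/T}\le L+\epsilon$ for all $T\ge T_1$, so $|\is|\le(L+\epsilon)^T\to0$ and hence $\is\to_{a.s.}0$.

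I do not anticipate a genuine obstacle: all the analytic content lives in Theorem~\ref{thm:likelihood_ratio_distr}, and this corollary is just the remark that root bounds below one force geometric decay. The only point requiring a word of care is that the comparison constant $e^{-c}$ in that theorem is \emph{strictly} below $1$, which is where the assumption $c>0$ (equivalently $\pi\neq\mu$ with positive expected KL under $d^\mu$) enters and without which the product need not vanish; and that all inequalities are to be read on the common almost-sure event on which the conclusions of Theorem~\ref{thm:likelihood_ratio_distr} hold. The continuity hypothesis on $\log(\pi/\mu)$ over the support of $\mu$ additionally guarantees $\rho_{1:T}>0$ (so the roots are well defined), though the bound $0\le\rho_{1:T}\le(e^{-c}+\epsilon)^T$ would close the argument even on any exceptional event where $\rho_{1:T}$ vanishes.
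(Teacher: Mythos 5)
Your proof is correct and follows essentially the same route as the paper: both parts are read off from the root bounds of Theorem~\ref{thm:likelihood_ratio_distr} together with $e^{-c}<1$. The only cosmetic difference is that the paper proves the second claim by contradiction (assuming $\is>\epsilon$ infinitely often forces $\overline{\lim}|\is|^{1/T}\ge 1$), whereas you argue directly via the geometric bound $|\is|\le(L+\epsilon)^T$, which is if anything slightly cleaner.
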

Although crude importance sampling is unbiased, this result shows that it also converges to zero almost surely. Theorem \ref{thm:likelihood_ratio_distr} further proves that it converges to an exponentially small term $\exp(-cT)$. This indicates that in most cases the return is almost zero, leading to poor estimates of $v^\pi$, and under some rare events the return can be very large and the expectation is $v^\pi > 0$.

Equipped with these results, we can now show that the variance of the crude importance sampling estimator is exponential with respect to $T$. To quantitatively describe the variance, we need the following assumptions so that $\log \rho_t$ is bounded:
\begin{assumption}
\label{assum:bounded_ratio}
$|\log \rho_t| < \infty$
\end{assumption}
This assumption entails that $\rho_t$ is both upper-bounded (a common assumption) and lower-bounded. We only need the assumption on the lower bound of $\rho_t$ in the proof of a lower bound part in theorem \ref{thm:is_variance} and \ref{thm:pdis_variance}. For the lower bound part, it essentially amounts to the event where all likelihood ratio terms on a trajectory are greater than zero. Then by the law of total variance, the original variance can only be larger than the variance of all returns conditioned on this event. Before we characterize the variance of the IS estimator, we first prove that the log-likelihood ratio is a martingale with bounded differences.

\begin{lemma}
\label{lem:likelihood_ratio_martingale}
Under Assumption \ref{assum:harris}, \ref{assum:drift} and \ref{assum:bounded_ratio}, there exists a function $\hat{f}: \mathcal{S}\times\mathcal{A} \mapsto \mathbb{R}$ such that:
\begin{enumerate}
    \item $\forall (s,a), \, |\hat{f}(s,a)| < c_1\sqrt{B(s,a)}$ for constant $c_1$.
    \item For any $T>0$, $\log \rho_{1:T} + Tc - \hat{f}(s_1,a_1) + \hat{f}(s_{T+1},a_{T+1})$ is a mean-zero martingale with respect to the sequence $\{s_t,a_t\}_{t=1}^T$ with martingale differences bounded by $2c_1\sqrt{||B||_\infty}$.
\end{enumerate}
\end{lemma}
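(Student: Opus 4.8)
The plan is to apply the Poisson equation / Comparison theorem machinery for Harris ergodic chains to the centered additive functional $\sum_{t=1}^T g(s_t,a_t)$, where $g(s,a) \eqdef \mathbb{E}_{s',a'|s,a}\log\tfrac{\pi(a'|s')}{\mu(a'|s')} + c$ has stationary mean zero under $d^\mu$ (note $\mathbb{E}_{d^\mu}[\mathbb{E}_{a'|s'}\log\rho] = \mathbb{E}_{d^\mu}[D_\text{KL}(\mu\|\pi)]$ with a sign, matching the definition of $c$). Concretely, I would first invoke the solution to the Poisson equation: under Assumptions \ref{assum:harris} and \ref{assum:drift}, there exists $\hat f:\mathcal{S}\times\mathcal{A}\mapsto\mathbb{R}$ solving $\hat f(s,a) - \mathbb{E}_{s',a'|s,a}\hat f(s',a') = g(s,a)$, and the Comparison Theorem (Meyn--Tweedie, Thm.~17.4.2, or Glynn--Meyn) gives the bound $|\hat f(s,a)| \le c_1 \sqrt{B(s,a)}$ when the forcing function $g$ is dominated by $\sqrt{B}$; Assumption \ref{assum:bounded_ratio} makes $g$ bounded, hence certainly $O(\sqrt{B})$ since $B\ge 1$, so item~1 follows.

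For item~2, I would telescope. Writing $M_T \eqdef \sum_{t=1}^T \big(\log\rho_t + c\big)$ and using the identity $\log\rho_t + c = \mathbb{E}[\log\rho_t+c \mid s_t,a_t] + \big(\log\rho_t - \mathbb{E}[\log\rho_t\mid s_t,a_t]\big)$, the first piece equals $g(s_t,a_t) = \hat f(s_t,a_t) - \mathbb{E}[\hat f(s_{t+1},a_{t+1})\mid s_t,a_t]$. Summing and rearranging the telescoping terms yields
\[
\log\rho_{1:T} + Tc - \hat f(s_1,a_1) + \hat f(s_{T+1},a_{T+1}) = \sum_{t=1}^T \Delta_t,
\]
where $\Delta_t \eqdef \big(\log\rho_t - \mathbb{E}[\log\rho_t\mid s_t,a_t]\big) + \big(\mathbb{E}[\hat f(s_{t+1},a_{t+1})\mid s_t,a_t] - \hat f(s_{t+1},a_{t+1})\big)$. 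Each $\Delta_t$ is $\mathcal{F}_{t+1}$-measurable (where $\mathcal{F}_t = \sigma(s_1,a_1,\dots,s_t,a_t)$) with $\mathbb{E}[\Delta_t \mid \mathcal{F}_t] = 0$, so the right-hand side is a mean-zero martingale with respect to $\{s_t,a_t\}$. For the bounded-difference claim: $|\log\rho_t - \mathbb{E}[\log\rho_t\mid s_t,a_t]|$ is bounded because $|\log\rho_t|<\infty$ by Assumption \ref{assum:bounded_ratio}, and $|\mathbb{E}[\hat f(s_{t+1},a_{t+1})\mid s_t,a_t] - \hat f(s_{t+1},a_{t+1})| \le 2\|\hat f\|_\infty \le 2c_1\sqrt{\|B\|_\infty}$ by item~1; one then absorbs the first bound into the constant (or, more carefully, keeps the increment $g(s_t,a_t) - (\log\rho_t+c) + \hat f$ grouped so the dominant term is $2c_1\sqrt{\|B\|_\infty}$, giving exactly the stated bound). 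Here I am using that $B$ being bounded, $\|B\|_\infty < \infty$, is implicit in the asymptotic-variance setting, or that the martingale differences are bounded by $2c_1\sqrt{B(s_t,a_t)} + (\text{const})$ which on the relevant trajectories is what is needed.

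The main obstacle is the justification of the $\sqrt{B}$ bound on the Poisson solution $\hat f$: this requires checking that the forcing function $g$ lies in the correct weighted $L_\infty$ space and then quoting the Comparison Theorem for $V$-uniformly ergodic chains, which needs the drift condition of Assumption \ref{assum:drift} together with the petite set $K$; getting the constant $c_1$ to depend only on $\lambda, b, K$ and not on $T$ is the delicate point, but this is exactly what the standard theory (Meyn--Tweedie Ch.~17, or Glynn--Meyn 1996) delivers. A secondary subtlety is the precise form of the martingale difference bound — whether it is genuinely the stated $2c_1\sqrt{\|B\|_\infty}$ or carries an additive term from $\log\rho_t$ — which I would handle by noting that $|\log\rho_t|$ is itself bounded by a constant times $\sqrt{B}$ on the support (since $B\ge 1$ and $g$ is bounded), folding everything into $c_1$.
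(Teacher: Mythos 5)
Your proposal is correct and follows essentially the same route as the paper: solve the Poisson equation for the centered log-likelihood ratio under the Harris/drift assumptions (the paper invokes Lemma 3 of Glynn and Olvera-Cravioto, built on Glynn--Meyn), obtain $|\hat f(s,a)|\le c_1\sqrt{B(s,a)}$ from the comparison-theorem bound since the bounded forcing function is dominated by $\sqrt{B}\ge 1$, and telescope to exhibit a mean-zero martingale with differences bounded by $2\|\hat f\|_\infty\le 2c_1\sqrt{\|B\|_\infty}$. The only simplification you missed is that $\log\rho_t$ is a deterministic function of $(s_t,a_t)$, so your residual term $\log\rho_t-\mathbb{E}[\log\rho_t\mid s_t,a_t]$ is identically zero and the worry about an extra additive contribution to the difference bound evaporates; correspondingly the forcing function should be $f(s,a)=\log\frac{\pi(a|s)}{\mu(a|s)}+c$ itself rather than its one-step-ahead conditional expectation.
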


We are now ready to give both upper and lower bounds on the variance of the crude importance sampling estimator using an exponential function of $T$ from both sides.
\begin{theorem}[Variance of IS estimator]
\label{thm:is_variance}
Under Assumption \ref{assum:harris}, \ref{assum:drift} and \ref{assum:bounded_ratio}, there exist $T_0 > 0$ such that for all $T > T_0$,
$$\var(\is) \ge \frac{(v^\pi)^2}{4} \exp \left( \frac{Tc^2}{8c_1^2 \| B \|_\infty} \right) - (v^\pi)^2 $$
where $B$ is defined in Assumption \ref{assum:drift}, $c_1$ is some constant defined in lemma \ref{lem:likelihood_ratio_martingale}, $c = \mathbb{E}_{d^\mu} [D_{\text{KL}}(\mu||\pi)]$. If $\mathbb{E}_{a \sim \mu} \left[ \frac{\pi(a|s)^2}{\mu(a|s)^2} \right] \le M_{\rho}^2 $ for any s, then $\var\left(\is\right) \le T^2 M^{2T} - (v^\pi)^2 $.
\end{theorem}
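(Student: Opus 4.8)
The plan is to prove the two inequalities by independent elementary arguments: a one-step iterated-conditioning bound for the upper bound, and a Cauchy--Schwarz / rare-event argument powered by the bounded-difference martingale of Lemma~\ref{lem:likelihood_ratio_martingale} for the lower bound.

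\emph{Upper bound.} Since $\gamma\le 1$ and $r_t\in[0,1]$, the return satisfies $0\le G_T:=\sum_{t=1}^T\gamma^{t-1}r_t\le T$, and $\rho_{1:T}\ge 0$, so $0\le\is\le T\rho_{1:T}$ and $\mathbb{E}[\is^2]\le T^2\mathbb{E}[\rho_{1:T}^2]$. I would then peel off the last likelihood-ratio factor with the Markov property: conditioning on $(s_1,a_1,\dots,s_{T-1},a_{T-1},s_T)$ leaves $\rho_{1:T-1}^2$ measurable, while $\mathbb{E}[\rho_T^2\mid s_T]=\mathbb{E}_{a\sim\mu(\cdot\mid s_T)}[\pi(a\mid s_T)^2/\mu(a\mid s_T)^2]\le M_\rho^2$ by hypothesis, so $\mathbb{E}[\rho_{1:T}^2]\le M_\rho^2\,\mathbb{E}[\rho_{1:T-1}^2]$; iterating gives $\mathbb{E}[\rho_{1:T}^2]\le M_\rho^{2T}$, and subtracting $(v^\pi)^2=(\mathbb{E}[\is])^2$ yields $\var(\is)\le T^2 M_\rho^{2T}-(v^\pi)^2$.

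\emph{Lower bound.} Assume $v^\pi>0$ (the bound is vacuous otherwise). For a threshold $\epsilon_T>0$ set $A=\{\rho_{1:T}\ge\epsilon_T\}$; on $A^c$ we have $\is=\rho_{1:T}G_T\le\epsilon_T T$ pointwise, so $\mathbb{E}[\is\mathds{1}_{A^c}]\le\epsilon_T T$, and choosing $\epsilon_T$ with $\epsilon_T T\le v^\pi/2$ (e.g.\ $\epsilon_T=v^\pi/(2T)$) gives $\mathbb{E}[\is\mathds{1}_A]=v^\pi-\mathbb{E}[\is\mathds{1}_{A^c}]\ge v^\pi/2$. Since $\is\ge 0$, Cauchy--Schwarz gives $\mathbb{E}[\is^2]\ge\mathbb{E}[\is\mathds{1}_A]^2/\Pr(A)\ge(v^\pi)^2/(4\Pr(A))$, so everything reduces to an exponential upper bound on $\Pr(A)$. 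Writing $M_T:=\log\rho_{1:T}+Tc-\hat f(s_1,a_1)+\hat f(s_{T+1},a_{T+1})$, Lemma~\ref{lem:likelihood_ratio_martingale} gives that $M_T$ is a mean-zero martingale with increments bounded by $2c_1\sqrt{\|B\|_\infty}$ and that $|\hat f|\le c_1\sqrt{\|B\|_\infty}$; hence on $A$ we get $M_T\ge\log\epsilon_T+Tc-2c_1\sqrt{\|B\|_\infty}$, and Azuma--Hoeffding yields $\Pr(A)\le\exp\!\big(-(\log\epsilon_T+Tc-2c_1\sqrt{\|B\|_\infty})^2/(8Tc_1^2\|B\|_\infty)\big)$. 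With the slowly-decaying $\epsilon_T$ above, $\log\epsilon_T=o(T)$, so the exponent is $Tc^2/(8c_1^2\|B\|_\infty)$ up to lower-order terms; absorbing those into the choice of $T_0$ gives $\Pr(A)\le\exp(-Tc^2/(8c_1^2\|B\|_\infty))$ and therefore $\var(\is)\ge\frac{(v^\pi)^2}{4}\exp(Tc^2/(8c_1^2\|B\|_\infty))-(v^\pi)^2$ for all $T>T_0$.

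\emph{Main obstacle.} The nontrivial part is pinning the constant in the exponent of the lower bound. Azuma is tight only when the threshold $\epsilon_T$ is pushed up to the critical scale $\rho_{1:T}\gtrsim 1$, so that $M_T\gtrsim Tc$; but such a large threshold destroys the bound $\mathbb{E}[\is\mathds{1}_{A^c}]\le v^\pi/2$, since $\is\mathds{1}_{A^c}$ is then no longer small. One must therefore show the two constraints on $\epsilon_T$ can be met with only sub-exponential loss in $T$, which is what forces the ``for all $T>T_0$'' qualifier and determines $T_0$ in terms of $v^\pi,c,c_1,\|B\|_\infty$; a subsidiary point is that the boundary terms $\hat f(s_1,a_1),\hat f(s_{T+1},a_{T+1})$ must be bounded uniformly, which is exactly where $\|B\|_\infty<\infty$ is used, and that the bound $2c_1\sqrt{\|B\|_\infty}$ on the increments must be carried correctly through Azuma's $2\sum d_i^2=8Tc_1^2\|B\|_\infty$ to match the stated denominator.
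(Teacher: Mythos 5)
Your proposal is correct and follows essentially the same route as the paper: the upper bound via $G_T\le T$ plus iterated conditioning to get $\mathbb{E}[\rho_{1:T}^2]\le M_\rho^{2T}$, and the lower bound via a rare-event second-moment argument combined with the martingale of Lemma~\ref{lem:likelihood_ratio_martingale} and Azuma's inequality. The only cosmetic difference is that you condition on $\{\rho_{1:T}\ge v^\pi/(2T)\}$ and apply Cauchy--Schwarz directly, whereas the paper conditions on $Z=\mathds{1}(\is>v^\pi/2)$ and invokes the law of total variance --- algebraically the same step, and your bookkeeping of the Azuma exponent is if anything slightly more careful than the paper's.
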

The lower bound part shows that the variance is at least an exponential function of the horizon $T$, and the rate depends on the distance between the behavior and target policies, as well as the recurrent property of the Markov chain associated with the behavior policy. This result differs from that of \cite{Xie2019}, which is based on the CLT for i.i.d sequences, since our analysis considers more broadly a distribution of samples from a Markov chain.

\textit{Proof Sketch.} Let $Y$ be the IS estimator and $Z$ be indicator function $\mathds{1}(Y > v^{\pi}/2)$. By the law of total variance, $\var(Y) \ge \var(\mathbb{E}(Y|Z))$. Since the expectation of $\mathbb{E}(Y|Z)$ is a constant, we only need to show that the second moment of $\mathbb{E}(Y|Z)$ is asymptotically exponential. To achieve this, we observe that $\mathbb{E}[(\mathbb{E}[Y|Z])^2] \ge \Pr(Y > v^{\pi}/2)(\mathbb{E}[Y|Y>v^{\pi}/2])^2 $. We can then establish that $\mathbb{E}[Y|Y>v^{\pi}/2]$ is $\Omega(1/\Pr(Y > v^{\pi}/2))$ using the fact that the expectation of $Y$ is a constant. It follows that we can upper bound $\Pr(Y > v^{\pi}/2)$ by an exponentially small term. This can be done by a concentration inequality for martingales. The upper bound part is proved by bounding the absolute range of each variable. \hfill\ensuremath{\square}

Now we prove upper and lower bounds for the variance of the per-decision estimator as a function of $\gamma$, the expected reward at time $t$, $\mathbb{E}_\pi[r_t]$, and other properties of MDP. We then give a sufficient condition for the variance of PDIS to have an exponential lower bound, and when it is at most polynomial. 
\begin{theorem}[Variance of the PDIS estimator]
\label{thm:pdis_variance}
Under Assumption \ref{assum:harris}, \ref{assum:drift} and \ref{assum:bounded_ratio}, $\exists T_0 > 0$ s.t. $\forall T > T_0$,
\begin{align*}
     \var(\pdis) \ge \sum_{t=T_0}^T  \frac{\gamma^{2t-2}(\mathbb{E}_{\pi}(r_t))^2}{4} \exp \left( \frac{tc^2}{8c_1^2 \| B \|_\infty} \right)\\
     - (v^\pi)^2 
\end{align*}
where $B$ $c_1$ and $c$ are same constants in theorem \ref{thm:is_variance}, and $C$ is some constant. For the upper bound:
\begin{enumerate}
    \item If $\mathbb{E}_{a \sim \mu} \left[ \frac{\pi(a|s)^2}{\mu(a|s)^2} \right] \le M_{\rho}^2 $ for any s, $\var(\pdis) \le T\sum_{t=1}^T M_{\rho}^{2t} \gamma^{2t-2} - (v^\pi)^2$.
    \item Let $\rhomax = \sup_{s,a} \frac{\pi(a|s)}{\mu(a|s)} < \infty$,
$\var(\pdis) \le T\sum_{t=1}^T \rhomax^{2t} \gamma^{2t-2}\mathbb{E}_{\mu}[r_t^2] - (v^\pi)^2$.
\end{enumerate}
\end{theorem}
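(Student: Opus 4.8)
\textit{Proof proposal.} The plan is to mirror the argument behind Theorem \ref{thm:is_variance}, exploiting the fact that every summand of $\pdis = \sum_{t=1}^T \gamma^{t-1} r_t\rho_{1:t}$ is non-negative because $r_t\in[0,1]$ and $\rho_{1:t}\ge 0$.

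For the lower bound, I would first use pointwise non-negativity of the summands to drop the cross terms: $\pdis^2 \ge \sum_{t=1}^T \gamma^{2t-2}(r_t\rho_{1:t})^2$, so that $\var(\pdis) = \mathbb{E}[\pdis^2]-(v^\pi)^2 \ge \sum_{t=1}^T \gamma^{2t-2}\mathbb{E}[(r_t\rho_{1:t})^2] - (v^\pi)^2$. This reduces the task to lower bounding $\mathbb{E}[(r_t\rho_{1:t})^2]$ for each fixed $t$. For that I would treat $Y_t \eqdef r_t\rho_{1:t}$ as a stand-alone unbiased estimator of $\theta_t \eqdef \mathbb{E}_\pi[r_t]$ (by the identity $\mathbb{E}_\mu[f(\tau_{1:t})\rho_{1:t}] = \mathbb{E}_\pi[f(\tau_{1:t})]$) and run exactly the conditioning step from the proof sketch of Theorem \ref{thm:is_variance}: conditioning on $Z_t = \mathds{1}(Y_t > \theta_t/2)$ and using $Y_t\ge 0$ to get $\mathbb{E}[Y_t\mathds{1}(Y_t\le\theta_t/2)]\le\theta_t/2$, hence $\mathbb{E}[Y_t^2]\ge\mathbb{E}[(\mathbb{E}[Y_t|Z_t])^2]\ge \theta_t^2/(4\Pr(Y_t>\theta_t/2))$. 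Then, since $r_t\le 1$, $\{Y_t>\theta_t/2\}\subseteq\{\rho_{1:t}>\theta_t/2\} = \{\log\rho_{1:t} > \log(\theta_t/2)\}$, and I would substitute the decomposition of Lemma \ref{lem:likelihood_ratio_martingale}, $\log\rho_{1:t} = M_t - tc + \hat f(s_1,a_1) - \hat f(s_{t+1},a_{t+1})$ with $M_t$ a mean-zero martingale with increments at most $2c_1\sqrt{\|B\|_\infty}$ and $|\hat f|\le c_1\sqrt{\|B\|_\infty}$, so that on this event $M_t$ exceeds $tc - 2c_1\sqrt{\|B\|_\infty} + \log(\theta_t/2)$; an Azuma--Hoeffding bound then gives $\Pr(Y_t>\theta_t/2)\le \exp(-tc^2/(8c_1^2\|B\|_\infty))$ once $t\ge T_0$ is large enough that the additive corrections are absorbed. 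Plugging back yields $\mathbb{E}[(r_t\rho_{1:t})^2]\ge \frac{(\mathbb{E}_\pi r_t)^2}{4}\exp(tc^2/(8c_1^2\|B\|_\infty))$, and summing from $T_0$ to $T$ (discarding the non-negative $t<T_0$ terms) gives the stated bound.

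For the upper bound I would apply Cauchy--Schwarz in the form $\pdis^2 \le T\sum_{t=1}^T \gamma^{2t-2}(r_t\rho_{1:t})^2$, so $\var(\pdis)\le T\sum_{t=1}^T \gamma^{2t-2}\mathbb{E}[(r_t\rho_{1:t})^2] - (v^\pi)^2$, and then bound $\mathbb{E}[(r_t\rho_{1:t})^2]$ in two ways. For (1), use $r_t^2\le 1$ and iterated conditioning: conditioning on $(\tau_{1:t-1},s_t)$ and integrating $a_t\sim\mu(\cdot|s_t)$ gives $\mathbb{E}[\rho_t^2\mid\tau_{1:t-1},s_t] = \mathbb{E}_{a\sim\mu}[\pi(a|s_t)^2/\mu(a|s_t)^2]\le M_\rho^2$, whence $\mathbb{E}[\rho_{1:t}^2]\le M_\rho^2\mathbb{E}[\rho_{1:t-1}^2]\le\cdots\le M_\rho^{2t}$. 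For (2), use the pointwise bound $\rho_{1:t}=\prod_{i=1}^t\rho_i\le \rhomax^t$ and keep the reward factor, giving $\mathbb{E}[(r_t\rho_{1:t})^2]\le \rhomax^{2t}\mathbb{E}_\mu[r_t^2]$. Substituting gives both claimed upper bounds.

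I expect the main obstacle to be the per-timestep tail estimate in the lower bound: making the choice of $T_0$ precise so that the additive corrections ($2c_1\sqrt{\|B\|_\infty}$ and $\log(2/\theta_t)$) do not spoil the exponent $tc^2/(8c_1^2\|B\|_\infty)$, and handling timesteps where $\theta_t=\mathbb{E}_\pi[r_t]$ is very small --- for those the target term in the lower bound is itself tiny, so a much weaker tail bound (or simply discarding the term) suffices, but this case split must be stated carefully. Everything else is, as in Theorem \ref{thm:is_variance}, routine application of the law of total variance, Cauchy--Schwarz, and the martingale concentration afforded by Lemma \ref{lem:likelihood_ratio_martingale}.
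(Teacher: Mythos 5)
Your proposal matches the paper's proof essentially step for step: the lower bound via non-negativity of the summands, per-timestep conditioning on $\mathds{1}(Y_t > \gamma^{t-1}\mathbb{E}_\pi(r_t)/2)$, and Azuma's inequality applied to the martingale decomposition of Lemma \ref{lem:likelihood_ratio_martingale}; the upper bounds via Cauchy--Schwarz followed by either iterated conditioning (the paper's Lemma on $\mathbb{E}[\rho_{1:t}^2]\le M_\rho^{2t}$) or the pointwise bound $\rho_{1:t}\le \rhomax^t$. The subtlety you flag about small $\mathbb{E}_\pi[r_t]$ and the choice of $T_0$ is real --- the paper's own write-up elides it (the $\log\mathbb{E}_\pi(r_t)$ correction term is treated as uniformly bounded) --- but your proposed resolution (discard or trivially bound those terms) is the right fix and does not change the approach.
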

\textit{Proof Sketch.} The proof of the lower bound part is similar to the proof of the last theorem where we first lower bound the square of the sum by a sum of squares. We then apply the proof techniques of theorem \ref{thm:is_variance} for the time-dependent terms. The proof for the upper bound relies the Cauchy-Schwartz inequality on the square of sum and then upper bound each term directly. \hfill\ensuremath{\square}

Using theorem \ref{thm:pdis_variance}, we can now give sufficient conditions for the variance of the PDIS estimator to be at least exponential or at most polynomial.
\begin{corollary}
\label{cor:pdis_variance_lowerbound}
With theorem \ref{thm:pdis_variance} holds, $\var(\pdis ) = \Omega(\exp(\epsilon T))$ if the following conditions hold:
1) $\gamma \ge \exp \left( \frac{-c^2}{16c_1^2\|B\|_\infty}\right) $; 2) There exist a $\epsilon > 0$ such that 
$$\mathbb{E}_\pi(r_t) = \Omega\left(\exp \left( -t \left(\frac{c^2}{16c_1^2 \| B \|_\infty}  + \log\gamma - \epsilon/2 \right) \right)\right) $$
\end{corollary}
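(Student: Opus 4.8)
The plan is to read the result directly off the lower bound in Theorem \ref{thm:pdis_variance}. Introduce the shorthand $\alpha \eqdef \frac{c^2}{8c_1^2\|B\|_\infty}$, so that the theorem's exponential factor is $\exp(\alpha t)$ and the decay rate in the second condition is $-\left(\tfrac{\alpha}{2} + \log\gamma - \tfrac{\epsilon}{2}\right)$. Theorem \ref{thm:pdis_variance} then states that, for all $T > T_0$,
\begin{align*}
\var(\pdis) \ge \tfrac14 \sum_{t=T_0}^T \gamma^{2t-2}\,(\mathbb{E}_\pi(r_t))^2\, e^{\alpha t} - (v^\pi)^2 .
\end{align*}

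First I would unpack the second condition: the $\Omega(\cdot)$ statement means there are a constant $C_0 > 0$ and an index $t_1$ with $\mathbb{E}_\pi(r_t) \ge C_0 \exp\!\big(-t(\tfrac{\alpha}{2} + \log\gamma - \tfrac{\epsilon}{2})\big)$ for all $t \ge t_1$. Squaring this and multiplying by the factor $\gamma^{2t-2} e^{\alpha t}$ appearing in the summand, the exponents telescope: the $e^{-\alpha t}$ produced by squaring cancels $e^{\alpha t}$, while the $e^{-2t\log\gamma}$ cancels the exponential part of $\gamma^{2t-2} = \gamma^{-2} e^{2t\log\gamma}$. Every summand with $t \ge \max(T_0,t_1)$ is therefore bounded below by $\gamma^{-2} C_0^2\, e^{\epsilon t}$, so $\var(\pdis) \ge \frac{C_0^2}{4\gamma^2}\sum_{t=\max(T_0,t_1)}^{T} e^{\epsilon t} - (v^\pi)^2$. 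Since $\epsilon > 0$, this geometric sum is $\Theta(e^{\epsilon T})$ and dominates the additive constant, which gives $\var(\pdis) = \Omega(e^{\epsilon T})$ as claimed.

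Finally I would explain the role of the first condition, namely $\gamma \ge \exp\!\big(-c^2/(16c_1^2\|B\|_\infty)\big) = e^{-\alpha/2}$: because the rewards lie in $[0,1]$ we always have $\mathbb{E}_\pi(r_t) \le 1$, so the second condition can hold at all only when its rate is non-positive, i.e. $\log\gamma \ge -\tfrac{\alpha}{2} + \tfrac{\epsilon}{2}$; for small $\epsilon$ this is essentially the first condition, which thus serves to exclude the vacuous regime in which no MDP satisfies the reward hypothesis.

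There is no genuine obstacle here: the corollary is a bookkeeping exercise on top of Theorem \ref{thm:pdis_variance}. The only points needing care are tracking the finite offsets $T_0$ and $t_1$ (both constants, so the tail of the sum still grows like $e^{\epsilon T}$), translating the $\Omega$ hypothesis into an explicit inequality with a concrete constant and threshold, and checking the exponent cancellation so that the $\gamma$- and $\alpha$-dependence drops out of the summand and only the $e^{\epsilon t}$ growth remains.
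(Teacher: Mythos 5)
Your proposal is correct and follows essentially the same route as the paper: unpack the $\Omega(\cdot)$ hypothesis into an explicit constant and threshold, square it, and observe that the $e^{\alpha t}$ and $\gamma^{2t}$ factors in the lower bound of Theorem \ref{thm:pdis_variance} cancel the decay of $(\mathbb{E}_\pi(r_t))^2$, leaving a pure $e^{\epsilon t}$ growth; the paper also makes the same remark about condition 1 ruling out the vacuous regime. The only cosmetic difference is that the paper keeps just the single $t=T$ summand while you retain the whole geometric tail, which changes nothing in the conclusion.
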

This corollary says that if $\gamma$ is close enough to $1$ and the expected reward under the target policy is larger than an exponentially decaying function, then the variance of \pdis is still at least exponentially large. We note that the second condition is satisfied if $r_t(s,a)$ is a function that does not depends on $t$ and  $\mathbb{E}_{d^\pi}(r(s,a)) > 0$. This is due to the fact that $\mathbb{E}_{\pi}(r_t) \to \mathbb{E}_{d^\pi}(r(s,a))$ as $t \to \infty$ and we obtain a constant which is larger than any exponentially decaying function.

\begin{corollary}
\label{cor:pdis_variance_upperbound}
Let $\rhomax = \sup_{s,a} \frac{\pi(a|s)}{\mu(a|s)}$. If $\rhomax \gamma \le 1$ or $\rhomax \gamma \lim_{T} \left(\mathbb{E}_{\pi}[r_T]\right)^{1/T} < 1$, $\var(\pdis) = O(T^2) $.
\end{corollary}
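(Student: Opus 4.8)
The plan is to control $\var(\pdis)$ through its second moment using the Cauchy--Schwarz step already appearing in the proof of Theorem~\ref{thm:pdis_variance}. Writing $\pdis=\sum_{t=1}^{T}\gamma^{t-1}r_t\rho_{1:t}$ and applying $\bigl(\sum_{t=1}^{T}a_t\bigr)^2\le T\sum_{t=1}^{T}a_t^2$,
\[
\var(\pdis)\le\mathbb{E}\bigl[(\pdis)^2\bigr]\le T\sum_{t=1}^{T}\gamma^{2t-2}\,\mathbb{E}_\mu\bigl[r_t^2\rho_{1:t}^2\bigr].
\]
The proof then reduces to bounding the summand $\mathbb{E}_\mu[r_t^2\rho_{1:t}^2]$, and I will do this in two complementary ways, one tailored to each hypothesis. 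Since $\rho_{1:t}=\prod_{i=1}^{t}\rho_i\le\rhomax^{\,t}$ pointwise and $r_t\in[0,1]$, the crude bound is $\mathbb{E}_\mu[r_t^2\rho_{1:t}^2]\le\rhomax^{\,2t}$. The refined bound keeps one factor of the likelihood ratio: from $\rho_{1:t}^2\le\rhomax^{\,t}\rho_{1:t}$, $r_t^2\le r_t$, and the change-of-measure identity $\mathbb{E}_\mu[r_t\rho_{1:t}]=\mathbb{E}_\pi[r_t]$ (valid because $r_t$ is a function of $\tau_{1:t}$), one gets $\mathbb{E}_\mu[r_t^2\rho_{1:t}^2]\le\rhomax^{\,t}\,\mathbb{E}_\pi[r_t]$.

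Under the first hypothesis $\rhomax\gamma\le1$, plugging in the crude bound (equivalently, invoking the second upper bound of Theorem~\ref{thm:pdis_variance} with $\mathbb{E}_\mu[r_t^2]\le1$) gives $\var(\pdis)\le T\gamma^{-2}\sum_{t=1}^{T}(\rhomax\gamma)^{2t}$. Every term of the inner sum is at most $1$, so the sum is at most $T$; since $\rhomax$ and $\gamma^{-2}$ are constants that do not depend on $T$ (the degenerate case $\gamma=0$ being immediate, as only the $t=1$ term survives), $\var(\pdis)=O(T^2)$.

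Under the second hypothesis, put $L\eqdef\lim_{T}(\mathbb{E}_\pi[r_T])^{1/T}$ and use the refined bound to obtain $\var(\pdis)\le T\gamma^{-2}\sum_{t=1}^{T}(\gamma^2\rhomax)^{t}\,\mathbb{E}_\pi[r_t]$. The $t$-th root of the summand converges to $\gamma^2\rhomax L$, and since $\gamma\in[0,1]$ we have $\gamma^2\rhomax L\le\gamma\rhomax L<1$; the root test then shows that $\sum_{t=1}^{\infty}(\gamma^2\rhomax)^{t}\mathbb{E}_\pi[r_t]$ converges, so its partial sums are $O(1)$ and $\var(\pdis)=O(T)$, which is also $O(T^2)$.

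The only genuinely delicate point is the bookkeeping on the geometric ratio. The direct bounds naturally produce the ratios $(\rhomax\gamma)^2$ and $\gamma^2\rhomax L$, whereas the hypotheses are phrased with $\rhomax\gamma$ and $\gamma\rhomax L$; the key observation is that the extra factor $\gamma\le1$ only shrinks these quantities, so the stated conditions are sufficient, if slightly conservative. One should also track the borderline case carefully: when the relevant ratio equals $1$---permitted in the first hypothesis, excluded by the strict inequality in the second---the geometric sum contributes a full factor of $T$, and this is exactly what upgrades the linear rate to the quadratic $O(T^2)$ stated in the corollary.
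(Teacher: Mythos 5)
Your proof is correct and follows essentially the same route as the paper's: start from the Cauchy--Schwarz upper bound of Theorem~\ref{thm:pdis_variance}, then show the resulting sum of per-step terms is $O(T)$ in each case (termwise bound when $\rhomax\gamma\le 1$, eventual geometric decay when $\rhomax\gamma\lim_T(\mathbb{E}_\pi[r_T])^{1/T}<1$). If anything your write-up is the more careful one: the intermediate bound $\mathbb{E}_\mu[r_t^2\rho_{1:t}^2]\le\rhomax^{\,t}\,\mathbb{E}_\pi[r_t]$ via change of measure is exactly the step needed to connect the theorem's $\mathbb{E}_\mu[r_t^2]$ bound to the corollary's hypothesis on $\mathbb{E}_\pi[r_T]$, a step the paper's own proof glosses over (along with some slips in the exponents of $\gamma$ and $\rhomax$).
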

This corollary says that when $\gamma$ and the reward $\mathbb{E}_{\pi}(r_t)$ decreases fast enough, the variance of PDIS is polynomial in $T$, indicating an exponential improvement over crude importance sampling for long horizons. We can now prove an upper bound on the variance of stationary importance sampling. 
\begin{theorem}[Variance of the SIS estimator]
\label{thm:sis_variance}
$$\var \left(\sis \right) \le T \sum_{t=1}^T \gamma^{t-1} \left( \mathbb{E}\left[\left( \frac{d^{\pi}_t(s_t,a_t)}{d^{\mu}_t(s_t,a_t)} \right)^2\right] - 1 \right)$$
\end{theorem}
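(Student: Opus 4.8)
The plan is to dispose of the problematic cross-time covariance terms by brute force --- paying a multiplicative factor of $T$ --- and then control each time step separately. Write $w_t \eqdef d^\pi_t(s_t,a_t)/d^\mu_t(s_t,a_t)$, so that $\sis = \sum_{t=1}^T \gamma^{t-1} r_t w_t$. The first step is to expand $\sis^2 = \sum_{t,k} \gamma^{t-1}\gamma^{k-1} r_t w_t\, r_k w_k$ and apply AM--GM to the likelihood-weighted rewards alone, $r_t w_t \cdot r_k w_k \le \tfrac12\big(r_t^2 w_t^2 + r_k^2 w_k^2\big)$, keeping the powers of $\gamma$ outside; by symmetry the double sum collapses to $\sum_t \gamma^{t-1} r_t^2 w_t^2 \big(\sum_{k=1}^T \gamma^{k-1}\big) \le T \sum_{t=1}^T \gamma^{t-1} r_t^2 w_t^2$, using $\sum_{k=1}^T \gamma^{k-1} \le T$. (Equivalently, the Cauchy--Schwarz bound $\var(\sum_t Z_t) \le T \sum_t \var(Z_t)$ applied to $Z_t = \gamma^{t-1} r_t w_t$ would also work, at the cost of carrying $\gamma^{2(t-1)}$ instead; AM--GM on the bare products is what lets the geometric factor collapse cleanly to $\gamma^{t-1}$.)

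The second step is to take expectations and invoke the change-of-measure identity behind Lemma~\ref{lem:conditional_expectation_given_sa}: integrating against $d^\mu_t$ gives $\mathbb{E}[w_t] = \int d^\mu_t(s,a)\,\frac{d^\pi_t(s,a)}{d^\mu_t(s,a)}\,\dsda = 1$, and similarly $\mathbb{E}[r_t w_t] = \mathbb{E}_{d^\pi_t}[r_t] = \mathbb{E}_\pi[r_t]$, so that $\mathbb{E}[\sis] = v^\pi$ and $\mathbb{E}[w_t^2] - 1 = \var(w_t) \ge 0$. Since $r_t$ takes values in $[0,1]$ we have $r_t^2 \le 1$ pointwise, hence $\mathbb{E}[r_t^2 w_t^2] \le \mathbb{E}[w_t^2]$, and combining with the first step, $\mathbb{E}[\sis^2] \le T \sum_{t=1}^T \gamma^{t-1} \mathbb{E}[w_t^2]$. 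Subtracting $(v^\pi)^2 = (\mathbb{E}[\sis])^2 \ge 0$ rather than discarding it gives $\var(\sis) = \mathbb{E}[\sis^2] - (v^\pi)^2 \le T \sum_{t=1}^T \gamma^{t-1}\mathbb{E}[w_t^2] - (v^\pi)^2$; the sharper per-step form $\mathbb{E}[w_t^2]-1 = \var(w_t)$ appearing in the statement comes from carrying the centering identity $\mathbb{E}[w_t]=1$ through the per-step estimate $\var(\gamma^{t-1}r_t w_t)$ directly instead of only through the global $(v^\pi)^2$, together with $\gamma^{2(t-1)} \le \gamma^{t-1}$ for $\gamma \in [0,1]$.

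The main obstacle --- and the reason the statement carries an explicit factor $T$ rather than following from a law-of-total-variance argument as in the single-statistic case of Section~\ref{sect:cis} --- is precisely the covariances $\cov(\gamma^{t-1} r_t w_t, \gamma^{k-1} r_k w_k)$: Section~\ref{sect:counterexamples} shows these can be of the wrong sign, so any uniform bound must absorb $\binom{T}{2}$ of them, and Cauchy--Schwarz / AM--GM is the cheapest way to do so. The only piece requiring genuine care is the bookkeeping of the $\gamma$-exponent and the subtracted constant: one should AM--GM the bare products $r_t w_t\, r_k w_k$ (so the sum over $k$ produces $\sum_k \gamma^{k-1} \le T$ and leaves $\gamma^{t-1}$) and keep $\mathbb{E}[w_t]=1$ and $\mathbb{E}[\sis]=v^\pi$ explicit throughout, rather than loosely bounding $\var(\sis) \le \mathbb{E}[\sis^2]$ and losing the $-1$.
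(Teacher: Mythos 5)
Your overall strategy---absorb the $\binom{T}{2}$ cross terms at the cost of a factor $T$, then control each time step via $r_t\le 1$ and $\mathbb{E}[w_t]=1$ where $w_t=d^\pi_t(s_t,a_t)/d^\mu_t(s_t,a_t)$---is the same as the paper's, which expands $\var(\sis)$ into per-step variances plus covariances, bounds each covariance by Cauchy--Schwarz followed by AM--GM, and lands on $T\sum_t\gamma^{2t-2}\var(r_tw_t)$. The chain of inequalities you actually carry out is correct, but it establishes a different statement, namely $\var(\sis)\le T\sum_{t=1}^T\gamma^{t-1}\mathbb{E}[w_t^2]-(v^\pi)^2$, and this does not imply the theorem. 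Writing $S=\sum_{t=1}^T\gamma^{t-1}$, the theorem subtracts $TS$ whereas you subtract $(v^\pi)^2\le S^2\le TS$, so your bound exceeds the claimed one by $TS-(v^\pi)^2\ge 0$, strictly whenever $v^\pi<\sqrt{TS}$ (i.e.\ in essentially every non-degenerate instance).

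The point you defer---``the $-1$ comes from carrying $\mathbb{E}[w_t]=1$ through the per-step estimate $\var(\gamma^{t-1}r_tw_t)$''---is exactly the step that needs an argument, and the deferral does not close. The per-step route gives $\var(r_tw_t)=\mathbb{E}[r_t^2w_t^2]-(\mathbb{E}_\pi[r_t])^2$; after bounding $\mathbb{E}[r_t^2w_t^2]\le\mathbb{E}[w_t^2]$ you would still need $(\mathbb{E}_\pi[r_t])^2\ge 1$ to reach $\mathbb{E}[w_t^2]-1=\var(w_t)$, which fails whenever $\mathbb{E}_\pi[r_t]<1$. Indeed the generic inequality $\var(r_tw_t)\le\var(w_t)$ is false: take $r_t$ independent of $w_t$ with $r_t\sim\mathrm{Bernoulli}(3/4)$ and $\mathbb{E}[w_t]=1$, $\mathbb{E}[w_t^2]=3/2$; then $\var(r_tw_t)=\tfrac{9}{8}-\tfrac{9}{16}=\tfrac{9}{16}>\tfrac12=\var(w_t)$. (The paper's own proof asserts $\var(w_tr_t)\le\var(w_t)$ in a single unexplained line at the same point, so you have correctly located the delicate step, but neither your sketch nor that one-liner justifies it.) What both routes do cleanly deliver is the bound with $\mathbb{E}[w_t^2]$ in place of $\mathbb{E}[w_t^2]-1$, which is all that is used downstream in Corollary~\ref{cor:sis}; obtaining the ``$-1$'' as stated requires an additional argument or an extra hypothesis such as $\mathbb{E}_\pi[r_t]=1$.
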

The proof uses Cauchy-Schwartz to bound each covariance term. In this theorem, the left hand side, is very close to $O(T^2)$ but $\mathbb{E}\left[\left( \frac{d^{\pi}_t(s_t,a_t)}{d^{\mu}_t(s_t,a_t)} \right)^2\right]$ still depends on $t$ and is not constant. Intuitively, the assumption that the ratio of stationary distributions is bounded is enough for this to hold since $d^{\mu}_t$ and $d^{\pi}_t$ is close to $d^\mu$ and $d^\pi$ for large $t$. We formally show this idea in the next corollary. However, we first need to introduce a continuity definition for function sequences.
\begin{definition}[asymptotically equi-continuous]
\label{def:aec}
A function sequence $f_t: \mathbb{R}^d \mapsto \mathbb{R}$ is asymptotically equi-continuous if for ant $\epsilon > 0$ there exist $n, \delta > 0$ such that for all $t>n$ and $\text{dist}(x_1, x_2) \le \delta$, $|f_t(x_1) - f_t(x_2)| \le \epsilon$
\end{definition}

\begin{corollary}
\label{cor:sis}
If $d^\mu_t(s_t)$ and $d^\pi_t(s_t)$ are asymptotically equi-continuous, $\frac{d^\pi(s)}{d^\mu(s)} \le \wmax$, and $\frac{\pi(a|s)}{\mu(a|s)} \le \rhomax$, then $\var \left(\sis \right) = O(T^2)$
\end{corollary}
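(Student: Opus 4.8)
The plan is to combine Theorem~\ref{thm:sis_variance} with a control of the ratio of the $t$-step marginals against the ratio of the stationary marginals. Theorem~\ref{thm:sis_variance} already gives
$$\var(\sis) \le T \sum_{t=1}^T \gamma^{t-1}\left( \mathbb{E}\left[ \left(\frac{d^\pi_t(s_t,a_t)}{d^\mu_t(s_t,a_t)}\right)^2 \right] - 1 \right),$$
so it suffices to show that $\mathbb{E}_{(s_t,a_t)\sim d^\mu_t}\left[ (d^\pi_t(s_t,a_t)/d^\mu_t(s_t,a_t))^2 \right]$ is bounded by a constant uniformly in $t$. Note first that $\frac{d^\pi_t(s,a)}{d^\mu_t(s,a)} = \frac{d^\pi_t(s)}{d^\mu_t(s)} \cdot \frac{\pi(a|s)}{\mu(a|s)} \le \rhomax \cdot \frac{d^\pi_t(s)}{d^\mu_t(s)}$, using the factorization of the state-action marginal into the state marginal times the policy, so the problem reduces to bounding $\mathbb{E}_{s_t \sim d^\mu_t}[(d^\pi_t(s_t)/d^\mu_t(s_t))^2]$ by a constant for all $t$ large enough (the finitely many small-$t$ terms are harmless, as each is finite under the assumption $\pi/\mu \le \rhomax$ together with Harris ergodicity giving everywhere-positive marginals).

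The heart of the argument is to show $\sup_t \|d^\pi_t / d^\mu_t\|_\infty < \infty$, or at least that the ratio eventually stays below, say, $2\wmax + 1$. Here I would use the asymptotic equi-continuity of $d^\mu_t$ and $d^\pi_t$ together with Harris ergodicity. Harris ergodicity gives $d^\mu_t \to d^\mu$ and $d^\pi_t \to d^\pi$ pointwise (indeed in total variation), and $d^\pi/d^\mu \le \wmax$ by hypothesis. The danger is that even if the ratio of limits is bounded, the ratio of the $d^\mu_t, d^\pi_t$ could blow up at points where $d^\mu_t$ is small; asymptotic equi-continuity is exactly what rules this out. Concretely, fix $\epsilon > 0$; by asymptotic equi-continuity there is $n$ and $\delta$ so that for $t > n$ and $\mathrm{dist}(x_1,x_2)\le\delta$ we have $|d^\mu_t(x_1)-d^\mu_t(x_2)|\le\epsilon$ and likewise for $d^\pi_t$. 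On a compact state space, cover $\mathcal{S}$ by finitely many balls of radius $\delta/2$. On each ball, asymptotic equi-continuity means $d^\mu_t$ and $d^\pi_t$ are nearly constant (within $\epsilon$), so one can transfer the bound $d^\pi/d^\mu \le \wmax$ — which holds for the limits, hence within $\epsilon$ for $d^\pi_t,d^\mu_t$ once $t$ is also large enough that the marginals are $\epsilon$-close to their limits on a suitable net — from values to the whole ball, provided one handles the region where $d^\mu$ is small separately. I would argue: either $d^\mu_t(s) \ge$ some threshold $\eta$, in which case equi-continuity plus convergence bound $d^\pi_t(s)/d^\mu_t(s)$ by roughly $\wmax + O(\epsilon/\eta)$; or $d^\mu_t(s) < \eta$, in which case equi-continuity forces $d^\pi_t(s)$ to also be small (any nearby point where the chain has non-negligible mass would propagate, but compactness and continuity of the limits pin this down), so the ratio is still controlled. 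Making this dichotomy rigorous is the main obstacle: one must be careful that "$d^\mu$ small'' does not conspire with "$d^\pi$ not small.''

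The cleanest route around that obstacle is probably to note that since $d^\mu_t \to d^\mu$ uniformly on compacts (which follows from pointwise convergence plus asymptotic equi-continuity via an Arzel\`a--Ascoli / Dini-type argument on the compact state space) and similarly $d^\pi_t \to d^\pi$ uniformly, and since $d^\pi/d^\mu \le \wmax$ with $d^\mu$ bounded below away from $0$ on its support (using positive Harris recurrence and continuity), for $t$ large enough $d^\pi_t(s) \le (\wmax + \epsilon) d^\mu_t(s) + \epsilon$ uniformly; combined with the inevitable lower bound $d^\mu_t(s) \ge \eta_0 > 0$ for large $t$ on the relevant support, this yields $d^\pi_t/d^\mu_t \le \wmax + \epsilon + \epsilon/\eta_0 =: \wmax'$, a constant. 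Then $\mathbb{E}[(d^\pi_t(s_t,a_t)/d^\mu_t(s_t,a_t))^2] \le \rhomax^2 (\wmax')^2$ for all $t$ past some $t_0$, and the finitely many earlier terms contribute a finite constant $C_0$. Plugging into Theorem~\ref{thm:sis_variance} gives
$$\var(\sis) \le T\left( C_0 + \sum_{t=t_0}^T \gamma^{t-1}\bigl(\rhomax^2(\wmax')^2 - 1\bigr)\right) \le T\left(C_0 + T(\rhomax^2(\wmax')^2 - 1)\right) = O(T^2),$$
where I used $\gamma \le 1$ to bound $\sum_{t} \gamma^{t-1}$ by $T$ (if $\gamma < 1$ the bound is even $O(T)$, but $O(T^2)$ suffices). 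I expect the routine parts — the compactness covering, the reduction via the policy-ratio factorization, and the final summation — to go through smoothly; the delicate step worth spelling out is upgrading pointwise convergence of the marginals to the uniform ratio bound using asymptotic equi-continuity, i.e.\ ruling out the adversarial "small denominator'' configuration.
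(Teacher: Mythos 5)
Your overall skeleton matches the paper's: start from Theorem~\ref{thm:sis_variance}, factor $\frac{d^\pi_t(s,a)}{d^\mu_t(s,a)} = \frac{d^\pi_t(s)}{d^\mu_t(s)}\cdot\frac{\pi(a|s)}{\mu(a|s)}$, show the second moment $\mathbb{E}_{d^\mu_t}[(d^\pi_t/d^\mu_t)^2]$ is bounded uniformly for $t$ past some $T_0$, and sum. The divergence — and the gap — is in how you obtain that uniform bound. You aim for a pointwise sup-norm bound $\sup_s d^\pi_t(s)/d^\mu_t(s) \le \wmax'$ for large $t$, and the step that delivers it rests on the claim that $d^\mu$ is ``bounded below away from $0$ on its support.'' That does not follow from positive Harris recurrence or continuity: a continuous stationary density on a compact space can vanish on the boundary of its support (or decay to $0$ anywhere), and then your bound $d^\pi_t(s)/d^\mu_t(s) \le \wmax + \epsilon + \epsilon/\eta_0$ has no valid $\eta_0$. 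At points where $d^\mu_t(s)$ is of the same order as the uniform-convergence error $\epsilon$, the additive error in the numerator is not controlled by the denominator, and the ratio can be arbitrarily large even though the limiting ratio $d^\pi/d^\mu$ is $\le \wmax$ there. You correctly flag this as the delicate step, but the proposed resolution does not close it.

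The paper sidesteps the small-denominator problem entirely by never bounding the time-$t$ ratio pointwise. It instead proves (Lemma~\ref{lem:converge_stationary_ratio} in the appendix) that the \emph{integrated} quantity converges,
\begin{align*}
\mathbb{E}_{d^\mu_t}\left[\left(\tfrac{d^\pi_t}{d^\mu_t}\right)^2\right] = \int \frac{(d^\pi_t(s,a))^2}{d^\mu_t(s,a)}\,\dsda \;\longrightarrow\; \int \frac{(d^\pi(s,a))^2}{d^\mu(s,a)}\,\dsda \;\le\; \rhomax^2\wmax^2,
\end{align*}
where the right-hand side is bounded because the \emph{limiting} ratio is bounded by hypothesis (note $\int (d^\pi)^2/d^\mu = \int d^\pi\cdot(d^\pi/d^\mu) \le \wmax$ needs no lower bound on $d^\mu$, since regions where $d^\mu$ is tiny carry correspondingly tiny $d^\pi$-mass). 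The asymptotic equi-continuity is used only to upgrade convergence in distribution of $d^\mu_t, d^\pi_t$ to pointwise convergence of the densities (via a converse-Scheff\'e argument), feeding into an $L^1$ estimate of the difference of the two integrands; the time-$t$ ratio is never claimed to be uniformly bounded in sup norm. To repair your argument, replace the sup-norm claim with convergence of the second moment: eventually $\mathbb{E}_{d^\mu_t}[(d^\pi_t/d^\mu_t)^2] \le 2\rhomax^2\wmax^2$, and the finitely many early terms are finite. Your final summation step is then fine as written.
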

This corollary implies that as long as the stationary ratio and one step ratios are bounded, the variance of stationary IS is $O(T^2)$. This result is predicated on having access to an oracle of $d^\pi_t/d^\mu_t$ because our results characterizes the variance reduction due to \textit{conditioning} irrespective of the choice of estimators for $d^\pi_t/d^\mu_t$. For general case of approximating choice of the ratio $d^\pi_t/d^\mu_t$ by a function $\wt (s_t,a_t)$, we could show an plug-in type estimator and a variance upper bound based on the Theorem above.

Now we consider approximate SIS estimators, which approximate density ratio $\frac{d^{\pi}_t(s_t,a_t)}{d^{\mu}_t(s_t,a_t)}$ by $\wt(s_t,a_t)$ and plug it into the SIS estimator. More specifically, 
\begin{align}
    \siswithwt = \sum_{t=1}^T \gamma^{t-1}\wt(s,a) r_t
\end{align}
This approximate SIS estimator is often biased based on the choice of $\wt(s,a)$, so we consider the upper bound of their mean square error with respect to $T$ and the error of the ratio estimator. We can show the following bound under the same condition as the oracle ratio case:
\begin{corollary}
\label{cor:asis}
Under the same condition of Corollary \ref{cor:sis}, $\siswithwt$ with $\wt$ such that where $ \mathbb{E}_{\mu} \left(\wt(s_t,a_t) - \frac{d^{\pi}_t(s_t,a_t)}{d^{\mu}_t(s_t,a_t)} \right)^2 \le \mseofwt $ has a MSE of $O(T^2(1+\mseofwt))$
\end{corollary}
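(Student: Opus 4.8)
The plan is to bound the mean squared error of $\siswithwt$ by treating it as a perturbation of the unbiased oracle estimator $\sis$, since $\siswithwt$ is in general biased and the law of total variance no longer applies directly. Write $\delta_t \eqdef \wt(s_t,a_t) - \frac{d^\pi_t(s_t,a_t)}{d^\mu_t(s_t,a_t)}$ and $\Delta \eqdef \sum_{t=1}^T \gamma^{t-1}\delta_t r_t$, so $\siswithwt = \sis + \Delta$. Since $\mathbb{E}[\sis]=v^\pi$, expanding the square gives
\[
\text{MSE}(\siswithwt) = \var(\sis) + 2\,\mathbb{E}\!\left[(\sis - v^\pi)\Delta\right] + \mathbb{E}\!\left[\Delta^2\right].
\]
The first term is $O(T^2)$ by Corollary \ref{cor:sis}, so it remains to control the other two.

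For $\mathbb{E}[\Delta^2]$ I would reuse the $T$-term Cauchy--Schwarz step from the proof of Theorem \ref{thm:sis_variance}: using $\gamma \le 1$ and $r_t \in [0,1]$, $\Delta^2 \le T\sum_{t=1}^T \delta_t^2$, and since $\delta_t$ is a function of $(s_t,a_t)$ only, $\mathbb{E}[\delta_t^2] = \mathbb{E}_{d^\mu_t}[\delta_t^2] \le \mseofwt$ by hypothesis, so $\mathbb{E}[\Delta^2] \le T^2 \mseofwt$. For the cross term, Cauchy--Schwarz in $L^2$ combined with the two preceding bounds gives $\left|\mathbb{E}[(\sis - v^\pi)\Delta]\right| \le \sqrt{\var(\sis)\,\mathbb{E}[\Delta^2]} = O(T^2\sqrt{\mseofwt})$. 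Adding the three contributions and using $\sqrt{\mseofwt} \le 1 + \mseofwt$ yields $\text{MSE}(\siswithwt) = O\!\left(T^2(1+\mseofwt)\right)$, as claimed.

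The argument is largely bookkeeping: the hypotheses of Corollary \ref{cor:sis} (asymptotic equi-continuity of $d^\mu_t, d^\pi_t$ and boundedness of the stationary and one-step ratios) enter only through the bound $\var(\sis) = O(T^2)$. The main point requiring care --- and the closest thing to an obstacle --- is that the approximation-error model is only an averaged, per-time-step $L^2$ bound $\mathbb{E}_\mu(\,\cdot\,)^2 \le \mseofwt$ rather than a pointwise bound on $\delta_t$, so the estimate of $\mathbb{E}[\Delta^2]$ must be routed through the $T$-term Cauchy--Schwarz inequality and the $T$ per-step bounds must be summed (this is what produces one of the two factors of $T$), rather than pulling $\delta_t$ out of the sum. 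A secondary subtlety is that $\mseofwt$ need not be below $1$, which is why the final step absorbs $\sqrt{\mseofwt}$ and $\mseofwt$ into the single factor $1 + \mseofwt$.
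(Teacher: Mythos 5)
Your proposal is correct and follows essentially the same route as the paper: add and subtract the oracle estimator $\sis$, control the approximation term $\mathbb{E}[\Delta^2]$ via the $T$-term Cauchy--Schwarz inequality and the per-step $L^2$ bound to get $T^2\mseofwt$, and invoke Corollary \ref{cor:sis} for $\var(\sis)=O(T^2)$. The only (cosmetic) difference is that the paper dispatches the cross term with $(a+b)^2\le 2a^2+2b^2$ rather than expanding exactly and using Cauchy--Schwarz plus $\sqrt{\mseofwt}\le 1+\mseofwt$; both yield the same $O(T^2(1+\mseofwt))$ bound.
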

Different accuracy bound of $\wt$ result in estimators with different variance. Previous work~\cite{Xie2019} shows the existence of $\wt$ estimator with polynomial MSE. This bound match the dependency on the horizon $O(T^3)$ from~\cite{Xie2019} for an $O(T)$ accurate $\wt$, with our proof considers general spaces with samples coming from a Markov chain, and potentially works for more general choice of $\wt$. This result, along with the lower bound for variance of PDIS and IS, suggests that for long-horizon problems SIS reduces the variance significantly, from $\exp(T)$ to $O(T^2)$. Only when corollary \ref{cor:pdis_variance_upperbound} holds, which requires a much stronger assumption than this, PDIS yields $O(T^2)$ variance.

There might also be question on if SIS estimators can potentially achive a better error bound than $O(T^2)$. We demonstrate an example to show that for any $T$ even with an oracle of the ratio $\frac{d^{\pi}_t(s_t,a_t)}{d^{\mu}_t(s_t,a_t)}$ or $\frac{d^\pi(s)}{d^\mu(s)}$, there exist an MDP such that $\var(\sis)$ is at least $\Theta(T^2)$.

\begin{definition}
\label{def:two_lane_mdp}
Given any $T>3$, define an MDP and off-policy evaluation problem in the following way:
\begin{enumerate}[nolistsep]
    \item There are two actions $a_1$ and $a_2$ in initial state $s_0$, leading to $s_1$ and $s_2$ separately. After $s_1$, the agent will go through $s_3, s_5, \dots s_{2n-1}$ sequentially, no matter which action was taken in $s_1, s_3, s_5, \dots s_{2T-1}$. Similarly, $s_2$ leads to a chain of $s_4, s_6, \dots s_{2T}$.
    \item The reward for $s_0,a_1$ and any action on $s_1, s_3, s_5, \dots s_{2n-1}$ is one, and zero otherwise.
    \item Behavior policy is an uniformly random policy, gives $0.5$ probability to go through each of the chains. Evaluation policy will always choose $a_1$ which leads to the chain of $s_1$.
\end{enumerate}
\end{definition}
In this example, it is easy to verify that the distribution of SIS given oracle $\frac{d^{\pi}_t(s_t,a_t)}{d^{\mu}_t(s_t,a_t)}$ is a uniform distribution over $\{0, 2T\}$, and the variance is $T^2$.

% \begin{theorem}
% There exist an MDP such that 
% \end{theorem}

\section{Related Work}

This idea of substituting the importance ratios for their conditional expectations can be found in the thesis of \cite{Hesterberg1988} under the name \textit{conditional weights} and is presented as an instance of the conditional Monte Carlo method. Here instead, we consider the class of importance sampling estimators arising from the extended conditional Monte Carlo method and under a more general conditional independence assumption than that of \citep[p.48]{Hesterberg1988}. The ``conditional'' form of the per-decision and stationary estimators are also discussed in appendix A of \cite{Liu2018} where the authors hypothesize a potential connection to the more stringent concept of Rao-Blackwellization; our work shows that PDIS and SIS belong to the extended conditional Monte Carlo method and on which our conditional importance sampling framework is built.

The extended conditional Monte Carlo method is often attributed to \cite{Bratley1987}. \cite{Glasserman1993} studies the extended conditional Monte Carlo more generally under the name \textit{filtered Monte Carlo}. The sufficient condition for variance reduction in section \ref{sect:increasing_condition} is closely related to theorem 3.8 of \cite{Glasserman1993}, theorem 12 of \cite{Glynn1988}, the main theorem of \cite{Ross1988} on page 310 and exercise 2.6.3 of \cite{Bratley1987}. Our results in section \ref{sect:asymp_infinite} use elements of the proof techniques of \cite{glynn1996liapounov,glynn2019likelihood} but in the context of importance sampling for per-decision and stationary methods rather than for derivative estimation. The multiplicative structure of the importance sampling ratio in our setting renders impossible a direct application of those previous results to our setting.

Prior work has shown worst-case exponential lower bounds on the variance of IS and weighted IS \citep{jiang2015doubly,guo2017using}. On the upper bound side \citet[Lemma 1]{metelli2018policy} provides similar upper bound as our Theorem \ref{thm:is_variance}, but with one difference: our bound only use the second moment of one step ratio while theirs is the ratio of the whole trajectory. Additionally, we focus on the order of variance terms and derive lower bound and upper bound for the different estimators.. However, these results are derived with respect to specific MDPs while our Theorem \ref{thm:is_variance} provides general variance bounds. The recent work on stationary importance sampling~\cite{Hallak2017,Gelada2019,Liu2018} has prompted multiple further investigations. First, \cite{Xie2019} introduces the expression ``marginalized'' importance sampling to refer to the specific use of a stationary importance sampling estimator in conjunction with an estimate of an MDP model. This idea is related to both model-based reinforcement learning and the \textit{control variates} method for variance reduction \citep{Bratley1987,Lecuyer1994}; our work takes a different angle based on the extended conditional Monte Carlo. Our Corollary~\ref{cor:sis} about the variance of the stationary estimator matches their $O(T^2)$ dependency on the horizon but our result holds for general spaces and does not rely on having an estimate of the reward function.
 
\citet{voloshin2019empirical} also observed empirically that stationary importance sampling can yield a less accurate estimate than the crude importance sampling estimator or PDIS. Our analysis also considers how IS and PDIS might also vary in their accuracy, but focuses more broadly on building a theoretical understanding of those estimators and provide new variance bounds. Finally, parallel work by 
\citet{kallus2019efficiently} studies and analyzes incorporating control variates with stationary importance sampling by leveraging ideas of 
``double'' machine learning  \citep{kallus2019double,chernozhukov2016double} from semi-parametric inference. In contrast to that work, we provide a formal characterization of the variance of important sampling without control variates, and our results do not make the assumptions of a consistent value function estimator which is necessary for analysis by \citet{kallus2019efficiently}.

\section{Discussion}
Our analysis sheds new light on the commonly held belief that the stationary importance sampling estimators necessarily improve on their per-decision counterparts. As we show in section \ref{sect:counterexamples}, in short-horizon settings, there exist MDPs in which the stationary importance sampling estimator is provably worse than the per-decision one and both are worse than the crude importance sampling estimator. Furthermore, this increase in the variance occurs even if the stationary importance sampling ratio is given as oracle. To better understand this phenomenon, we establish a new connection between the per-decision and stationary estimators to the extended conditional Monte Carlo method. From this perspective, the potential lack of variance reduction is no longer surprising once we extend previous theoretical results from the simulation community to what we call ``conditional importance sampling''. This formalization help us derive sufficient conditions for variance reduction in theorems \ref{thm:episodic_pdis_variance} and \ref{thm:episodic_sis_variance} for the per-decision and stationary settings respectively. 

We then reconcile our theory with the known empirical success of stationary importance sampling through the theorems of section \ref{sect:asymp_infinite}. We show that under some assumptions, the intuition regarding PDIS and SIS does hold asymptotically and their variance can be polynomial in the horizon (corollary \ref{cor:pdis_variance_upperbound} and \ref{cor:sis} respectively) rather than exponential for the crude importance sampling estimator (theorem \ref{thm:is_variance}). Furthermore, we show through corollary \ref{cor:pdis_variance_lowerbound} and corollary \ref{cor:sis} that there exist conditions under which the variance of the stationary estimator is provably lower than the variance of the per-decision estimator. 

A natural next direction is exploring other statistics that better leverage the specific structure of an MDP, such as rewards, state abstractions, and find better conditional importance sampling estimator. Concurrent work \cite{rowland2019conditional} shows an interesting application of the reward conditioned estimator in online TD learning. However, in the batch setting, we prove that a reward conditioned estimator with a linear regression estimator yields an estimator that is equivalent to the vanilla IS estimator (See Appendix \ref{appendix:return_condition}). This interesting result highlights the subtle differences between online and batch settings. 
Exploring other statistics or lower bounds on this class of estimator is an interesting future direction.

In summary, the proposed framework of conditional importance sampling estimator both helps us understand existing estimators for batch off-policy policy evaluation and may lead to interesting future work by conditioning on different statistics. 

\section{Acknowledgements}
The authors would like to thank Peter Glynn and Pierre L'\'{E}cuyer for the useful discussions on variance reduction techniques and the conditional Monte Carlo method. This work was supported in part by an NSF CAREER award, an ONR Young Investigator Award and a research grant from Siemens.

\bibliographystyle{icml2020}
\bibliography{ref.bib}

\begin{thebibliography}{38}
\providecommand{\natexlab}[1]{#1}
\providecommand{\url}[1]{\texttt{#1}}
\expandafter\ifx\csname urlstyle\endcsname\relax
  \providecommand{\doi}[1]{doi: #1}\else
  \providecommand{\doi}{doi: \begingroup \urlstyle{rm}\Url}\fi

\bibitem[Azuma(1967)]{azuma1967weighted}
Azuma, K.
\newblock Weighted sums of certain dependent random variables.
\newblock \emph{Tohoku Mathematical Journal, Second Series}, 19\penalty0
  (3):\penalty0 357--367, 1967.

\bibitem[Boos et~al.(1985)]{boos1985converse}
Boos, D.~D. et~al.
\newblock A converse to scheffe's theorem.
\newblock \emph{The Annals of Statistics}, 13\penalty0 (1):\penalty0 423--427,
  1985.

\bibitem[Bratley et~al.(1987)Bratley, Fox, and Schrage]{Bratley1987}
Bratley, P., Fox, B.~L., and Schrage, L.~E.
\newblock \emph{A Guide to Simulation (2Nd Ed.)}.
\newblock Springer-Verlag, Berlin, Heidelberg, 1987.
\newblock ISBN 0-387-96467-3.

\bibitem[Breiman(1960)]{breiman1960strong}
Breiman, L.
\newblock The strong law of large numbers for a class of markov chains.
\newblock \emph{The Annals of Mathematical Statistics}, 31\penalty0
  (3):\penalty0 801--803, 1960.

\bibitem[Bucklew(2004)]{Bucklew2004}
Bucklew, J.~A.
\newblock \emph{Introduction to Rare Event Simulation}.
\newblock Springer New York, 2004.

\bibitem[Bucklew(2005)]{bucklew2005conditional}
Bucklew, J.~A.
\newblock Conditional importance sampling estimators.
\newblock \emph{IEEE transactions on information theory}, 51\penalty0
  (1):\penalty0 143--153, 2005.

\bibitem[Chernozhukov et~al.(2016)Chernozhukov, Chetverikov, Demirer, Duflo,
  Hansen, and Newey]{chernozhukov2016double}
Chernozhukov, V., Chetverikov, D., Demirer, M., Duflo, E., Hansen, C., and
  Newey, W.~K.
\newblock Double machine learning for treatment and causal parameters.
\newblock Technical report, cemmap working paper, 2016.

\bibitem[Dubi \& Horowitz(1979)Dubi and Horowitz]{Dubi1979}
Dubi, A. and Horowitz, Y.~S.
\newblock The interpretation of conditional monte carlo as a form of importance
  sampling.
\newblock \emph{SIAM Journal on Applied Mathematics}, 36\penalty0 (1):\penalty0
  115--122, 1979.

\bibitem[Gelada \& Bellemare(2019)Gelada and Bellemare]{Gelada2019}
Gelada, C. and Bellemare, M.~G.
\newblock Off-policy deep reinforcement learning by bootstrapping the covariate
  shift.
\newblock In \emph{The Thirty-Third {AAAI} Conference on Artificial
  Intelligence, {AAAI} 2019, Honolulu, Hawaii, USA, January 27 - February 1,
  2019.}, pp.\  3647--3655, 2019.

\bibitem[Glasserman(1993)]{Glasserman1993}
Glasserman, P.
\newblock Filtered monte carlo.
\newblock \emph{Mathematics of Operations Research}, 18\penalty0 (3):\penalty0
  610--634, August 1993.

\bibitem[Glynn(1994)]{Glynn1994}
Glynn, P.~W.
\newblock Importance sampling for markov chains: asymptotics for the variance.
\newblock \emph{Communications in Statistics. Stochastic Models}, 10\penalty0
  (4):\penalty0 701--717, January 1994.

\bibitem[Glynn \& Iglehart(1988)Glynn and Iglehart]{Glynn1988}
Glynn, P.~W. and Iglehart, D.~L.
\newblock Simulation methods for queues: An overview.
\newblock \emph{Queueing Systems}, 3\penalty0 (3):\penalty0 221--255, Sep 1988.

\bibitem[Glynn \& Olvera-Cravioto(2019)Glynn and
  Olvera-Cravioto]{glynn2019likelihood}
Glynn, P.~W. and Olvera-Cravioto, M.
\newblock Likelihood ratio gradient estimation for steady-state parameters.
\newblock \emph{Stochastic Systems}, 9\penalty0 (2):\penalty0 83--100, June
  2019.

\bibitem[Glynn et~al.(1996)Glynn, Meyn, et~al.]{glynn1996liapounov}
Glynn, P.~W., Meyn, S.~P., et~al.
\newblock A liapounov bound for solutions of the poisson equation.
\newblock \emph{The Annals of Probability}, 24\penalty0 (2):\penalty0 916--931,
  1996.

\bibitem[Granovsky(1981)]{Granovsky1981}
Granovsky, B.~L.
\newblock Optimal formulae of the conditional monte carlo.
\newblock \emph{{SIAM} Journal on Algebraic Discrete Methods}, 2\penalty0
  (3):\penalty0 289--294, September 1981.

\bibitem[Guo et~al.(2017)Guo, Thomas, and Brunskill]{guo2017using}
Guo, Z., Thomas, P.~S., and Brunskill, E.
\newblock Using options and covariance testing for long horizon off-policy
  policy evaluation.
\newblock In \emph{Advances in Neural Information Processing Systems}, pp.\
  2492--2501, 2017.

\bibitem[Hallak \& Mannor(2017)Hallak and Mannor]{Hallak2017}
Hallak, A. and Mannor, S.
\newblock Consistent on-line off-policy evaluation.
\newblock In \emph{Proceedings of the 34th International Conference on Machine
  Learning, {ICML} 2017, Sydney, NSW, Australia, 6-11 August 2017}, pp.\
  1372--1383, 2017.

\bibitem[Hammersley(1956)]{Hammersley1956}
Hammersley, J.~M.
\newblock Conditional monte carlo.
\newblock \emph{J. ACM}, 3\penalty0 (2):\penalty0 73--76, April 1956.
\newblock ISSN 0004-5411.

\bibitem[Hesterberg(1988)]{Hesterberg1988}
Hesterberg, T.~C.
\newblock \emph{Advances in Importance Sampling}.
\newblock PhD thesis, Stanford University, August 1988.

\bibitem[Jiang \& Li(2015)Jiang and Li]{jiang2015doubly}
Jiang, N. and Li, L.
\newblock Doubly robust off-policy value evaluation for reinforcement learning.
\newblock \emph{arXiv preprint arXiv:1511.03722}, 2015.

\bibitem[Jones et~al.(2004)]{jones2004markov}
Jones, G.~L. et~al.
\newblock On the markov chain central limit theorem.
\newblock \emph{Probability surveys}, 1\penalty0 (299-320):\penalty0 5--1,
  2004.

\bibitem[Kallus \& Uehara(2019{\natexlab{a}})Kallus and
  Uehara]{kallus2019double}
Kallus, N. and Uehara, M.
\newblock Double reinforcement learning for efficient off-policy evaluation in
  markov decision processes.
\newblock \emph{arXiv preprint arXiv:1908.08526}, 2019{\natexlab{a}}.

\bibitem[Kallus \& Uehara(2019{\natexlab{b}})Kallus and
  Uehara]{kallus2019efficiently}
Kallus, N. and Uehara, M.
\newblock Efficiently breaking the curse of horizon: Double reinforcement
  learning in infinite-horizon processes.
\newblock \emph{arXiv preprint arXiv:1909.05850}, 2019{\natexlab{b}}.

\bibitem[L'Ecuyer(1994)]{Lecuyer1994}
L'Ecuyer, P.
\newblock Efficiency improvement and variance reduction.
\newblock In \emph{Proceedings of the 26th Conference on Winter Simulation},
  WSC '94, pp.\  122--132, San Diego, CA, USA, 1994. Society for Computer
  Simulation International.
\newblock ISBN 0-7803-2109-X.

\bibitem[L'Ecuyer \& Tuffin(2008)L'Ecuyer and Tuffin]{LEcuyer2008}
L'Ecuyer, P. and Tuffin, B.
\newblock Approximate zero-variance simulation.
\newblock In \emph{2008 Winter Simulation Conference}. {IEEE}, December 2008.

\bibitem[L'Ecuyer et~al.(2009)L'Ecuyer, Mandjes, and Tuffin]{LEcuyer2009}
L'Ecuyer, P., Mandjes, M., and Tuffin, B.
\newblock Importance sampling in rare event simulation.
\newblock In \emph{Rare Event Simulation using Monte Carlo Methods}, pp.\
  17--38. John Wiley {\&} Sons, Ltd, March 2009.

\bibitem[Liu et~al.(2018)Liu, Li, Tang, and Zhou]{Liu2018}
Liu, Q., Li, L., Tang, Z., and Zhou, D.
\newblock Breaking the curse of horizon: Infinite-horizon off-policy
  estimation.
\newblock In \emph{Advances in Neural Information Processing Systems 31: Annual
  Conference on Neural Information Processing Systems 2018, NeurIPS 2018, 3-8
  December 2018, Montr{\'{e}}al, Canada.}, pp.\  5361--5371, 2018.

\bibitem[Metelli et~al.(2018)Metelli, Papini, Faccio, and
  Restelli]{metelli2018policy}
Metelli, A.~M., Papini, M., Faccio, F., and Restelli, M.
\newblock Policy optimization via importance sampling.
\newblock In \emph{Advances in Neural Information Processing Systems}, pp.\
  5442--5454, 2018.

\bibitem[Meyn \& Tweedie(2012)Meyn and Tweedie]{meyn2012markov}
Meyn, S.~P. and Tweedie, R.~L.
\newblock \emph{Markov chains and stochastic stability}.
\newblock Springer Science \& Business Media, 2012.

\bibitem[Peshkin \& Shelton(2002)Peshkin and Shelton]{peshkin2002learning}
Peshkin, L. and Shelton, C.~R.
\newblock Learning from scarce experience.
\newblock \emph{arXiv preprint cs/0204043}, 2002.

\bibitem[Precup et~al.(2000)Precup, Sutton, and Singh]{Precup2000}
Precup, D., Sutton, R.~S., and Singh, S.~P.
\newblock Eligibility traces for off-policy policy evaluation.
\newblock In \emph{Proceedings of the Seventeenth International Conference on
  Machine Learning {(ICML} 2000), Stanford University, Stanford, CA, USA, June
  29 - July 2, 2000}, pp.\  759--766, 2000.

\bibitem[Ross(1988)]{Ross1988}
Ross, S.~M.
\newblock Simulating average delay{\textendash}variance reduction by
  conditioning.
\newblock \emph{Probability in the Engineering and Informational Sciences},
  2\penalty0 (3):\penalty0 309--312, July 1988.

\bibitem[Rowland et~al.(2020)Rowland, Harutyunyan, van Hasselt, Borsa, Schaul,
  Munos, and Dabney]{rowland2019conditional}
Rowland, M., Harutyunyan, A., van Hasselt, H., Borsa, D., Schaul, T., Munos,
  R., and Dabney, W.
\newblock Conditional importance sampling for off-policy learning.
\newblock \emph{AISTATS}, 2020.

\bibitem[Rubinstein(1981)]{Rubinstein1981}
Rubinstein, R.~Y.
\newblock \emph{Simulation and the Monte Carlo Method}.
\newblock John Wiley \& Sons, Inc., New York, NY, USA, 1st edition, 1981.
\newblock ISBN 0471089176.

\bibitem[Srinivasan(1998)]{Srinivasan1998}
Srinivasan, R.
\newblock Some results in importance sampling and an application to detection.
\newblock \emph{Signal Processing}, 65\penalty0 (1):\penalty0 73--88, February
  1998.

\bibitem[Sutton \& Barto(2018)Sutton and Barto]{SuttonBarto2018}
Sutton, R. and Barto, A.
\newblock \emph{Reinforcement Learning: An Introduction}.
\newblock Adaptive Computation and Machine Learning series. MIT Press, 2018.
\newblock ISBN 9780262039246.

\bibitem[Voloshin et~al.(2019)Voloshin, Le, and Yue]{voloshin2019empirical}
Voloshin, C., Le, H.~M., and Yue, Y.
\newblock Empirical analysis of off-policy policy evaluation for reinforcement
  learning.
\newblock \emph{Real-world Sequential Decision Making Workshop at ICML 2019},
  2019.

\bibitem[Xie et~al.(2019)Xie, Ma, and Wang]{Xie2019}
Xie, T., Ma, Y., and Wang, Y.
\newblock Optimal off-policy evaluation for reinforcement learning with
  marginalized importance sampling.
\newblock In \emph{Advances in Neural Information Processing Systems 32: Annual
  Conference on Neural Information Processing Systems 2019}, 2019.

\end{thebibliography}

\newpage
\onecolumn
\appendix
\section{Details of Couterexamples}
\label{appd:counterexamples_details}
In this section we provide details of computing the variance in Figure \ref{fig:counterexamples}. For each MDP, there are totally four possible trajectories (product of two actions and two steps), and the probabilities of them under behavior policy are all $1/4$. We list the return of different estimators for those four trajectories, then compute the variance of the estimators.  

\begin{table}[h]
\begin{center}
\begin{tabular}{ll|lll|lll|lll}
\toprule
\multicolumn{1}{c}{} &
\multicolumn{1}{c}{Probabilities} &
\multicolumn{3}{c}{Example \ref{fig:example1}} &
\multicolumn{3}{c}{Example \ref{fig:example2}} &
\multicolumn{3}{c}{Example \ref{fig:example3}} \\
            &of path        &IS     &PDIS   &SIS    &IS     &PDIS   &SIS    &IS     &PDIS   &SIS    \\
\midrule
 $a_1,a_1$  &0.25   &1.44   &1.2    &1.2    &0      &0      &0      &0      &0      &0  \\
 $a_1,a_2$  &0.25   &1.92   &2.16   &2.0    &1.44   &1.44   &1.2    &0.96   &0.96   &0.8\\
 $a_2,a_1$  &0.25   &0.96   &0.8    &0.8    &0.64   &0.8    &0.8    &0.96   &0.8    &0.8 \\
 $a_2,a_2$  &0.25   &1.28   &1.44   &1.6    &1.92   &1.76   &2.0    &1.28   &1.44   &1.6 \\
 \midrule 
 Expectation&   &1.4    &1.4    &1.4    &1      &1      &1      &0.8    &0.8    &0.8  \\
 Variance   &   &0.12   &0.2448 &0.2    &0.5424 &0.4528 &0.52   &0.2304 &0.2688 &0.32  \\
\bottomrule
\end{tabular}
\end{center}
\caption{Importance sampling returns and the variance.
See figure \ref{fig:counterexamples} for the problem structure.}
\end{table}

\section{Proof of Lemma \ref{lem:conditional_expectation_given_sa}}
\begin{proof}
In this proof, we use $\tau$ to denote the trajectory without reward: $\tau_{1:t} = \{s_k,a_k\}_{k=1}^t$. Since $\mathbb{E} (\rho_{1:t}|s_t,a_t) = \mathbb{E} (\rho_{1:t-1}|s_t,a_t)\rho_t$, we only need to prove that $\mathbb{E} (\rho_{1:t-1}|s_t,a_t) = \frac{d^\pi(s_t)}{d^\mu(s_t)}$.
\begin{align}
    \mathbb{E} (\rho_{1:t-1}|s_t,a_t) =& \int \prod_{k=1}^{t-1} \frac{\pi(s_k,a_k)}{\mu(s_k,a_k)} p_\mu(\tau_{1:t-1}|s_t,a_t) \text{d}\tau_{1:t-1} \\
    =& \int  \frac{ p_\pi(\tau_{1:t-1}) }{ p_\mu(\tau_{1:t-1}) } p_\mu(\tau_{1:t-1}|s_t,a_t) \text{d}\tau_{1:t-1} \\
    =& \int  \frac{ p_\pi(\tau_{1:t-1}) }{ p_\mu(\tau_{1:t-1}) } \frac{p_\mu(\tau_{1:t-1}) p(s_t|\tau_{1:t-1})\mu(a_t|s_t)}{p_\mu(s_t,a_t)} \text{d}\tau_{1:t-1} \\
    =& \int  \frac{ p_\pi(\tau_{1:t-1}) }{ p_\mu(\tau_{1:t-1}) } \frac{p_\mu(\tau_{1:t-1}) p(s_t|s_{t-1},a_{t-1})\mu(a_t|s_t)}{d^\mu_t(s_t)\mu(a_t|s_t)} \text{d}\tau_{1:t-1} \\
    =& \frac{1}{d^\mu_t(s_t)}\int p(s_t|s_{t-1},a_{t-1}) p_\pi(\tau_{1:t-1}) \text{d}\tau_{1:t-1} \\
    =& \frac{1}{d^\mu_t(s_t)}\int p(s_t|\tau_{1:t-1}) p_\pi(\tau_{1:t-1}) \text{d}\tau_{1:t-1} \\
    =& \frac{d^\pi_t(s_t)}{d^\mu_t(s_t)}
\end{align}
\end{proof}

\section{Proofs for Finite Horizon Case}

\subsection{Proof of Lemma \ref{lem:variance_of_sum}}
\begin{proof}
Since
$
    \mathbb{E}  \left( \sum_t \mathbb{E}(Y_t|X_t) \right)  = \mathbb{E} \left(\sum_t Y_t\right)
$,
we just need to compute the difference between the second moment of $\sum_t Y_t$ and $\sum_t \mathbb{E}(Y_t|X_t)$:
\begin{align}
    \mathbb{E}  \left( \sum_t \mathbb{E}(Y_t|X_t) \right)^2 =& \mathbb{E}  \left( \sum_t \left(\mathbb{E}(Y_t|X_t)\right)^2 + 2\sum_{t<k} \mathbb{E}(Y_t|X_t) \mathbb{E}(Y_k|X_k) \right) \\
    =& \sum_t \mathbb{E} \left(\mathbb{E}(Y_t|X_t)\right)^2 + 2\sum_{t<k} \mathbb{E}(\mathbb{E}(Y_t|X_t) \mathbb{E}(Y_k|X_k)) \\
    \le&  \sum_t \mathbb{E} \left(\mathbb{E}(Y_t^2|X_t)\right) + 2\sum_{t<k} \mathbb{E}(\mathbb{E}(Y_t|X_t) \mathbb{E}(Y_k|X_k)) \\
    =& \sum_t \mathbb{E} (Y_t^2)+ 2\sum_{t<k} \mathbb{E}(\mathbb{E}(Y_t|X_t) \mathbb{E}(Y_k|X_k))
\end{align}
\begin{align}
    \mathbb{E} \left(\sum_t Y_t\right) ^2 =& \mathbb{E} \left(\sum_t Y_t^2 + 2\sum_{t<k} Y_t Y_k  \right) \\
    =& \sum_t \mathbb{E} (Y_t^2) +2  \sum_{t<k} \mathbb{E}(Y_t Y_k) 
\end{align}
Thus we finished the proof by taking the difference between $\mathbb{E} \left(\sum_t Y_t\right) ^2$ and $\mathbb{E}  \left( \sum_t \mathbb{E}(Y_t|X_t) \right)^2$.
\end{proof}

\subsection{Proof of Theorem \ref{thm:episodic_pdis_variance}}
\begin{proof}
Let $\tau_{1:t}$ be the first $t$steps in a trajectory:  $(s_1,a_1,r_1,\dots,s_t,a_t,r_t)$, then $\rho_{1:t}r_t = \mathbb{E}(\rho_{1:T}r_t|\tau_{1:t})$. To prove the inequality between the variance of importance sampling and per decision importance sampling, we apply Lemma \ref{lem:variance_of_sum} to the variance, letting $Y_t = r_t \rho_{1:T}$ and $X_t = \tau_{1:t}$. Then it is sufficient to show that for any $1 \le t < k \le T$, 
\begin{align}
    \mathbb{E}(r_t r_k \rho_{1:T} \rho_{1:T}) = \mathbb{E}(Y_t Y_k) \ge  \mathbb{E}(\mathbb{E}(Y_t|X_t) \mathbb{E}(Y_k|X_k)) = \mathbb{E}(r_t r_k \rho_{1:t} \rho_{1:k})
\end{align}
To prove that, it is sufficient to show $\mathbb{E}(r_t r_k \rho_{1:T} \rho_{1:T}|\tau_{1:t}) \ge \mathbb{E}(r_t r_k \rho_{1:t} \rho_{1:k}|\tau_{1:t})$. Since
\begin{align}
    \mathbb{E}\left(r_t r_k \rho_{1:t} \rho_{1:k}|\tau_{1:t}\right) &= r_t \rho_{1:t}^2 \mathbb{E}\left( r_k \rho_{t+1:k}|\tau_{1:t}\right) \\
    &= r_t \rho_{1:t}^2 \mathbb{E}\left( r_k \rho_{t+1:T}|\tau_{1:t}\right) \\
    &= r_t \rho_{1:t}^2 \mathbb{E}\left( r_k \rho_{t+1:T}|\tau_{1:t}\right) \mathbb{E}\left( \rho_{t+1:T}|\tau_{1:t}\right)\\
\end{align}
Given $\tau_{1:t}$, $r_k$ and $\rho_{t+1:T}$ can be viewed as $r_{k-t+1}$ and $\rho_{1:T-t+1}$ on a new trajectory. Then according to the statement of theorem, $r_{k-t+1}\rho_{1:T-t+1}$ and $\rho_{1:T-t+1}$ are are positively correlated. Now we can upper bound $\mathbb{E}\left(r_t r_k \rho_{1:t} \rho_{1:k}|\tau_{1:t}\right)$ by:
\begin{align}
    r_t \rho_{1:t}^2 \mathbb{E}\left( r_k \rho_{t+1:T}|\tau_{1:t}\right) \mathbb{E}\left( \rho_{t+1:T}|\tau_{1:t}\right) &\le r_t \rho_{1:t}^2 \mathbb{E}\left( r_k \rho_{t+1:T} \rho_{t+1:T}|\tau_{1:t}\right) \\
    &= \mathbb{E}\left( r_k r_t \rho_{1:T} \rho_{1:T}|\tau_{1:t}\right) 
\end{align}
This implies $\mathbb{E}(r_t r_k \rho_{1:T} \rho_{1:T}) \ge \mathbb{E}(r_t r_k \rho_{1:t} \rho_{1:k})$ by taking expectation over $\tau_{1:t}$, and finish the proof.
\end{proof}

\subsection{Proof of Theorem \ref{thm:episodic_sis_variance}}
\begin{proof}
Using lemma \ref{lem:variance_of_sum} by $Y_t = \rho_{1:t}r_t$ and $X_t = s_t,a_t,r_t$ , we have that the variance of \sis~ is smaller than the variance of \pdis~ if for any $t<k$: 
\begin{align}
    \mathbb{E}\left[ \rho_{1:t} \rho_{0:k} r_t r_k \right] \ge& \mathbb{E}\left[ \mathbb{E}(\rho_{1:t}|s_t,a_t) \mathbb{E}(\rho_{0:k}|s_k,a_k) r_t r_k \right] \\
    =& \mathbb{E}\left[ \frac{d^\pi_t(s,a)}{d^\mu_t(s,a)} \frac{d^\pi_k(s,a)}{d^\mu_k(s,a)} r_t r_k \right]
\end{align}
The second line follows from Lemma \ref{lem:conditional_expectation_given_sa} to simplify $\mathbb{E}(\rho_{0:t}|s_t,a_t)$. 
To show that, we will transform the above equation into a an expression about two covariances. To proceed
we subtracting $\mathbb{E}(\rho_{1:t} r_t) \mathbb{E}(\rho_{1:k} r_k)$ from both sides, and note that the resulting left hand side is simply the covariance:
\begin{align}
    \cov \left[ \rho_{1:t} r_t,  \rho_{0:k}r_k \right] \nonumber =& \mathbb{E}\left[ \rho_{1:t} \rho_{1:k} r_t r_k \right] - \mathbb{E}(\rho_{1:t} r_t) \mathbb{E}(\rho_{1:k} r_k) \nonumber \\
    \ge& \mathbb{E}\left[ \frac{d^\pi_t(s,a)}{d^\mu_t(s,a)} \frac{d^\pi_k(s,a)}{d^\mu_k(s,a)} r_t r_k \right] - \mathbb{E}(\rho_{1:t} r_t) \mathbb{E}(\rho_{1:k} r_k)
    \label{eq:episodic_sis_proof_line}
\end{align}
We now expand the second term in the right hand side
\begin{align}
    \mathbb{E}(\rho_{1:t} r_t) \mathbb{E}(\rho_{1:k} r_k) = & \mathbb{E}(r_t\mathbb{E}(\rho_{1:t}|s_t,a_t))\mathbb{E}( r_k\mathbb{E}(\rho_{1:k}|s_k,a_k)) \\
    = & \mathbb{E}\left[ \frac{d^\pi_t(s,a)}{d^\mu_t(s,a)}  r_t  \right] \mathbb{E}\left[\frac{d^\pi_k(s,a)}{d^\mu_k(s,a)}  r_k \right] 
\end{align}
This shows that both sides of 
\ref{eq:episodic_sis_proof_line} are covariances. The result then follows under the assumption of the proof. 
\end{proof}

\section{Proofs for infinite horizon case}
\subsection{Proof of Theorem \ref{thm:likelihood_ratio_distr}}
\begin{proof}
We can write the log of likelihood ratio as sum of random variables on a Markov chain,
\begin{align}
    \log \rho_{1:T} = \sum_{t=1}^T \log \rho_t = \sum_{t=1}^T \log \left( \frac{\pi(a_t|s_t)}{\mu(a_t|s_t)} \right)
\end{align}
By the strong law of large number on Markov chain \cite{breiman1960strong}:
\begin{align}
 \frac{1}{T} \log \rho_{1:T} = \frac{1}{T} \sum_{i=1}^T \log \left( \frac{\pi(a_i|s_i)}{\mu(a_i|s_i)} \right) \rightarrow_{a.s.} \mathbb{E}_{d^\mu} \log \left( \frac{\pi(a_i|s_i)}{\mu(a_i|s_i)} \right) = -c
\end{align}
If $\pi \neq \mu$, the strict concavity of $\log$ function implies that:
\begin{align}
    c = \mathbb{E}_{d^\mu} \log \left( \frac{\pi(a|s)}{\mu(a|s)} \right) < \log \mathbb{E}_{d^\mu} \left( \frac{\pi(a|s)}{\mu(a|s)} \right)  = 0
\end{align}
Thus $\frac{1}{T} \log \rho_{1:T} \to_{a.s.} c $ and $\rho_{1:T}^{1/T} \to_{a.s.} e^{-c} $. Since $r_t \le 1$, $| \rho_{1:T} \sum_{t=1}^T \gamma^{t-1} r_t |^{1/T} \le  \rho_{1:T}^{1/T} T^{1/T}$. Since $T^{1/T} \to 1$, $\overline{\lim}_{T \to \infty} | \rho_{1:T} \sum_{t=1}^T \gamma^{t-1} r_t |^{1/T} < e^{-c}$.
\end{proof}

\subsection{Proof of Corollary \ref{cor:likelihood_ratio_distr}}
\begin{proof}
$\rho_{1:T} \rightarrow_{a.s.} 0$ directly follows from $ \rho_{1:T}^{1/T} \rightarrow_{a.s.} e^{-c}$ in Theorem \ref{thm:likelihood_ratio_distr}. For $\rho_{1:T} \sum_{t=1}^T \gamma^{t-1}r_t$, if there exist $\epsilon > 0$ such that $\rho_{1:T} \sum_{t=1}^T \gamma^{t-1}r_t > \epsilon$ for any T, then:
\begin{align*}
   \overline{\lim}_{T \to \infty} \left| \rho_{1:T} \sum_{t=1}^T \gamma^{t-1}r_t \right|^{1/T} \ge \overline{\lim}_{T \to \infty} \epsilon^{1/T} = 1
\end{align*}
This contradicts $e^{-c} > \overline{\lim}_{T}| \rho_{1:T} \sum_{t=1}^T \gamma^{t-1}r_t |^{1/T}$ So $\overline{\lim}_{T} \rho_{1:T} \sum_{t=1}^T \gamma^{t-1}r_t \le 0$, which implies that $\rho_{1:T} \sum_{t=1}^T \gamma^{t-1}r_t \to_{a.s.} 0$
\end{proof}

\subsection{Proof of Lemma \ref{lem:likelihood_ratio_martingale}}
\begin{proof}
Let $f(s,a) = \log \frac{\pi(s,a)}{\mu(s,a)}$. According to Assumption \ref{assum:bounded_ratio}, $|f(s,a)| < \infty$. Since $B(s,a) \ge 1$, $\frac{|f(s,a)|}{\sqrt{B(s,a)}} < \infty$. Since $f^2$ and $B$ are both finite, $\mathbb{E}_{d^\mu}f^2 < \infty$ and $\mathbb{E}_{d^\mu}B < \infty$. Now we satisfy the condition of Lemma 3 in \cite{glynn2019likelihood}: in the proof of Lemma 3 in in \cite{glynn2019likelihood} they used their Assumption i) Harris Chain, which is our Assumption \ref{assum:harris}, their Assumption vii) $||f||_sqrt{V}$ bounded (whic is satisfied by our bound on B in Assumption \ref{assum:drift}), which is explained by f is bounded and $\sqrt{B} >= 1$, and finally their assumption iv), which is our assumption \ref{assum:drift}. The only difference is we assume a ``petite'' K which is a slight generalization of the ``small'' set K (See discussion in \citep[Section 5]{meyn2012markov}). The proof in Meyn and Glynn 1996 also used petite (which is the part where Glynn and Olvera-Cravioto need assumption iv)). This assumption (drift condition) is often necessary for quantitative analysis of general state Markov Chains. The geometric ergodicity for general state MC is also defined with a petite/small set. By Thm 15.0.1 in Meyn and Tweedie the drift property is equivalent to geometric ergodicity. According to Lemma 3 in \cite{glynn2019likelihood}, whose proof is similar with Theorem 2.3 in \cite{glynn1996liapounov}, we have that there exist a solution $\hat{f}$ to the following Poisson's equation:
\begin{align}
    \hat{f}(s,a) - \mathbb{E}_{\cdot|s,a}\hat{f}(s',a') = f(s,a) - \mathbb{E}_{d^\mu} f(s,a)
\end{align}
satisfying $|\hat{f}(s,a)| < c_1\sqrt{B(s,a)}$ for some constant $c_1$. Following from the Poisson's equation we have:
\begin{align}
    \log \rho_{1:T} + Tc =& \sum_{t=1}^T \left(f(s_t,a_t) - \mathbb{E}_{d^\mu} f(s,a)\right) \\
    =&\sum_{t=1}^T \left( \hat{f}(s_t,a_t) - \mathbb{E}_{s',a'|s_t,a_t}\hat{f}(s',a') \right) \\
    =& \hat{f}(s_1,a_1) - \hat{f}(s_{T+1},a_{T+1})  + \sum_{t=2}^{T+1} \left( \hat{f}(s_t,a_t) - \mathbb{E}_{s',a'|s_{t-1},a_{t-1}}\hat{f}(s',a') \right) 
\end{align}
$\left( \hat{f}(s_t,a_t) - \mathbb{E}_{s',a'|s_{t-1},a_{t-1}}\hat{f}(s',a') \right)$ are martingale differences. The absolute value of difference is upper bounded by $2\|\hat{f}\|_\infty \le 2c_1\sqrt{\|B\|_\infty}$.
\end{proof}

\subsection{Proof of Theorem \ref{thm:is_variance}}
\begin{lemma}
\label{lem:prod_of_ratio} If $\mathbb{E}_{\mu}[\rho^2|s] \le M_{\rho}^2$ for any $s$, $\mathbb{E}\left[ \rho_{0:k}^2 \right] \le  M_{\rho}^{2k} $
\end{lemma}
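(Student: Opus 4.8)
The plan is to proceed by induction on $k$, peeling off one step of the trajectory at a time and conditioning on the state just before that step. The base case $k=0$ is trivial since $\rho_{0:0}$ should be read as the empty product, equal to $1$, and $M_\rho^0 = 1$. For the inductive step, I would write $\rho_{0:k} = \rho_{0:k-1}\,\rho_k$ and take expectations, using the tower property to condition on the $\sigma$-algebra $\mathcal{F}_{k-1}$ generated by the first $k-1$ steps of the trajectory, equivalently on $(s_1,a_1,\dots,s_{k-1},a_{k-1},s_k)$.

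The key computation is
\begin{align*}
\mathbb{E}\left[\rho_{0:k}^2\right]
&= \mathbb{E}\left[\rho_{0:k-1}^2\, \rho_k^2\right]
= \mathbb{E}\left[\rho_{0:k-1}^2\, \mathbb{E}\left[\rho_k^2 \mid s_k\right]\right] \\
&\le M_\rho^2\, \mathbb{E}\left[\rho_{0:k-1}^2\right]
\le M_\rho^2 \cdot M_\rho^{2(k-1)} = M_\rho^{2k},
\end{align*}
where the first inequality uses the hypothesis $\mathbb{E}_\mu[\rho^2 \mid s] \le M_\rho^2$ (note that $\rho_k = \pi(a_k|s_k)/\mu(a_k|s_k)$ depends only on $(s_k,a_k)$, and conditioned on $s_k$ the action $a_k$ is drawn from $\mu(\cdot|s_k)$, while $\rho_{0:k-1}$ is $\mathcal{F}_{k-1}$-measurable hence may be pulled out), and the second inequality is the induction hypothesis.

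The only subtlety — and thus the main thing to be careful about rather than a genuine obstacle — is the conditioning structure: one must verify that $\mathbb{E}[\rho_k^2 \mid s_1,a_1,\dots,s_{k-1},a_{k-1},s_k] = \mathbb{E}_{a_k \sim \mu(\cdot|s_k)}[\rho_k^2 \mid s_k]$, which follows from the Markov/sequential structure of the trajectory distribution under $\mu$ (the action $a_k$ depends on the past only through $s_k$), and then that $\rho_{0:k-1}^2$ can be factored out of the inner conditional expectation because it is measurable with respect to the conditioning variables. Both are routine consequences of the problem setup in Section 2. With those in hand, the induction closes immediately.
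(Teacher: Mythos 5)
Your proof is correct and follows essentially the same route as the paper's: peel off the last factor $\rho_k^2$, condition on the history, pull out the $\mathcal{F}_{k-1}$-measurable product, bound the conditional second moment by $M_\rho^2$, and iterate (the paper writes the iteration with ``$\dots$'' where you phrase it as induction). If anything, your version is slightly more careful about the conditioning — you explicitly include $s_k$ in the conditioning set so the hypothesis $\mathbb{E}_\mu[\rho^2\mid s]\le M_\rho^2$ applies directly, a point the paper glosses over.
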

\begin{proof}
\begin{align}
    \mathbb{E}\left[ \rho_{0:t}^2 \right] &= \mathbb{E} \left[ \prod_{i=1}^{k}\rho_i^2 \right]  \\
    &= \mathbb{E} \left[\left( \prod_{i=1}^{k-1}\rho_i^2\right) \mathbb{E}_{s_k,a_k} \left[ \rho_k^2 | s_1,a_1,s_2,\dots,s_{k-1},a_{k-1} \right] \right] \\
    &\le \mathbb{E} \left[\left( \prod_{i=1}^{k-1}\rho_i^2 \right) M_{\rho}^2 \right] \\
    &= M_{\rho}^2 \mathbb{E} \left[ \prod_{i=1}^{k-1}\rho_i^2 \right] \\
    & \dots\\
    &= M_{\rho}^{2k}
\end{align}
\end{proof}

\begin{proof}
Define $Y = \rho_{1:T} \sum_{t=1}^T \gamma^{t-1} r_t$ and $Z = \mathds{1}(Y > v^{\pi}/2)$, then $v^{\pi} = \mathbb{E}(Y)$. By the law of total variance, 
\begin{align}
    \var(Y) &= \var(\mathbb{E}(Y|Z)) + \mathbb{E}(\var(Y|Z)) \\
    &\ge \var(\mathbb{E}(Y|Z)) \\
    &= \mathbb{E}(\mathbb{E}(Y|Z))^2 - (v^{\pi})^2 \\
    &\ge \Pr(Y > v^{\pi}/2) (\mathbb{E}(Y|Y>v^{\pi}/2))^2 - (v^{\pi})^2 
    \label{eq:variance_lb}
\end{align}
Now we are going to lower bound $\mathbb{E}(Y|Y>v^\pi/2)$. We can rewrite $\mathbb{E}(Y)=v^\pi$ as:
\begin{align}
    v^{\pi} =& \mathbb{E}(Y) = \mathbb{E}(\mathbb{E}(Y|Z))\\
    =& \Pr(Y > v^{\pi}/2)\mathbb{E}(Y|Y>v^{\pi}/2)  +  \Pr(Y \le v^{\pi}/2)\mathbb{E}(Y|Y\le v^{\pi}/2) \\
    \le&  \Pr(Y > v^{\pi}/2)\mathbb{E}(Y|Y>v^{\pi}/2) + 1 \times v^{\pi}/2
\end{align}
So $\mathbb{E}(Y|Y>v^{\pi}/2) \ge \frac{v^{\pi}}{2\Pr(Y > v^{\pi}/2)}$. Substitute this into the RHS of Equation \ref{eq:variance_lb}:
\begin{align}
     \var(Y) \ge \frac{(v^{\pi})^2}{4\Pr(Y > v^{\pi}/2)} - (v^{\pi})^2
\end{align}
Now we are going to upper bound $\Pr(Y > v^{\pi}/2)$. Recall that we define $c = \mathbb{E}_{d^\mu}D_{\text{KL}}(\mu||\pi) =- \mathbb{E}_{d^\mu} \log \left( \frac{\pi(a|s)}{\mu(a|s)} \right)$. Now we define $c(T) = - \mathbb{E}_{d^\mu_{1:T}} \log \left( \frac{\pi(a|s)}{\mu(a|s)} \right) = - \frac{1}{T}\mathbb{E}_{\mu} [\log \rho_{1:T}]$.
\begin{align}
    & \Pr(Y > v^\pi/2) \\
    =& \Pr( \rho_{1:T} \sum_{t=1}^T \gamma^{t-1} r_t  > v^\pi/2) \le \Pr( \rho_{1:T} T > v^\pi/2) \\
    =& \Pr\left( \rho_{1:T} > \frac{v^\pi}{2T}\right) \\
    =& \Pr\left( \log \rho_{1:T} > \log v^\pi -\log(2T)\right) \\
    =& \Pr\left( \frac{ \log \rho_{1:T}}{T} > \frac{\log v^\pi-\log(2T)}{T} \right) \\
    =& \Pr\left( \frac{\log \rho_{1:T}}{T} + c + \frac{\hat{f}(s_{T+1},a_{T+1}) - \hat{f}(s_1,a_1)}{T} > c  + \frac{\log v^\pi - \log(2T) + \hat{f}(s_{T+1},a_{T+1}) - \hat{f}(s_1,a_1)}{T}\right)
\end{align}
Since $\log v^\pi$ is a constant, $\hat{f}(s_{T+1},a_{T+1}) - \hat{f}(s_1,a_1)$ could be upper bounded by constant $2c_1\sqrt{\|B\|_\infty}$, and $\lim_{T\to \infty} \frac{\log(2T)}{T} = 0 $, we know that $\lim_{T\to \infty}\frac{\log v^\pi - \log(2T) + \hat{f}(s_{T+1},a_{T+1}) - \hat{f}(s_1,a_1)}{T} = 0$. So there exists a constant $T_0 > 0$ such that for all $T > T_0$, 
\begin{align*}
   \frac{\log v^\pi - \log(2T) + \hat{f}(s_{T+1},a_{T+1}) - \hat{f}(s_1,a_1)}{T} > -\frac{c}{2}
\end{align*}
Therefore for all $T>T_0$:
\begin{align*}
    \Pr(Y > v^\pi/2) \le \Pr\left( \frac{\log \rho_{1:T}}{T} + c + \frac{\hat{f}(s_{T+1},a_{T+1}) - \hat{f}(s_1,a_1)}{T} > c/2 \right)
\end{align*}
According to Lemma \ref{lem:likelihood_ratio_martingale}, and Azuma's inequality\cite{azuma1967weighted}, we have:
\begin{align*}
    \Pr(Y > v^\pi/2) \le \exp \left( \frac{-Tc^2}{8 c_1^2 \| B \|_\infty} \right)
\end{align*}
Thus we can lower bound the variance of importance sampling estimator $Y$:
\begin{align}
    \var(Y) \ge \frac{(v^\pi)^2}{4} \exp \left( \frac{Tc^2}{8 c_1^2 \| B \|_\infty} \right) - (v^\pi)^2
\end{align}
If the one step likelihood ratio is upper bounded by $\rhomax$, then the variance of importance sampling estimator can be upper bounded by:
\begin{align}
    \var(\is) = \mathbb{E} [Y^2] - (v^\pi)^2 &= \mathbb{E}\left[ \rho_{0:T}^2 \left( \sum_{t=1}^T \gamma^{t-1}r_t \right)^2 \right] - (v^\pi)^2 \\ 
    &\le T^2 \mathbb{E}\left[ \rho_{0:T}^2 \right] - (v^\pi)^2 \\
    &\le T^2\rhomax^{2T} - (v^\pi)^2
\end{align}
Following from lemma \ref{lem:prod_of_ratio}, the variance term can also be upper bounded by:
\begin{align}
    \var(\is) = \mathbb{E} [Y^2] - (v^\pi)^2 &= \mathbb{E}\left[ \rho_{0:T}^2 \left( \sum_{t=1}^T \gamma^{t-1}r_t \right)^2 \right] - (v^\pi)^2 \\ 
    &\le T^2 \mathbb{E}\left[ \rho_{0:T}^2 \right] - (v^\pi)^2 \\
    &\le T^2 M_{\rho}^{2T} - (v^\pi)^2
\end{align}
\end{proof}

\subsection{Proof of Theorem \ref{thm:pdis_variance}}
\begin{proof}
Let $Y_t =  \rho_{1:t} \gamma^{t-1} r_t$. For the upper bound:
\begin{align}
    \var(\pdis) =& \mathbb{E} \left( \left(\sum_{t=1}^T Y_t\right)^2 \right) - (v^{\pi})^2 \\
    \le&  \mathbb{E} \left( T\sum_{t=1}^T Y_t^2 \right) - (v^{\pi})^2 \\
    =& T\sum_{t=1}^T \mathbb{E}(Y_t^2) - (v^{\pi})^2 \\ 
    =& T\sum_{t=1}^T \mathbb{E}(\rho_{0:t}^2 \gamma^{2t-2} r_t^2) - (v^{\pi})^2 \\
    \le& T\sum_{t=1}^T \rhomax^{2t} \gamma^{2t-2}\mathbb{E}_{\mu}[(r_t)^2]  - (v^\pi)^2
\end{align}
Or it can also be bounded as:
\begin{align}
    \var(\pdis) \le& T\sum_{t=1}^T \mathbb{E}(\rho_{0:t}^2 \gamma^{2t-2} r_t^2) - (v^{\pi})^2 \\
    =& T\sum_{t=1}^T \gamma^{2t-2} \mathbb{E}(\rho_{0:t}^2 ) - (v^{\pi})^2 \\
    \le& T\sum_{t=1}^T \gamma^{2t-2} 
    M_\rho^{2t} - (v^{\pi})^2 
\end{align}
The last step follows from lemma \ref{lem:prod_of_ratio}. For the lower bound, we notice that $Y_t \ge 0$ for any $t$, then:
\begin{align}
    \mathbb{E} \left( \left(\sum_{t=1}^T Y_t\right)^2 \right) \ge \mathbb{E} \left( \sum_{t=0}^T Y_t^2 \right) = \sum_{t=1}^T \mathbb{E} (Y_t^2)
\end{align}
For each $t$, we will follow a similar proof as how to lower bound part in Theorem \ref{thm:is_variance}:
\begin{align}
    \mathbb{E} (Y_t^2) =& \mathbb{E} \left( \mathbb{E} (Y_t^2|\mathds{1}(Y_t > \gamma^{t-1}\mathbb{E}_{\pi}(r_t)/2)) \right) \\
    \ge& \mathbb{E} \left( \mathbb{E} (Y_t|\mathds{1}(Y_t > \gamma^{t-1}\mathbb{E}_{\pi}(r_t)/2)) \right)^2 \\
    \ge& \Pr(Y_t > \gamma^{t-1}\mathbb{E}_{\pi}(r_t)/2) \left( \mathbb{E} (Y_t|Y_t > \gamma^{t-1}\mathbb{E}_{\pi}(r_t)/2)\right)^2
\end{align}
Notice that $\mathbb{E} (Y_t) = \gamma^{t-1}\mathbb{E}_{\pi}(r_t)$, 
\begin{align}
    & \gamma^{t-1}\mathbb{E}_{\pi}(r_t) = \mathbb{E}(Y_t) \\
    =&\Pr(Y_t > \gamma^{t-1}\mathbb{E}_{\pi}(r_t)/2) \mathbb{E} (Y_t|Y_t > \gamma^{t-1}\mathbb{E}_{\pi}(r_t)/2)  + \Pr(Y_t \le \gamma^{t-1}\mathbb{E}_{\pi}(r_t)/2)\mathbb{E} (Y_t|Y_t \le \gamma^{t-1}\mathbb{E}_{\pi}(r_t)/2) \\
    \le& \Pr(Y_t > \gamma^{t-1}\mathbb{E}_{\pi}(r_t)/2)\mathbb{E} (Y_t|Y_t > \gamma^{t-1}\mathbb{E}_{\pi}(r_t)/2) + \gamma^{t-1}\mathbb{E}_{\pi}(r_t)/2
\end{align}
So we can lower bound the $\mathbb{E} (Y_t^2)$:
\begin{align}
     \mathbb{E} (Y_t|Y_t > \gamma^{t-1}\mathbb{E}_{\pi}(r_t)/2) \ge \frac{\gamma^{t-1}\mathbb{E}_{\pi}(r_t)}{2\Pr(Y_t > \gamma^{t-1}\mathbb{E}_{\pi}(r_t)/2)}\\
     \mathbb{E} (Y_t^2) \ge \frac{\gamma^{2t-2}\left(\mathbb{E}_{\pi}(r_t)\right)^2}{4\Pr(Y_t > \gamma^{t-1}\mathbb{E}_{\pi}(r_t)/2)}
\end{align}
Now we are going to upper bound the tail probability $\Pr(Y_t > \gamma^{t-1}\mathbb{E}_{\pi}(r_t)/2)$:
\begin{align}
    &\Pr\left(Y_t|Y_t > \frac{\gamma^{t-1}\mathbb{E}_{\pi}(r_t)}{2}\right) \\
    =& \Pr\left( \rho_{1:t} \gamma^{t-1} r_t  > \frac{\gamma^{t-1}\mathbb{E}_{\pi}(r_t)}{2} \right) \\
    \le& \Pr\left( \rho_{1:t}  > \frac{\mathbb{E}_{\pi}(r_t)}{2} \right) \\
    =& \Pr\left( \log \rho_{1:t} > \log \mathbb{E}_{\pi}(r_t) -\log2\right) \\
    =& \Pr\left( \frac{1}{t} \log \rho_{1:t} > \frac{\mathbb{E}_{\pi}(r_t) -\log2}{t} \right) \\
    =& \Pr\left( \frac{1}{t} \log \rho_{1:t} + c + \frac{\hat{f}(s_{t+1},a_{t+1}) - \hat{f}(s_1,a_1)}{T} > c + \frac{\mathbb{E}_{\pi}(r_t) -\log2 + \hat{f}(s_{t+1},a_{t+1}) - \hat{f}(s_1,a_1)}{t}\right)
\end{align}
Since $|\mathbb{E}_{\pi}(r_t) -\log2 + \hat{f}(s_{t+1},a_{t+1}) - \hat{f}(s_1,a_1)|$ is bounded, there exist some $T_0 > 0$ such that if $t>T_0$, we can lower bound the right hand side in the probability by $c/2$. Then for $t>T_0$, by Azuma's inequality \citep{azuma1967weighted},
\begin{align}
    \Pr\left(Y_t|Y_t > \frac{\gamma^{t-1}\mathbb{E}_{\pi}(r_t)}{2}\right)  \le& \Pr\left(\frac{\log \rho_{1:t}}{t} + c + \frac{\hat{f}(s_{t+1},a_{t+1}) - \hat{f}(s_1,a_1)}{t} > \frac{c}{2}\right) \\
    \le& \exp \left( \frac{-tc^2}{8c_1^2\|B\|_\infty}\right)
\end{align}
So we have that for $t>T_0$:
$$ \mathbb{E}(Y_t^2) \ge \frac{\gamma^{2t-2}\mathbb{E}_{\pi}(r_t)}{4} \exp \left( \frac{tc^2}{8c_1^2\|B\|_\infty}\right)$$ For $0<t \le T_0$, $ \mathbb{E}(Y_t^2)  \ge 0$ completes the proof.
\end{proof}

\subsection{Proof of Corollary \ref{cor:pdis_variance_lowerbound}}
\begin{proof}
First, $\gamma \ge \exp \left( \frac{-c^2}{16c_1^2\|B\|_\infty}\right)$ indicate $\left(\frac{c^2}{8c_1^2 \| B \|_\infty}  + 2\log\gamma \right) > 0$. This is necessary for the second condition to hold since $r_t < 1$. The second condition $\mathbb{E}_\pi(r_t) = \Omega\left(\exp \left( \frac{-tc^2}{8c_1^2 \| B \|_\infty} - 2t\log\gamma + \epsilon t/2 \right)\right)$ implies that there exist a $T_1 > 0$ and a constant $C>0$ such that $(\mathbb{E}_\pi(r_t))^2 \ge C\left(\exp \left( \frac{-tc^2}{8c_1^2 \| B \|_\infty} - 2t\log\gamma + \epsilon t \right)\right)$, for any $t>T_1$. Then let $T>\max\{T_1,T_0\}$, where $T_0$ is the constant in Theorem \ref{thm:pdis_variance}:
\begin{align}
    \var(\sum_{t=T_0}^T \rho_{1_t} \gamma^{t-1}r_t) \ge& \sum_{t=1}^T  \frac{\gamma^{2t-2}(\mathbb{E}_{\pi}(r_t))^2}{4} \exp \left( \frac{tc^2}{8c_1^2 \| B \|_\infty} \right) - (v^\pi)^2 \\
    \ge& \frac{\gamma^{2T-2}(\mathbb{E}_{\pi}(r_T))^2}{4} \exp \left( \frac{Tc^2}{8c_1^2 \| B \|_\infty} \right) - (v^\pi)^2 \\
    \ge& \frac{\gamma^{-2}C}{4} \exp(\epsilon T) - (v^\pi)^2 = \Omega(\exp{\epsilon T})
\end{align}
\end{proof}

\subsection{Proof of Corollary \ref{cor:pdis_variance_upperbound}}
\begin{proof}
If $\rhomax \gamma \le 1$, $\rhomax^t \gamma^{t-1} \mathbb{E}_{\pi}(r_t) \le 1/\gamma$ for any $t$ since $r_t \in [0,1]$. If $\rhomax \gamma \lim \left(\mathbb{E}_{\mu}(r_T)\right)^{1/T} < 1$, let $\delta = 1 - \rhomax \gamma \lim \left(\mathbb{E}_{\mu}(r_T)\right)^{1/T} > 0$. There exist a $T_0>0$ such that for all $t>T_0$, $\rhomax \gamma (\mathbb{E}_{\pi}(r_t))^{1/t} \le \rhomax \gamma ( \lim \left(\mathbb{E}_{\mu}(r_T)\right)^{1/T} + \delta/2(\rhomax \gamma)) = 1-\delta/2 < 1$. Therefore in both case, for all $T > T_0$, $\rhomax^t \gamma^{t-1} \mathbb{E}_{\mu}(r_T)\le 1/\gamma$:

\begin{align}
    \var(\sum_{t=1}^T \rho_{1:t} \gamma^{t-1} r_t) \le& T\sum_{t=1}^T \rhomax^t \gamma^{t-1}\mathbb{E}_{\mu}(r_T) \le T\sum_{t=1}^{T_0}  \rhomax^t \gamma^{t-1}\mathbb{E}_{\mu}(r_T)  + T\sum_{t=T_0+1}^T \rhomax^t \gamma^{t-1}\mathbb{E}_{\mu}(r_T) \\
    \le& T T_0 \frac{\rhomax^{T_0}-1}{\rhomax-1} + 2T^2\frac{1}{\gamma}
\end{align}
Since $T_0$ is a constant, the variance is $O(T^2)$.
\end{proof}

\subsection{Proof of Theorem \ref{thm:sis_variance}}
\begin{proof}
\begin{align}
    &\var \left(\sum_{t=1}^T \frac{d^{\pi}_t(s_t,a_t)}{d^{\mu}_t(s_t,a_t)} \gamma^{t-1} r_t \right) \\ 
    =& \sum_{t=1}^T \var \left(\frac{d^{\pi}_t(s_t,a_t)}{d^{\mu}_t(s_t,a_t)} \gamma^{t-1} r_t \right) + 2 \sum_{t<k} \text{Cov} \left( \frac{d^{\pi}_t(s_t,a_t)}{d^{\mu}_t(s_t,a_t)} \gamma^{t-1} r_t, \frac{d^{\pi}_k(s_k,a_k)}{d^{\mu}_k(s_k,a_k)} \gamma^{k-1} r_k  \right) \\
    \le& \sum_{t=1}^T \var \left(\frac{d^{\pi}_t(s_t,a_t)}{d^{\mu}_t(s_t,a_t)} \gamma^{t-1} r_t \right) + \sum_{t<k} 2\sqrt{ \var \left( \frac{d^{\pi}_t(s_t,a_t)}{d^{\mu}_t(s_t,a_t)} \gamma^{t-1} r_t\right) \var\left( \frac{d^{\pi}_k(s_k,a_k)}{d^{\mu}_k(s_k,a_k)} \gamma^{k-1} r_k  \right)}\\
    \le& \sum_{t=1}^T \var \left(\frac{d^{\pi}_t(s_t,a_t)}{d^{\mu}_t(s_t,a_t)} \gamma^{t-1} r_t \right) + \sum_{t<k} \left( \var \left( \frac{d^{\pi}_t(s_t,a_t)}{d^{\mu}_t(s_t,a_t)} \gamma^{t-1}r_t\right) + \var\left( \frac{d^{\pi}_k(s_k,a_k)}{d^{\mu}_k(s_k,a_k)} \gamma^{k-1}r_k  \right) \right)\\
    =& T\sum_{t=1}^T \gamma^{2t-2} \var \left(\frac{d^{\pi}_t(s_t,a_t)}{d^{\mu}_t(s_t,a_t)} r_t \right)\\
    \le& T\sum_{t=1}^T \gamma^{2t-2} \var \left(\frac{d^{\pi}_t(s_t,a_t)}{d^{\mu}_t(s_t,a_t)} \right)\\
    =& T \sum_{t=1}^T \gamma^{2t-2} \left( \mathbb{E}\left( \frac{d^{\pi}_t(s_t,a_t)}{d^{\mu}_t(s_t,a_t)} \right)^2 - 1 \right)
\end{align}
\end{proof}

\subsection{Proof of Corollary \ref{cor:sis}}
\setcounter{lemma}{3}
\begin{lemma}
\label{lem:converge_stationary_ratio}
If $d^\mu_t(s_t)$ and $d^\pi_t(s_t)$ are asymptotically equi-continuous, $\frac{d^\pi(s)}{d^\mu(s)} \le \wmax$, and $\frac{\pi(a|s)}{\mu(a|s)} \le \rhomax$, then,
$$\lim_t \mathbb{E}_{s_t,a_t \sim d^\mu_t}\left( \frac{d^{\pi}_t(s_t,a_t)}{d^{\mu}_t(s_t,a_t)} \right)^2 = \mathbb{E}_{s,a \sim d^\mu}\left( \frac{d^{\pi}(s,a)}{d^{\mu}(s,a)} \right)^2$$ 
\end{lemma}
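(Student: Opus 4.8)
The plan is to reduce the claim to the convergence of a single integral over the compact state space, then pass to the limit using a uniform form of convergence of the time-dependent marginals to the stationary ones. Writing $d^\mu_t(s,a)=d^\mu_t(s)\mu(a|s)$ and $d^\pi_t(s,a)=d^\pi_t(s)\pi(a|s)$, one has $\mathbb{E}_{s_t,a_t\sim d^\mu_t}\big(d^\pi_t(s_t,a_t)/d^\mu_t(s_t,a_t)\big)^2=\int_{\mathcal S}\big(d^\pi_t(s)^2/d^\mu_t(s)\big)\,h(s)\,\mathrm{d}s$, where $h(s)\eqdef\mathbb{E}_{a\sim\mu(\cdot|s)}\big[(\pi(a|s)/\mu(a|s))^2\big]\in[1,\rhomax^2]$ by the hypothesis $\pi/\mu\le\rhomax$ (and Jensen); the same identity holds in the limit with $d^\mu,d^\pi$. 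So it suffices to show $\int_{\mathcal S}\big(d^\pi_t{}^2/d^\mu_t\big)h\to\int_{\mathcal S}\big({d^\pi}^2/d^\mu\big)h$.

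\textbf{Step 1: uniform convergence of the marginals.} By Assumption \ref{assum:harris} the behavior chain is Harris ergodic, so its marginals converge to the stationary distribution, $d^\mu_t(s)\to d^\mu(s)$ pointwise; the analogous ergodic argument for the target chain (well-defined since $\pi/\mu<\infty$) gives $d^\pi_t(s)\to d^\pi(s)$. Combining pointwise convergence with the asymptotic equi-continuity of $\{d^\mu_t\}$ and $\{d^\pi_t\}$ (Definition \ref{def:aec}) and the compactness of $\mathcal S$, a standard Arzel\`a--Ascoli argument (every subsequence admits a uniformly convergent further subsequence whose limit is forced by pointwise convergence) upgrades this to uniform convergence: $\sup_s|d^\mu_t(s)-d^\mu(s)|\to0$ and $\sup_s|d^\pi_t(s)-d^\pi(s)|\to0$. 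In particular $d^\mu$ is continuous, and being the stationary density of a positive Harris recurrent, $\psi$-irreducible chain on a compact space it is bounded below, $d^\mu(s)\ge\delta>0$; hence there is $t_0$ with $d^\mu_t(s)\ge\delta/2$ for all $t\ge t_0$.

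\textbf{Step 2: passing to the limit.} For $t\ge t_0$, the ratio $w_t\eqdef d^\pi_t/d^\mu_t$ converges uniformly to $w\eqdef d^\pi/d^\mu\le\wmax$ (quotient of uniformly convergent sequences with denominator bounded below by $\delta/2$), so the integrand $d^\pi_t{}^2/d^\mu_t=w_t^2 d^\mu_t$ is, for large $t$, uniformly bounded by a constant depending on $\wmax,\|d^\mu\|_\infty$. Writing $d^\pi_t{}^2/d^\mu_t-{d^\pi}^2/d^\mu=(d^\pi_t-d^\pi)(d^\pi_t+d^\pi)/d^\mu_t+{d^\pi}^2(d^\mu-d^\mu_t)/(d^\mu_t d^\mu)$ and bounding each factor with Step 1 and $d^\mu_t,d^\mu\ge\delta/2$ yields $\sup_s\big|d^\pi_t(s)^2/d^\mu_t(s)-d^\pi(s)^2/d^\mu(s)\big|\to0$. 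Multiplying by the bounded weight $h\le\rhomax^2$ and integrating over $\mathcal S$, which has finite Lebesgue measure, uniform convergence of the integrands gives $\int_{\mathcal S}(d^\pi_t{}^2/d^\mu_t)h\to\int_{\mathcal S}({d^\pi}^2/d^\mu)h$, which is the claim.

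\textbf{Main obstacle.} The delicate part is not the limit itself but obtaining convergence of $d^\mu_t$ and $d^\pi_t$ strong enough to survive the nonlinear map $x\mapsto x^2/y$, which blows up wherever $d^\mu_t$ is small; this is precisely what the asymptotic equi-continuity hypothesis buys, since together with compactness it yields uniform convergence and, the limiting stationary density then being continuous and (by $\psi$-irreducibility) strictly positive on a compact set, a uniform-in-$t$ lower bound on $d^\mu_t$ — after which Step 2 is routine bookkeeping. A secondary point needing a line of justification is the convergence $d^\pi_t\to d^\pi$, as Assumption \ref{assum:harris} is stated only for the behavior chain; one appeals to the analogous ergodicity of the target chain.
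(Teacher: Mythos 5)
Your overall route is the same as the paper's: reduce to an integral over the state marginal with the action part bounded via $\frac{\pi(a|s)}{\mu(a|s)}\le\rhomax$, use the ergodic convergence of the marginals together with asymptotic equi-continuity to upgrade convergence in distribution to (uniform) convergence of the densities, then pass to the limit through an algebraic decomposition of $\frac{(d^\pi_t)^2}{d^\mu_t}-\frac{(d^\pi)^2}{d^\mu}$. The paper packages your Arzel\`a--Ascoli step as a citation to Boos's converse to Scheff\'e's theorem, and your secondary worry about $d^\pi_t\to d^\pi$ (Assumption \ref{assum:harris} is stated only for the behavior chain) is shared by the paper, which silently applies the same argument to the target chain.

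There is, however, one step in your argument that does not hold: the claim that $\psi$-irreducibility, positive Harris recurrence, continuity of $d^\mu$, and compactness of $\mathcal S$ force $d^\mu(s)\ge\delta>0$. They do not: a positive Harris recurrent, $\psi$-irreducible chain on $[0,1]$ can perfectly well have stationary density $2s$, which is continuous and vanishes at $s=0$. Since your Step 2 leans entirely on this $\delta$ — to get uniform convergence of the quotient $w_t=d^\pi_t/d^\mu_t$ and the uniform bound on the integrand $w_t^2 d^\mu_t$ — the proof has a hole at exactly the place you flag as the main obstacle. The paper avoids needing a lower bound on $d^\mu$ for one of its two terms by writing the difference as $\frac{d^\pi}{d^\mu}(d^\pi_t-d^\pi)+d^\pi_t\bigl(\frac{d^\pi_t}{d^\mu_t}-\frac{d^\pi}{d^\mu}\bigr)$, so that the first term is controlled by the hypothesis $\frac{d^\pi}{d^\mu}\le\wmax$ together with $\TV(d^\pi_t,d^\pi)\to 0$, with no division by $d^\mu_t$ or by a small $d^\mu$; only the second term involves the ratio at time $t$, and the paper handles it (admittedly tersely) via pointwise convergence of $\frac{d^\pi_t}{d^\mu_t}$ rather than via a uniform lower bound on the denominator. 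To repair your version you would need either to add strict positivity of $d^\mu$ as an assumption or to switch to the paper's decomposition for the term where your $\delta$ is doing the work.
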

\begin{proof}
According to the law of large number on Markov chain \cite{breiman1960strong}, the distribution of $d^\mu_t$ converge to the stationary distribution $d^\mu$ in distribution. By the Lemma 1 in \cite{boos1985converse}, $d^\mu_t(s,a)$ converge to $d^\mu(s,a)$ pointwisely, $d^\pi_t(s,a)$ converge to $d^\pi(s,a)$ pointwisely. So $\frac{d^\pi_t(s)}{d^\mu_t(s)}$ converge to $\frac{d^\pi(s)}{d^\mu(s)}$ pointwisely.

\begin{align}
    &\mathbb{E}_{s_t,a_t \sim d^\mu_t}\left( \frac{d^{\pi}_t(s_t,a_t)}{d^{\mu}_t(s_t,a_t)} \right)^2 - \mathbb{E}_{s,a \sim d^\mu}\left( \frac{d^{\pi}(s,a)}{d^{\mu}(s,a)} \right)^2 \\
    =& \int_{s,a}  \frac{\left( d^{\pi}_t(s,a) \right)^2}{d^{\mu}_t(s,a)} \dsda - \int_{s,a}  \frac{\left( d^{\pi}(s,a) \right)^2}{d^{\mu}(s,a)} \dsda \\
    =& \int_{s,a}  \frac{\left( d^{\pi}_t(s) \right)^2}{d^{\mu}_t(s)} \frac{(\pi(a|s))^2}{\mu(a|s)}  - \frac{\left( d^{\pi}(s) \right)^2}{d^{\mu}(s)}  \frac{(\pi(a|s))^2}{\mu(a|s)}  \dsda \\
    \le& \rhomax \int_{s,a}  \left| \frac{\left( d^{\pi}_t(s) \right)^2}{d^{\mu}_t(s)}  - \frac{\left( d^{\pi}(s) \right)^2}{d^{\mu}(s)}  \right| \dsda \\
    \le& \rhomax \int_{s,a} \left| \frac{d^{\pi}(s)\left( d^{\pi}_t(s) - d^{\pi}(s) \right) }{d^{\mu}(s)}  + d^{\pi}_t(s) \frac{ d^{\pi}_t(s) }{d^{\mu}_t(s)} - d^{\pi}_t(s) \frac{ d^{\pi}(s) }{d^{\mu}(s)} \right| \dsda \\
    \le& \rhomax \int_{s,a} \left| \frac{d^{\pi}(s)\left( d^{\pi}_t(s) - d^{\pi}(s) \right) }{d^{\mu}(s)} \right|  + d^{\pi}_t(s) \left|  \frac{ d^{\pi}_t(s) }{d^{\mu}_t(s)} - \frac{ d^{\pi}(s) }{d^{\mu}(s)} \right| \dsda \\
    \le& \rhomax\wmax \TV (d^{\pi}_t, d^{\pi}) + \rhomax \int_{s,a} \left| \frac{ d^{\pi}_t(s) }{d^{\mu}_t(s)} - \frac{ d^{\pi}(s) }{d^{\mu}(s)} \right|\dsda 
\end{align}
By the law of large number on Markov chain \citep{breiman1960strong}, $\TV (d^{\pi}_t, d^{\pi}) \to 0$. Since $\rhomax$ and $\wmax$ are constant, and $\frac{ d^{\pi}_t(s) }{d^{\mu}_t(s)} \to \frac{ d^{\pi}(s) }{d^{\mu}(s)}$, the right hand side of equation above converge to zero, which completes the proof.
\end{proof}

Proof of Corollary \ref{cor:sis}:
\begin{proof}
Since $\frac{d^{\pi}(s,a)}{d^{\mu}(s,a)}$ is bounded by $\rhomax\wmax$ and then $\mathbb{E}_{s,a \sim d^\mu}\left( \frac{d^{\pi}(s,a)}{d^{\mu}(s,a)} \right)^2$ is bounded by $\rhomax^2\wmax^2$. Following from Lemma \ref{lem:converge_stationary_ratio}, there exist $T_0 > 0$ such that for all $t>T_0$, $\mathbb{E}_{s_t,a_t \sim d^\mu_t}\left( \frac{d^{\pi}_t(s_t,a_t)}{d^{\mu}_t(s_t,a_t)} \right)^2 \le 2\mathbb{E}_{s,a \sim d^\mu}\left( \frac{d^{\pi}(s,a)}{d^{\mu}(s,a)} \right)^2 \le 2\rhomax^2\wmax^2$. Then by Theorem \ref{thm:sis_variance}, for $T>T_0$
\begin{align*}
    \var \left(\sum_{t=1}^T \frac{d^{\pi}_t(s_t,a_t)}{d^{\mu}_t(s_t,a_t)} \gamma^{t-1} r_t \right) \le& T \sum_{t=1}^T \gamma^{t-1} \mathbb{E}\left( \frac{d^{\pi}_t(s_t,a_t)}{d^{\mu}_t(s_t,a_t)} \right)^2 \\
    % \le& T \sum_{t=1}^{T_0} \gamma^{t-1} \mathbb{E}\left( \frac{d^{\pi}_t(s_t,a_t)}{d^{\mu}_t(s_t,a_t)} \right)^2 + T \sum_{t=T_0}^{T} \gamma^{t-1} \mathbb{E}\left( \frac{d^{\pi}_t(s,a)}{d^{\mu}_t(s,a)} \right)^2 \\
    \le& T \sum_{t=1}^{T_0} \gamma^{t-1} \mathbb{E}\left( \frac{d^{\pi}_t(s_t,a_t)}{d^{\mu}_t(s_t,a_t)} \right)^2 + 2T(T-T_0) \rhomax^2\wmax^2 \\
    =&O(T^2)
\end{align*}
\end{proof}

\subsection{Proof of Corollary \ref{cor:asis}}
Now we consider an type of approximate SIS estimators, which plug an approximate density ratio into the SIS estimator. More specifially, we consider it use a function $\wt (s_t,a_t)$ to approximate density ratio  $\frac{d^{\pi}_t(s_t,a_t)}{d^{\mu}_t(s_t,a_t)}$, and construct the estimator as:
\begin{align}
    \siswithwt = \sum_{t=1}^T \wt(s,a) \gamma^{t-1}r_t
\end{align}
This approximate SIS estimator is often biased based on the choice of $\wt(s,a)$, so we consider the upper bound of their mean square error with respect to $T$ and the error of the ratio estimator.

\setcounter{theorem}{6}
\begin{theorem}
\label{thm:appr_sis1}
$\siswithwt$ with $\wt$ such that where $ \mathbb{E}_{\mu} \left(\wt(s_t,a_t) - \frac{d^{\pi}_t(s_t,a_t)}{d^{\mu}_t(s_t,a_t)} \right)^2 \le \mseofwt $
\begin{align}
    \text{MSE} \left( \siswithwt \right) \le 2\var(\sis)+ 2T^2 \mseofwt
\end{align}

\end{theorem}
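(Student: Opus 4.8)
The plan is to bound the mean square error of $\siswithwt$ by decomposing it against the oracle SIS estimator $\sis$ and controlling the discrepancy through the $L^2$ error $\mseofwt$ of the ratio approximation. Concretely, write $\siswithwt = \sis + D$, where $D = \sum_{t=1}^T \gamma^{t-1} r_t \left(\wt(s_t,a_t) - \frac{d^\pi_t(s_t,a_t)}{d^\mu_t(s_t,a_t)}\right)$. Since $\sis$ is unbiased for $v^\pi$, we have $\text{MSE}(\siswithwt) = \mathbb{E}[(\siswithwt - v^\pi)^2] = \mathbb{E}[(\sis - v^\pi + D)^2]$, and by the elementary inequality $(x+y)^2 \le 2x^2 + 2y^2$ this is at most $2\,\mathbb{E}[(\sis - v^\pi)^2] + 2\,\mathbb{E}[D^2] = 2\var(\sis) + 2\,\mathbb{E}[D^2]$.

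The remaining work is to show $\mathbb{E}[D^2] \le T^2 \mseofwt$. First I would apply Cauchy–Schwarz (or power-mean) to the sum of $T$ terms: $D^2 = \left(\sum_{t=1}^T \gamma^{t-1} r_t \Delta_t\right)^2 \le T \sum_{t=1}^T \gamma^{2t-2} r_t^2 \Delta_t^2$, writing $\Delta_t = \wt(s_t,a_t) - \frac{d^\pi_t(s_t,a_t)}{d^\mu_t(s_t,a_t)}$ for brevity. Taking expectations and using $\gamma \le 1$ and $r_t \in [0,1]$ gives $\mathbb{E}[D^2] \le T \sum_{t=1}^T \mathbb{E}_\mu[\Delta_t^2] \le T \sum_{t=1}^T \mseofwt = T^2 \mseofwt$, where the middle step uses the hypothesis $\mathbb{E}_\mu[\Delta_t^2] \le \mseofwt$ for each $t$. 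Combining this with the decomposition above yields $\text{MSE}(\siswithwt) \le 2\var(\sis) + 2T^2\mseofwt$, which is exactly Theorem \ref{thm:appr_sis1}. The corollary \ref{cor:asis} then follows immediately by substituting the $O(T^2)$ bound on $\var(\sis)$ from Corollary \ref{cor:sis}.

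I do not anticipate a genuine obstacle here — the argument is a routine bias-variance-style split followed by Cauchy–Schwarz. The only points requiring a little care are: (i) making sure the expectations are taken under the behavior distribution $\mu$ so that the per-step hypothesis $\mathbb{E}_\mu[\Delta_t^2] \le \mseofwt$ applies verbatim at each $t$ (the trajectory is drawn from $\mu$, so the marginal at step $t$ is $d^\mu_t$, consistent with the statement); and (ii) confirming that $\sis$ with the oracle ratio is genuinely unbiased so that $\mathbb{E}[(\sis - v^\pi)^2] = \var(\sis)$ with no cross term — this is stated earlier in the paper. One could optionally sharpen the constant by noting $r_t^2 \le r_t \le 1$, but the stated bound does not require it, so I would not pursue that refinement.
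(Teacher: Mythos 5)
Your proposal is correct and follows essentially the same route as the paper's proof: the same add-and-subtract of $\sis$ with the inequality $(x+y)^2 \le 2x^2 + 2y^2$, the same Cauchy--Schwarz bound on the discrepancy term, and the same use of $r_t \le 1$, $\gamma \le 1$, and the per-step hypothesis $\mathbb{E}_\mu[\Delta_t^2] \le \mseofwt$ to reach $2\var(\sis) + 2T^2\mseofwt$. No gaps; nothing further to add.
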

\begin{proof}
\begin{align}
    \text{MSE} \left( \sum_{t=1}^T \wt(s_t,a_t) \gamma^{t-1}r_t \right) 
    &= \mathbb{E} \left( \sum_{t=1}^T \wt(s_t,a_t) \gamma^{t-1}r_t - v^{\pi}\right)^2  \\
     &= \mathbb{E} \left( \sum_{t=1}^T \wt(s_t,a_t) \gamma^{t-1}r_t - \sis + \sis - v^{\pi}\right)^2  \\
     &\le 2\mathbb{E} \left( \sum_{t=1}^T \wt(s_t,a_t) \gamma^{t-1}r_t - \sis\right)^2 + 2\mathbb{E} \left(\sis - v^{\pi}\right)^2  \\
     &\le 2\mathbb{E} \left( \sum_{t=1}^T \gamma^{t-1}r_t \left(\wt(s_t,a_t) - \frac{d^{\pi}_t(s_t,a_t)}{d^{\mu}_t(s_t,a_t)}\right) \right)^2 +  2\var(\sis) \\
     &\le 2T\sum_{t=1}^T \gamma^{2t-2} \mathbb{E} \left( \wt(s_t,a_t) - \frac{d^{\pi}_t(s_t,a_t)}{d^{\mu}_t(s_t,a_t)}\right)^2 + 2\var(\sis) \\
     &\le 2\var(\sis)  + 2T \sum_{t=1}^T \gamma^{2t-2} \mseofwt \\
     &\le 2\var(\sis)+ 2T^2 \mseofwt
\end{align}
\end{proof}

% \setcounter{corollary}{4}
% \begin{corollary}
% Under the same condition of Corollary \ref{cor:sis}, $\siswithwt$ with $\wt$ such that where $ \mathbb{E}_{\mu} \left(\wt(s_t,a_t) - \frac{d^{\pi}_t(s_t,a_t)}{d^{\mu}_t(s_t,a_t)} \right)^2 \le \mseofwt $ has a MSE of $O(T^2(1+\mseofwt))$
% \end{corollary}
Proof of Corollary \ref{cor:asis}:
\begin{proof}
By Theorem \ref{thm:appr_sis1} we have that the MSE is bounded by
\begin{align}
    2\var(\sis)+ 2T^2 \mseofwt
\end{align}
According to Corollary \ref{cor:sis}:
\begin{align}
    2\var(\sis)+ 2T^2 \mseofwt = O(T^2) + 2T^2 \mseofwt = O(T^2(1+\mseofwt))
\end{align}
\end{proof}

\section{Return-Conditional IS estimators}
\label{appendix:return_condition}
A natural extension of the conditional importance sampling estimators is to condition on the observed returns $G_t$. 
Precisely we examine the general conditional importance sampling estimator:
\begin{align}
    G_t \mathbb{E}\left[ \rho_{1:t} | \phi_t \right]\enspace ,
\end{align} 
and consider when $\phi_t = G_t$. An analytic expression for $ \mathbb{E}\left[ \rho_{1:t} | G_t \right]$  is not available, but we can model this as a regression problem to predict $\mathbb{E}\left[ \rho_{1:t} | G_t \right]$ given an input $G_t$.
%and we need to have recourse to an estimator. Hence, 
A natural approach is to use 
ordinary least squares (OLS) estimator to estimate $\mathbb{E}\left[ \rho_{1:t}|\phi_t\right]$ viewing $\phi_t$ (or any other statistics $G_t$) as an \textit{input} and $\rho_{1:t}$ as an \textit{output}. While tempting at first glance, we show that this approach produces exactly the same estimates of the expected return as that of the crude importance sampling estimator. 

We start by considering the OLS problem associated with the conditional weights in which we want to find a $\widehat{\theta}$ such that $\phi_t^\top \widehat{\theta} \approx \mathbb{E}\left[ \rho_{1:t}|\phi_t\right]$. 
Let $\Phi \in \mathbb{R}^{n\times 2}$ be the \textit{design} matrix containing the observed returns $G^{(i)}_t$ after $t$ steps and $Y \in \mathbb{R}^{n}$ be the vector of importance ratios $\rho^{(i)}_{1:t}$ for each rollout $i$:
\begin{align*}
Y = \begin{bmatrix}\rho_t^{(0)}\\
\vdots\\
\rho_t^{(N)}
\end{bmatrix}, \enspace \Phi = \begin{bmatrix}G_t^{(0)} & 1\\
\vdots & \vdots \\
G_t^{(N)} & 1
\end{bmatrix} \enspace .
\end{align*}
The OLS estimator for the return-conditional weights is then $\widehat{Y} = \Phi \widehat{\theta}$ and where $\widehat{\theta} \in \mathbb{R}^{2}$ is defined as:
\begin{align*}
\widehat{\theta} = \left(\Phi^\top \Phi\right)^{-1} \Phi^\top Y \enspace .
\end{align*}
We can now use the approximate return-conditional weights to form a Monte Carlo estimate of the expected return under the target policy:
\begin{align}
    \hat{v}_{\text{RCIS}} \equiv \frac{1}{N} \sum_{i=0}^N G_t^{(i)} \widehat{Y}^{(i)}  = \frac{1}{N} [1,0] \Phi^\top \widehat{Y}\enspace , \label{eq:rcis}
\end{align}
where $\widehat{Y}^{(i)} = [G_t^{(i)}, 0]^\top \widehat{\theta}$
and the equality follows from the fact that $\Phi^\top Y = [\sum_{i=1}^n \rho_t^{(i)} G_t^{(i)},  \sum_{i=1}^n \rho_t^{(i)} ]^\top$. Using this observation, we can also express the crude importance sampling estimator with the linear combination $\Phi^\top Y$, where $Y$ now consists of the \textit{true} weights:
\begin{align}
\hat{v}_\text{IS} \equiv \frac{1}{N} [1,0] \Phi^\top Y  \label{eq:vis}\enspace.
\end{align}
Note that equation \eqref{eq:rcis} differs from \eqref{eq:vis} only in the term  $\widehat{Y} = \Phi \widehat{\theta} = \Phi \left(\Phi^\top \Phi\right)^{-1} \Phi^\top Y$ and upon closer inspection, we find that:
\begin{align*}
\Phi^\top \hat{e} = \Phi^\top Y - \Phi^\top \widehat{Y} =  
\Phi^\top \left(Y - \Phi\left(\Phi^\top \Phi\right)^{-1}\Phi^\top Y\right)  = \Phi^\top\left(I - H\right)Y = \mathbf{0} \enspace ,
\end{align*}
where $\hat{e}$ is residual vector  $Y - \widehat{Y}$ and $H = \Phi\left(\Phi^\top \Phi\right)^{-1}\Phi^\top$ is the \textit{hat} matrix. Hence, it follows that the estimate of the expected return made under the crude importance sampling estimator must be identical to the extended estimator which uses approximate return-conditional weights:
\begin{align*}
\hat{v}_\text{is} - \hat{v}_\text{RCIS} = \frac{1}{n}  [1,0] \Phi^\top Y - \frac{1}{n}[1,0] \Phi^\top\widehat{Y} = \frac{1}{n}  [1,0] \left( \Phi^\top Y - \Phi^\top\widehat{Y} \right)  = [1,0] \mathbf{0}= 0\enspace .
\end{align*}
This analysis can be generalized to any conditional importance sampling estimator for which $G_t$ can be expressed as a linear combination of $\phi_t$. For example, rather than conditioning on the final return, we could condition on the return so far (the sum of returns to the present) and use $\phi_t = [r_1, r_2, ... r_t]$ with the coefficient vector $[1, 1, ... 1, 0]$. Similarly, this negative result carries to reward-conditional weights if the immediate reward $r_t$ can be expressed as linear combination of $\phi_t$, including if $\phi_t$ is simply the immediate reward.

\end{document}